\documentclass[11pt,a4paper]{article}

\usepackage{iftex}
\ifPDFTeX
  \usepackage[utf8]{inputenc}
  \usepackage[T1]{fontenc}
  \usepackage{lmodern}
\fi

\usepackage[margin=1in]{geometry}

\usepackage{amsmath,amssymb,amsfonts,amsthm,mathtools}

\usepackage{microtype}
\usepackage{setspace}

\usepackage{bm}
\usepackage{bbm}

\usepackage{booktabs}
\usepackage{array}
\usepackage{enumitem}
\usepackage{algorithm}
\usepackage{algorithmic}
\usepackage{float}
\usepackage[authoryear,round]{natbib}
\usepackage{xurl}
\usepackage{graphicx}
\usepackage{xcolor}
\usepackage{tikz}
\usetikzlibrary{arrows.meta}

\usepackage[breaklinks=true,colorlinks=true,linkcolor=blue,citecolor=red,urlcolor=blue]{hyperref}
\usepackage[nameinlink,capitalize,noabbrev]{cleveref}
\usepackage{lastpage}
\newtheorem{definition}{Definition}[section]
\newtheorem{assumption}{Assumption}[section]
\newtheorem{proposition}{Proposition}[section]
\newtheorem{lemma}{Lemma}[section]
\newtheorem{theorem}{Theorem}[section]

\newtheorem{remark}{Remark}[section]

\newcommand{\R}{\mathbb{R}}

\newcommand{\jsr}{\rho}

\newcommand{\set}[1]{\left\{ #1 \right\}}
\newcommand{\co}{\operatorname{co}}
\newcommand{\diag}{\operatorname{diag}}

\title{Geometrically Averaged Hard Target Updates for Linear Q-Learning}

\author{Donghwan Lee\\
School of Electrical Engineering, KAIST\\
Daejeon, Republic of Korea\\
\texttt{donghwan@kaist.ac.kr}}

\date{}

\begin{document}

\maketitle

\begin{abstract}
Periodic hard target updates are among the most common stabilization devices in
modern deep Q-learning. Recent studies suggest that target updates can improve
stability in Q-learning with function approximation, including linear function
approximation. We introduce and analyze the so-called $\lambda$-target
update, obtained by averaging the $m$-periodic target update maps with
$\lambda$-geometric weights $(1-\lambda)\lambda^{m-1}$, $\lambda \in [0,1]$. The
endpoint $\lambda=0$ recovers the one-period target update,
while the continuous endpoint $\lambda\uparrow1$ recovers projected Q-value
iteration. We study this mechanism for Q-learning with linear function
approximation, namely linear Q-learning, using a switching-system model
and related tools. For clarity, the paper treats a deterministic version; the
formulation extends to stochastic reinforcement-learning settings.
\end{abstract}

\noindent\textbf{Keywords:} linear Q-learning, geometric target updates, target networks, linear function approximation, joint spectral radius, projected Bellman equation

\section{Introduction}
\label{sec:introduction}

In reinforcement learning (RL)~\citep{sutton1998reinforcement}, target
updates~\citep{MnihEtAl2015,LillicrapEtAl2015DDPG} are among the most widely
used stabilization mechanisms for bootstrapping. In deep
Q-learning~\citep{MnihEtAl2015}, a target network is copied from the online
network only periodically; in actor-critic methods, a target network often
tracks the online network by averaging. These mechanisms are empirically robust,
but their precise theoretical effect is still under active study when the
Bellman maximum is combined with function approximation.

A natural question is therefore what target updates change in a tractable
function-approximation model. Existing theory gives several complementary
explanations of target networks. Target-based temporal-difference
learning~\citep{leehe2019target} analyzes separate online and target variables
for policy evaluation, periodic Q-learning~\citep{leehe2020periodic} explains
hard target copies in the tabular setting, and other results obtain stability
with target networks under projections~\citep{zhang2021breaking},
truncation~\citep{chen2023target}, regularization~\citep{zhang2021breaking,limlee2024regularized},
nonlinear regularity assumptions~\citep{fellows2023why}, or over-parameterized
structure~\citep{che2024target}. These results clarify several aspects of the
target-network phenomenon, but they do not directly describe the exact boundary
dynamics of periodic hard target updates in linear Q-learning.

Recent work by~\citet{lee2026targetupdates} studies periodic hard target updates
more explicitly for Q-learning with linear function approximation (linear
Q-learning) and a fixed step-size, using a switching-system
model~\citep{leehuhe2023discrete,LeeLim2026} and the joint spectral radius
(JSR)~\citep{rota1960note,jungers2009joint}. We follow
that paper~\citep{lee2026targetupdates} and extend its periodic
hard-target viewpoint. For clarity, we treat a deterministic version; the same
formulation can be extended to stochastic reinforcement-learning versions.

Before introducing the geometric average, we briefly recall the fixed
$m$-period target update. For a fixed hard-target period $m$, one can freeze the
boundary target, run $m$ online updates, and read off the target-boundary map.
We call this method \emph{$m$-period deterministic linear Q-learning}
($m$-DLQL). The case $m=1$ becomes standard deterministic linear Q-learning,
which we call DLQL\@. It was shown in~\citet{lee2026targetupdates} that, under a
step-size condition, $m$-DLQL becomes projected Q-value iteration (PQVI) as
$m\to\infty$.

We replace the choice of a single integer period by a geometric average over
all period maps. This produces a continuous period parameter: $\lambda=0$ puts
all weight on the period-one linear Q-learning map (DLQL), while
$\lambda\uparrow1$ moves the effective period to infinity and recovers projected
Q-value iteration under the stated relaxation step-size condition. We call the proposed method \emph{$\lambda$ deterministic linear
Q-learning} ($\lambda$-DLQL). This use of a $\lambda$ parameter is similar in
spirit to TD($\lambda$) and Q($\lambda$): those methods use eligibility traces
or $\lambda$-weighted multi-step targets to interpolate between one-step and
longer-horizon updates~\citep{sutton1988learning,watkins1989learning,pengwilliams1996incremental,sutton1998reinforcement}.
The similarity is the geometric interpolation between short- and long-horizon
updates, whereas the difference is that TD($\lambda$) and Q($\lambda$) average
returns or eligibility-trace effects within a single online recursion, while
$\lambda$-DLQL averages hard-target boundary maps and is analyzed through the
resulting switched linear family.

We analyze the properties and convergence of this method using the
switching-system model of linear Q-learning developed recently
by~\citet{LeeLim2026} and the target-update analysis
of~\citet{lee2026targetupdates}. The Bellman maximum is represented by
policy-indexed modes, and the homogeneous error dynamics are studied through the
corresponding switching-system models~\citep{liberzon2003switching,lin2009stability,shorten2007stability}
and their JSRs~\citep{rota1960note,blondel2005computational,jungers2009joint}.
The JSR is the worst-case asymptotic exponential growth rate of all matrix
products generated by a switching family; a value below one certifies uniform
exponential decay under arbitrary switching. The two endpoints solve the same
projected Q-Bellman equation, but their switching-system models are different,
so equality of fixed points does not by itself imply equality of stability
certificates.

The main contribution is a geometrically averaged hard-target mechanism that
connects DLQL and PQVI without introducing a different fixed-point equation. We
show that small $\lambda$ inherits the DLQL endpoint, while $\lambda$ close to
one inherits the PQVI endpoint. Whenever the active endpoint or interior
switching set is JSR-stable, the standard JSR-based Lyapunov construction used
in~\citet{lee2026targetupdates} gives uniform exponential bounds for the
boundary errors and hence convergence to the projected Q-Bellman fixed point.
We also give recursive, inverse-free, and sampled-period implementations of the
same boundary update, so the geometric target mechanism can be written without
explicitly selecting a single hard-target period.

\section{Related Works}
Target networks entered modern deep reinforcement-learning practice through
hard-copy target updates in deep Q-learning and soft target averaging in
actor-critic methods~\citep{MnihEtAl2015,LillicrapEtAl2015DDPG}. In deep
Q-networks~\citep{MnihEtAl2015}, the target network is copied periodically from the online network, and
the Bellman target is held fixed across multiple online updates. In actor-critic
methods~\citep{LillicrapEtAl2015DDPG}, target networks often track the online networks through Polyak-type
averaging~\citep{polyakjuditsky1992acceleration}. This paper does not analyze a deep nonlinear model; rather, it
studies the target-update mechanism in Q-learning with linear function
approximation (linear Q-learning), where the exact boundary dynamics can be
written as switched linear systems.

The closest theoretical line studies target networks and related stabilization
mechanisms. Target-based TD learning analyzes online and target variables for
policy evaluation with linear function approximation~\citep{leehe2019target}.
Periodic Q-learning gives finite-sample guarantees for tabular control with hard
target copies~\citep{leehe2020periodic}. Other analyses stabilize bootstrapping
with target networks together with projections~\citep{zhang2021breaking},
truncation~\citep{chen2023target}, regularization~\citep{zhang2021breaking,limlee2024regularized},
nonlinear regularity assumptions~\citep{fellows2023why}, over-parameterized
structure~\citep{che2024target}, or periodic and soft target dynamics for
linear Q-learning~\citep{lee2026targetupdates}. These works explain several
aspects of target-network behavior, while they do not provide the same
geometrically averaged hard-target boundary map or the same exact JSR
certificate for the unregularized deterministic recursion studied here.

Among these works, the closest theoretical predecessor is
\citet{lee2026targetupdates}. Both papers study target updates for linear
Q-learning through the exact switching-system dynamics induced by the Bellman
maximum, fixed-step boundary/error recursions, and JSR-based stability
certificates. The difference is that \citet{lee2026targetupdates} analyzes fixed
periodic hard target updates and soft target dynamics, whereas this paper
constructs a geometrically averaged hard-period boundary map. This produces a
continuous \(\lambda\) parameter connecting DLQL and PQVI, together with the
endpoint, recursive, inverse-free, and sampled-period analyses developed below.

The projected Bellman equation and linear Q-learning background come from the
standard discounted-MDP and reinforcement learning literature~\citep{puterman2014markov,bertsekas1996neuro,sutton1998reinforcement}.
Classical Q-learning and stochastic-approximation analyses provide the baseline
bootstrapping recursion and its tabular convergence theory~\citep{watkins1992q,tsitsiklis1994asynchronous,jaakkola1994convergence}. With linear
function approximation, projected Bellman equations and projected value-iteration
maps introduce stability questions that are distinct from fixed-point existence
and uniqueness~\citep{meyn2024projected,limlee2025understanding}. The present
paper uses these projected equations as the fixed-point benchmark, but its main
object is the target-induced switching system model.

The stability tools are drawn from switching system theory and JSR
methods~\citep{liberzon2003switching,lin2009stability,shorten2007stability,
rota1960note,blondel2005computational,jungers2009joint}. Recent work applies
this viewpoint directly to Q-learning and linear function approximation by
representing the Bellman maximum as a switching signal and certifying convergence
through JSR bounds~\citep{leehuhe2023discrete,
lee2026lyapunovcertified,LeeLim2026}. The contribution here is to use the same
language for the target update construction developed here: a geometric average
over all hard-target periods, together with a recursive solver and an
inverse-free auxiliary recursion for the resulting boundary update.

\section{Preliminaries and Linear-Approximation Setup}
\label{sec:preliminaries}

\subsection{Notation}
\label{sec:notation}

The set of real numbers is denoted by $\R$; $\R^m$ is the $m$-dimensional
Euclidean space; and $\R^{m\times r}$ is the set of all $m\times r$ real
matrices. For a matrix $A$, $A^\top$ denotes its transpose. The identity
matrix is denoted by $I$. For vectors, $e_i$ is the $i$th standard basis vector,
with dimension clear from context, and $\otimes$ denotes the Kronecker product.
For a finite set $\mathcal S$, $|\mathcal S|$ denotes its cardinality. We write
\[
\Delta_m:=\left\{q\in\R^m:q_i\geq0,\ \sum_{i=1}^m q_i=1\right\}
\]
for the probability simplex in $\R^m$. For a finite matrix family
$\mathcal H=\{A_1,\ldots,A_N\}$,
\[
\co(\mathcal H)
:=\left\{\sum_{i=1}^N\lambda_i A_i:
\lambda_i\geq0,\ \sum_{i=1}^N\lambda_i=1\right\}
\]
denotes its convex hull.
We also use standard matrix notation that appears repeatedly below. For a
vector $x$, $\|x\|_2$ is the Euclidean norm. For a square matrix $A$,
$\rho(A)$ denotes its ordinary spectral radius, while $\rho(\mathcal H)$ denotes
the joint spectral radius of a switching family once the JSR is defined below.
For a matrix $B$, $\operatorname{range}(B)$ denotes its column space,
$B\succ0$ means that $B$ is symmetric positive definite, and
$\lambda_{\max}(B)$ denotes the largest eigenvalue when $B$ is symmetric.
Expectations are denoted by $\mathbb E[\cdot]$.

\subsection{Switched Linear Systems}
\label{sec:switching_systems}

The stability certificates used later are stated in the language of switched
systems, so we first recall the basic model before specializing it to the
Bellman-induced switching families. Let us consider the discrete-time switched
affine system~\citep{liberzon2003switching,lin2009stability,shorten2007stability}
\[
  x_{k+1}=A_{\sigma_k}x_k+b_{\sigma_k},
\]
where each index $i\in\{1,2,\ldots,M\}$, equivalently each affine pair
$(A_i,b_i)$, is called a \emph{mode}, and $\sigma_k$ is the switching signal that
selects the active mode at time $k$. The matrix $A_{\sigma_k}$ is selected from
the prescribed family $\mathcal H:=\{A_1,A_2,\ldots,A_M\}$, which is called a
\emph{switching family}; $b_{\sigma_k}$ is a mode-dependent affine term. When
$b_{\sigma_k}=0$, this reduces to a switched linear system,
$x_{k+1}=A_{\sigma_k}x_k$. The worst-case exponential
rate of the switched linear family is characterized by the \emph{joint spectral radius}
(JSR)~\citep{rota1960note,blondel2005computational,jungers2009joint},
defined as follows.
\begin{definition}
\label{def:jsr}
For a bounded set of matrices $\mathcal H\subset\R^{m\times m}$, its joint
spectral radius is
\[
\jsr(\mathcal H)
:=
\lim_{k\to\infty}
\sup_{A_1,\ldots,A_k\in\mathcal H}
\|A_k\cdots A_1\|^{1/k}.
\]
\end{definition}
We note that the JSR is independent of the chosen submultiplicative
norm~\citep{rota1960note,jungers2009joint}. When $\mathcal H$ is finite, the
supremum for each fixed product length is a maximum over products generated by
matrices in $\mathcal H$. For a finite family $\mathcal H$, the notation
$\jsr(\co(\mathcal H))$ means the JSR computed when each factor in a product is
allowed to be any convex combination of matrices in $\mathcal H$.
Throughout the later JSR certificates, $\rho(\mathcal H)$ denotes this same JSR
value when the argument is a switching family.

\subsection{Joint Spectral Radius and Lyapunov Certificates}
\label{sec:joint_spectral_radius}

The JSR in~\Cref{def:jsr} turns arbitrary switched products into a
single worst-case exponential rate. A switched linear system is uniformly
exponentially stable under arbitrary switching if there exist constants $C\geq1$
and $\eta\in(0,1)$ such that
\[
\|A_{\sigma_{k-1}}\cdots A_{\sigma_0}x\|_2\leq C\eta^k\|x\|_2
\]
for every horizon $k\geq0$, every initial state $x\in\R^m$, and every switching
sequence. A \emph{common Lyapunov function} for $\mathcal H$ is a positive definite
function that decreases along every mode. In the analysis below, the Bellman
maximum in linear Q-learning induces stochastic-policy switching, and the
Lyapunov functions are built from products of the corresponding mode matrices.
The following finite-family piecewise-quadratic construction~\citep{hushenzhang2010generating,lee2026lyapunovcertified} is the Lyapunov
certificate used in the deterministic, stochastic, and target-network arguments.
\begin{lemma}
\label{lem:common_lyapunov_construction}
Let
\[
\mathcal H=\{A_1,A_2,\ldots,A_M\}\subset\R^{m\times m}
\]
and suppose that \(\rho(\mathcal H)<1\). Fix \(\epsilon>0\) such that
\(\beta_\epsilon:=\rho(\mathcal H)+\epsilon<1\). Then there exist a norm \(p_\epsilon\)
on \(\R^m\) and a constant \(C_\epsilon\geq1\) such that
\[
\|x\|_2\leq p_\epsilon(x)\leq \sqrt{C_\epsilon}\|x\|_2,
\qquad x\in\R^m,
\]
and
\[
p_\epsilon(Bx)\leq \beta_\epsilon p_\epsilon(x),
\qquad x\in\R^m,
\qquad B\in\co(\mathcal H).
\]
Consequently, any switched recursion \(z_{k+1}=B_kz_k\) with
\(B_k\in\co(\mathcal H)\) satisfies
\[
p_\epsilon(z_k)\leq\beta_\epsilon^k p_\epsilon(z_0),
\qquad
\|z_k\|_2\leq \sqrt{C_\epsilon}\,\beta_\epsilon^k\|z_0\|_2,
\qquad k\geq0.
\]
In particular, \(z_k\to0\) for every initial condition.
\end{lemma}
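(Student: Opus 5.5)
The plan is to build the extremal norm explicitly from scaled products and then check the two required properties.

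\emph{Uniform product bound.} Since \(\rho(\mathcal H)<\beta_\epsilon\), the definition of the JSR (\Cref{def:jsr}) gives an integer \(K\) such that
\[
\max_{A_1,\ldots,A_k\in\mathcal H}\|A_k\cdots A_1\|_2\leq(\rho(\mathcal H)+\epsilon/2)^k
\qquad\text{for all }k\geq K.
\]
Because \(\mathcal H\) is finite, the finitely many products of length less than \(K\) are bounded as well. Together these yield a constant \(c\geq1\) with
\[
\|A_{i_k}\cdots A_{i_1}\|_2\leq c\,\beta_\epsilon^k
\]
for all \(k\geq0\) and all index sequences; the empty product \(k=0\) is \(I\).

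\emph{Extension to the convex hull.} Every product \(B_k\cdots B_1\) with \(B_j\in\co(\mathcal H)\) expands multilinearly into a convex combination of products of length \(k\) from \(\mathcal H\). The triangle inequality then gives the same bound \(\|B_k\cdots B_1\|_2\leq c\,\beta_\epsilon^k\). This multilinear expansion is the one point where a little care is needed. It is also why I work with \(\co(\mathcal H)\) from the start and define the norm over products from \(\co(\mathcal H)\), rather than relying on a quadratic construction tied to the vertices.

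\emph{Definition of the norm.} Set
\[
p_\epsilon(x):=\sup_{k\geq0}\ \sup_{B_1,\ldots,B_k\in\co(\mathcal H)}\beta_\epsilon^{-k}\|B_k\cdots B_1x\|_2 .
\]
Taking \(k=0\) gives \(p_\epsilon(x)\geq\|x\|_2\). The product bound gives \(p_\epsilon(x)\leq c\|x\|_2\), so we may take \(C_\epsilon:=c^2\geq1\). As a supremum of seminorms, \(p_\epsilon\) is positively homogeneous and subadditive, and it is definite by the lower bound, so it is a norm. For the contraction, fix \(B\in\co(\mathcal H)\). Every product appearing in the definition of \(p_\epsilon(Bx)\) is a product of length \(k+1\) applied to \(x\). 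Hence
\[
\beta_\epsilon^{-k}\|B_k\cdots B_1Bx\|_2=\beta_\epsilon\cdot\beta_\epsilon^{-(k+1)}\|B_k\cdots B_1Bx\|_2\leq\beta_\epsilon\,p_\epsilon(x),
\]
and taking the supremum gives \(p_\epsilon(Bx)\leq\beta_\epsilon p_\epsilon(x)\).

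\emph{Consequences.} Iterating the contraction gives \(p_\epsilon(z_k)\leq\beta_\epsilon^kp_\epsilon(z_0)\). Sandwiching with the norm equivalence yields
\[
\|z_k\|_2\leq p_\epsilon(z_k)\leq\beta_\epsilon^k p_\epsilon(z_0)\leq\sqrt{C_\epsilon}\,\beta_\epsilon^k\|z_0\|_2 .
\]
Since \(\beta_\epsilon<1\), it follows that \(z_k\to0\).

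If a piecewise-quadratic form is wanted, as in the cited construction, one could instead define \(p_\epsilon(x)^2:=\sum_{k=0}^{N}\beta_\epsilon^{-2k}\sum_{\text{products}}\|A_{i_k}\cdots A_{i_1}x\|_2^2\) over a suitable finite horizon \(N\). However, the supremum construction above proves the statement as written, and it handles convex combinations more cleanly.
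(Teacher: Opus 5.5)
Your proof is correct and complete. The JSR limit together with finiteness of \(\mathcal H\) gives \(\|A_{i_k}\cdots A_{i_1}\|_2\le c\,\beta_\epsilon^k\). The supremum \(p_\epsilon\) is then a norm with \(\|x\|_2\le p_\epsilon(x)\le c\|x\|_2\), so \(C_\epsilon=c^2\), and the index shift gives \(p_\epsilon(Bx)\le\beta_\epsilon p_\epsilon(x)\).

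The paper does not prove this lemma itself. It points to the finite-family piecewise-quadratic construction of the cited works, in which \(p_\epsilon^2\) is built from squared worst-case product norms. One such form is \(p_\epsilon(x)^2=\max_{0\le k\le N}\max_{i_1,\ldots,i_k}\beta_\epsilon^{-2k}\|A_{i_k}\cdots A_{i_1}x\|_2^2\), with \(N\) large enough that every product of length \(N+1\) has \(2\)-norm at most \(\beta_\epsilon^{N+1}\); a generating-function series of the same terms also works. This fits the \(\sqrt{C_\epsilon}\) normalization in the statement. Your Rota--Strang-type supremum is the infinite-horizon version of the same idea: no horizon has to be chosen, and the contraction is a one-line shift. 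The piecewise-quadratic version instead gives a certificate described by finitely many quadratic forms, which can actually be computed and checked.

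Two corrections to your side remarks, neither of which affects the proof.

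First, working over \(\co(\mathcal H)\)-products is unnecessary, and your stated reason for it is off. Any norm with \(p(A_ix)\le\beta_\epsilon p(x)\) at every vertex satisfies \(p\bigl(\sum_i\lambda_iA_ix\bigr)\le\sum_i\lambda_ip(A_ix)\le\beta_\epsilon p(x)\). So vertex-based constructions, quadratic or not, extend to the convex hull automatically. In fact your multilinear expansion shows that the supremum over \(\co(\mathcal H)\)-products equals the supremum over \(\mathcal H\)-products.

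Second, the closing alternative with an inner sum over products is wrong as written. That sum makes \(p_\epsilon^2\) a single quadratic form \(x^\top Px\). The shift argument then leaves a tail summed over all \(M^{N+1}\) products of length \(N+1\), and this tail need not be below \(\|x\|_2^2\). More fundamentally, some finite families with \(\rho(\mathcal H)<1\) admit no common quadratic Lyapunov function at all. Taking a maximum over products, and over \(k\le N\), instead of a sum is what makes the construction genuinely piecewise quadratic and valid.
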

This is the only convergence lemma used below. Once an algorithm has been
reduced to a switched error recursion whose active matrices lie in the convex
hull of a deterministic family with JSR less than one, the displayed estimates
apply directly.

The next two elementary matrix facts are used when summing the geometrically
weighted hard-period maps and when passing to the endpoint \(\lambda\uparrow1\).
They are stated here in a general form so that the later proofs do not repeat the
same series arguments.
\begin{lemma}
\label{lem:geometric_resolvent_identity}
Let \(A\in\R^{d\times d}\) satisfy \(\rho(A)<1\). Then, for every
\(0\leq\lambda<1\), the matrix \(I-\lambda A\) is invertible and
\[
(1-\lambda)\sum_{m=1}^{\infty}\lambda^{m-1}A^m
=
(1-\lambda)A(I-\lambda A)^{-1}.
\]
\end{lemma}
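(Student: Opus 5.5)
The plan is a direct Neumann-series argument built on the spectral-radius hypothesis. First, for \(0\leq\lambda<1\) we have \(\rho(\lambda A)=\lambda\rho(A)<1\). Hence no eigenvalue of \(\lambda A\) equals \(1\), so \(I-\lambda A\) is invertible. This settles the first claim.

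For the series identity, the key step is convergence of \(\sum_{k\geq0}(\lambda A)^k\). I will use Gelfand's formula \(\rho(\lambda A)=\lim_k\|(\lambda A)^k\|^{1/k}\), for any submultiplicative norm. Fix \(\eta\) with \(\lambda\rho(A)<\eta<1\). Then there is \(C\geq1\) such that
\[
\|(\lambda A)^k\|\leq C\eta^k \quad\text{for all } k\geq0 .
\]
So the series converges absolutely in \(\R^{d\times d}\). Next, multiply the partial sums \(S_N=\sum_{k=0}^{N}(\lambda A)^k\) by \(I-\lambda A\). This telescopes to
\[
(I-\lambda A)S_N = S_N(I-\lambda A) = I-(\lambda A)^{N+1},
\]
and the right-hand side tends to \(I\) as \(N\to\infty\). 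Therefore \(\sum_{k\geq0}\lambda^kA^k=(I-\lambda A)^{-1}\).

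To finish, reindex with \(k=m-1\):
\[
\sum_{m\geq1}\lambda^{m-1}A^m = A\sum_{k\geq0}\lambda^kA^k .
\]
Left multiplication by \(A\) is continuous, so it passes through the limit. Multiplying by the scalar \((1-\lambda)\) then gives the stated identity. Since \(A\) commutes with \((I-\lambda A)^{-1}\), the factor could equally be written on the right.

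The case \(\lambda=0\) needs no separate treatment. The left side is just the \(m=1\) term \(A\), and the right side is \(A I^{-1}=A\).

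The only step needing care is the geometric bound on \(\|(\lambda A)^k\|\). The hypothesis \(\rho(A)<1\) does not by itself give \(\|A\|<1\) in a fixed norm, so we must go through Gelfand's formula. Alternatively, \Cref{lem:common_lyapunov_construction} applied to the singleton family \(\{A\}\), for which \(\rho(\{A\})=\rho(A)\), gives the same bound \(\|A^k\|_2\leq\sqrt{C_\epsilon}\,\beta_\epsilon^k\). I would use that lemma to stay within results already established in the paper.
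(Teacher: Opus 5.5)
Your proof is correct and follows the same route as the paper. Both use $\rho(\lambda A)<1$ to obtain the Neumann series $\sum_{j\geq0}(\lambda A)^j=(I-\lambda A)^{-1}$, then multiply by $(1-\lambda)A$ and reindex. The paper simply cites the Neumann-series theorem from Varga, whereas you derive it yourself from Gelfand's formula and the telescoping identity, which is fine. Note that your alternative via \Cref{lem:common_lyapunov_construction} still needs Gelfand's formula to identify $\rho(\{A\})$ with $\rho(A)$.
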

\begin{proof}
Since \(\rho(A)<1\) and \(0\leq\lambda<1\), we have \(\rho(\lambda A)<1\).
Hence the Neumann series converges and gives
\[
(I-\lambda A)^{-1}=\sum_{j=0}^{\infty}(\lambda A)^j
\]
in any submultiplicative matrix norm; see, for example,
\citet[Section~1.3, Theorem~1.4]{varga2000matrix}. Multiplying the series by \((1-\lambda)A\) gives
\[
(1-\lambda)A(I-\lambda A)^{-1}
=(1-\lambda)\sum_{j=0}^{\infty}\lambda^jA^{j+1}
=(1-\lambda)\sum_{m=1}^{\infty}\lambda^{m-1}A^m.
\]
\end{proof}

\begin{lemma}
\label{lem:abel_convergence_matrix_sequences}
Let \((X_m)_{m\geq1}\) be a sequence of matrices in a finite-dimensional matrix
space, and suppose that \(X_m\to X\). Then
\[
\lim_{\lambda\uparrow1}
(1-\lambda)\sum_{m=1}^{\infty}\lambda^{m-1}X_m
=X.
\]
\end{lemma}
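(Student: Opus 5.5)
This is the classical Abel summation argument, carried out in a fixed norm $\|\cdot\|$ on the finite-dimensional matrix space. Any norm works, since all norms there are equivalent. The plan has three steps: show the series converges, rewrite the difference from $X$ as a single weighted series, and split that series into a head and a tail.

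\emph{Convergence of the series.} Since $X_m\to X$, the sequence is bounded: there is $M<\infty$ with $\|X_m\|\le M$ for all $m$. Then for $0\le\lambda<1$,
\[
\sum_{m\ge1}\lambda^{m-1}\|X_m\|\le \frac{M}{1-\lambda}<\infty .
\]
So the series converges absolutely in the complete finite-dimensional space, and $S(\lambda):=(1-\lambda)\sum_{m\ge1}\lambda^{m-1}X_m$ is well defined.

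\emph{Rewriting the difference.} The weights $(1-\lambda)\lambda^{m-1}$ sum to one, because $(1-\lambda)\sum_{m\ge1}\lambda^{m-1}=1$. Hence
\[
S(\lambda)-X=(1-\lambda)\sum_{m=1}^{\infty}\lambda^{m-1}(X_m-X).
\]

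\emph{Head and tail estimate.} Fix $\varepsilon>0$ and choose $N$ such that $\|X_m-X\|\le\varepsilon$ for all $m>N$. Split the sum at $N$. For the tail,
\[
(1-\lambda)\sum_{m>N}\lambda^{m-1}\|X_m-X\|\le\varepsilon .
\]
For the head, with $K:=\max_{m\le N}\|X_m-X\|$,
\[
(1-\lambda)\sum_{m\le N}\lambda^{m-1}\|X_m-X\|\le (1-\lambda)NK .
\]
This bound tends to $0$ as $\lambda\uparrow1$. Therefore $\limsup_{\lambda\uparrow1}\|S(\lambda)-X\|\le\varepsilon$, and since $\varepsilon$ was arbitrary, the limit equals $X$.

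None of these steps is a real obstacle. The only points needing care are justifying the exchange of the sum with subtraction of $X$, which follows from absolute convergence, and the uniform boundedness used to make $S(\lambda)$ well defined.
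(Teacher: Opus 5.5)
Your proposal is correct and follows essentially the same route as the paper: boundedness of $(X_m)$ makes the series well defined, the weights $(1-\lambda)\lambda^{m-1}$ sum to one, and the difference from $X$ is split into a finite head, which vanishes as $\lambda\uparrow1$, and a tail bounded by $\varepsilon$. Your explicit head bound $(1-\lambda)NK$ and your remark about absolute convergence simply make explicit steps that the paper leaves implicit.
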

\begin{proof}
Since \(X_m\to X\), the sequence is
bounded, and the displayed series is well defined for \(0\leq\lambda<1\). Fix
\(\varepsilon>0\). Choose \(N\) such that \(\|X_m-X\|\leq\varepsilon\) for all
\(m\geq N\). Then
\[
\begin{aligned}
&\left\|(1-\lambda)\sum_{m=1}^{\infty}\lambda^{m-1}X_m-X\right\| \\
&\quad\leq
(1-\lambda)\sum_{m=1}^{N-1}\lambda^{m-1}\|X_m-X\|
+(1-\lambda)\sum_{m=N}^{\infty}\lambda^{m-1}\varepsilon.
\end{aligned}
\]
The first term goes to zero as \(\lambda\uparrow1\), because it is a finite sum
multiplied by \(1-\lambda\). The second term is at most \(\varepsilon\). Since
\(\varepsilon\) is arbitrary, the limit follows.
\end{proof}

\subsection{Discounted MDPs with Linear Function Approximation}
\label{sec:discounted_mdps_lfa}

We consider a finite discounted Markov decision process (MDP)~\citep{puterman2014markov,bertsekas1996neuro} with state space
$\mathcal S=\{1,\ldots,|\mathcal S|\}$, action space
$\mathcal A=\{1,\ldots,|\mathcal A|\}$, transition probability
$P(s'\mid s,a)$, real-valued one-step reward $r(s,a,s')$, expected reward
\[
R(s,a):=\sum_{s'\in\mathcal S}P(s'\mid s,a)r(s,a,s'),
\]
and discount factor $\gamma\in(0,1)$. State-action functions are viewed as
vectors in $\R^{|\mathcal S||\mathcal A|}$ using the action-block ordering $(1,1),(2,1),\ldots,(|\mathcal S|,1),
(1,2),(2,2),\ldots,(|\mathcal S|,|\mathcal A|)$. All matrices and vectors indexed by state-action pairs use this ordering. Define
\[
P:=
\begin{bmatrix}
P_1\\
\vdots\\
P_{|\mathcal A|}
\end{bmatrix}
\in\R^{|\mathcal S||\mathcal A|\times |\mathcal S|},
\qquad
R:=
\begin{bmatrix}
R(\cdot,1)\\
\vdots\\
R(\cdot,|\mathcal A|)
\end{bmatrix}
\in\R^{|\mathcal S||\mathcal A|},
\]
where $P_a=P(\cdot\mid\cdot,a)\in\R^{|\mathcal S|\times|\mathcal S|}$.
Let $\Theta$ denote the set of deterministic stationary policies
$\pi:\mathcal S\to\mathcal A$. For any stochastic policy $\mu$ with $\mu(s)\in\Delta_{|\mathcal A|}$ for each
$s\in\mathcal S$, define
\[
\Pi^\mu:=
\begin{bmatrix}
\mu(1)^\top\otimes e_1^\top\\
\mu(2)^\top\otimes e_2^\top\\
\vdots\\
\mu(|\mathcal S|)^\top\otimes e_{|\mathcal S|}^\top
\end{bmatrix}
\in\R^{|\mathcal S|\times|\mathcal S||\mathcal A|}.
\]
For a deterministic policy $\pi\in\Theta$, the same notation $\Pi^\pi$ is used by
identifying $\pi(s)$ with its one-hot encoding.
For
$Q\in\R^{|\mathcal S||\mathcal A|}$, define
\[
V_Q(s):=\max_{a\in\mathcal A}Q(s,a),
\qquad
V_Q:=(V_Q(1),\ldots,V_Q(|\mathcal S|))^\top.
\]
The Bellman optimality operator is $F(Q):=R+\gamma P V_Q$. Let $\Phi\in\R^{|\mathcal S||\mathcal A|\times m}$ be a feature matrix. Its
row corresponding to $(s,a)$ is $\phi(s,a)^\top$, where
$\phi(s,a)\in\R^m$. The linear function approximation
(LFA) of the Q-function is $Q_\theta:=\Phi\theta$. For an LFA parameter $\theta$, define the corresponding greedy value vector by
\[
V_\theta(s):=\max_{a\in\mathcal A}\phi(s,a)^\top\theta,
\qquad
V_\theta:=(V_\theta(1),\ldots,V_\theta(|\mathcal S|))^\top.
\]
We use $d$ to denote a state-action sampling distribution on
$\mathcal S\times\mathcal A$. In the i.i.d.\ observation model, $d$ is the
sampling distribution of $(s_k,a_k)$; in the Markovian observation model, $d$ is
the stationary state-action distribution of the behavior-induced chain. Define
\[
D:=\diag(d(s,a))_{(s,a)\in\mathcal S\times\mathcal A}.
\]
The feature and sampling conditions used throughout the rest of the paper are
collected in the following standing assumption. They are the minimal structural
conditions needed to make the projected residual and Gram matrix well defined in
the coordinates used below.
\begin{assumption}
\label{ass:feature_sampling}
The feature matrix $\Phi$ has full column rank, and the sampling distribution
has full support: $d(s,a)>0$ for every
$(s,a)\in\mathcal S\times\mathcal A$. Equivalently, the diagonal sampling
matrix satisfies $D\succ0$.
\end{assumption}
The standing assumption immediately implies
\(\Phi^\top D\Phi\succ0\), because \(D\succ0\) and \(\Phi\) has full column
rank. We use this Gram-matrix positivity throughout without introducing it as a
separate lemma.
The switching-system model above will be used to analyze Bellman updates by
viewing each realization of the maximum operator as a policy-selected mode. To
make this passage from the Bellman maximum to a switching mode precise, we use
one final setup fact. It says that the difference between two greedy value
vectors is itself a value vector generated by a suitable stochastic policy
applied to the parameter difference.
\setcounter{lemma}{3}
\begin{lemma}
\label{lem:stochastic_policy_linearization}
For every pair $\theta,\bar\theta\in\R^m$, there exists a stochastic policy
$\mu_{\theta,\bar\theta}$ such that
\begin{align}
V_\theta-V_{\bar\theta}
=
\Pi^{\mu_{\theta,\bar\theta}}\Phi(\theta-\bar\theta).
\label{eq:stoch_policy_linearization}
\end{align}
\end{lemma}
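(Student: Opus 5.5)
The plan is to argue state by state and then assemble the rows into the matrix form \(\Pi^{\mu}\). Fix \(\theta,\bar\theta\) and a state \(s\). Choose greedy actions \(a^\star\in\arg\max_a\phi(s,a)^\top\theta\) and \(\bar a^\star\in\arg\max_a\phi(s,a)^\top\bar\theta\). The difference \(V_\theta(s)-V_{\bar\theta}(s)\) is then sandwiched between two one-action differences:
\[
\phi(s,\bar a^\star)^\top(\theta-\bar\theta)
\;\le\;
V_\theta(s)-V_{\bar\theta}(s)
\;\le\;
\phi(s,a^\star)^\top(\theta-\bar\theta).
\]
The upper bound holds because \(V_{\bar\theta}(s)\ge\phi(s,a^\star)^\top\bar\theta\). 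The lower bound holds because \(V_\theta(s)\ge\phi(s,\bar a^\star)^\top\theta\). Only maximality of the chosen actions is used.

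Set \(u:=\phi(s,\bar a^\star)^\top(\theta-\bar\theta)\), \(w:=\phi(s,a^\star)^\top(\theta-\bar\theta)\), and \(v:=V_\theta(s)-V_{\bar\theta}(s)\), so that \(u\le v\le w\). By the intermediate value property on the segment \([u,w]\), there is \(t_s\in[0,1]\) with \(v=t_s w+(1-t_s)u\). Take \(t_s=(v-u)/(w-u)\) when \(w>u\), and \(t_s=1\) (any value works) when \(w=u\). Define the stochastic policy at state \(s\) by
\[
\mu_{\theta,\bar\theta}(s):=t_s\,e_{a^\star}+(1-t_s)\,e_{\bar a^\star}\in\Delta_{|\mathcal A|}.
\]
If \(a^\star=\bar a^\star\), this is simply the one-hot vector, and then \(u=v=w\) holds trivially.

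Next verify the row identity. By the definition of \(\Pi^\mu\) with the action-block ordering, the \(s\)-th row of \(\Pi^{\mu}\) is \(\mu(s)^\top\otimes e_s^\top\). Applied to \(\Phi x\), it returns \(\sum_a\mu(s)_a\,\phi(s,a)^\top x\), because the entry of \(\Phi x\) at block \(a\), position \(s\), is \(\phi(s,a)^\top x\). With \(x=\theta-\bar\theta\), this gives \(t_sw+(1-t_s)u=v\). Stacking over all \(s\) yields \eqref{eq:stoch_policy_linearization}.

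The main point needing care is the index bookkeeping of the Kronecker product under the action-block ordering, i.e.\ confirming that \(\mu(s)^\top\otimes e_s^\top\) picks out exactly the entries \((s,a)\) for \(a\in\mathcal A\). Beyond that, one only needs to handle the degenerate case \(w=u\) so that \(t_s\) is well defined. Ties in the argmax are harmless, since any maximizer works.
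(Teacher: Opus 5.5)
Your proof is correct and follows the standard stochastic-policy linearization that the paper omits and defers to \citet{LeeLim2026}. At each state you bound $V_\theta(s)-V_{\bar\theta}(s)$ between the two greedy-action increments using only maximality, write it as a convex combination of those two actions, and stack the rows via $\mu(s)^\top\otimes e_s^\top$; your handling of the degenerate case $w=u$ and your Kronecker indexing under the action-block ordering are both right.
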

The proof is the same as the stochastic-policy linearization argument in the
recent switching-system analysis of linear Q-learning by \citet{LeeLim2026}, so
we omit it here.

\section{PQVI as the Fixed-Point Benchmark}
\label{sec:projected_qvi}

PQVI is the fixed-point benchmark for the target
updates studied later. The statements in this section
are recalled from~\citet{lee2026targetupdates}; their proofs are omitted here. By~\Cref{ass:feature_sampling}, $\Phi^\top D\Phi$ is
nonsingular, so the $D$-orthogonal projection onto $\operatorname{range}(\Phi)$ is
\[
\Pi_D:=\Phi(\Phi^\top D\Phi)^{-1}\Phi^\top D.
\]
The projected Bellman equation~\citep{bertsekas1996neuro,meyn2024projected,limlee2025understanding} is
\begin{align}
\Phi\theta
=\Pi_D F(\Phi\theta)
=\Pi_D\left(R+\gamma P V_\theta\right).
\label{eq:prelim_projected_bellman_equation}
\end{align}
A projected Bellman fixed point is any parameter satisfying this equation. Such a
point need not exist or be unique in general~\citep{limlee2025understanding}, so
the analysis fixes the benchmark solution through the following standing
assumption.
\begin{assumption}
\label{ass:unique_projected_q_bellman}
The projected Bellman equation in~\Cref{eq:prelim_projected_bellman_equation} has a
unique solution $\theta^\star$.
\end{assumption}

For later comparison with DLQL, we write PQVI as the exact least-squares
solve with the Bellman target frozen at the current parameter. For a fixed target
parameter $\bar\theta$, define
\[
f(\theta ;\bar \theta )
:=
\left\| {R+\gamma P V_{\bar\theta}-\Phi\theta} \right\|_D^2,
\]
where $\|x\|_D^2:=x^\top D x$. Minimizing $f(\theta;\bar\theta)$ over $\theta$
gives
\[
\bar\theta^*
=(\Phi^\top D\Phi)^{-1}\Phi^\top D(R+\gamma P V_{\bar\theta}),
\qquad
\Phi\bar\theta^*=\Pi_D(R+\gamma P V_{\bar\theta}).
\]
Choosing $\bar\theta=\theta_k$ gives the PQVI update
\begin{align}
\theta_{k+1}
=
(\Phi^\top D\Phi)^{-1}\Phi^\top D
\left(R+\gamma P V_{\theta_k}\right),
\qquad
k\in\{0,1,\ldots\}.
\label{eq:pqvi_map_intro}
\end{align}
Its fixed points satisfy the parameter form of the projected Q-Bellman equation,
\begin{align}
\Phi^\top D\Phi\,\theta=\Phi^\top D\left(R+\gamma P V_\theta\right).
\label{eq:projected_q_bellman_equation}
\end{align}

The fixed-point equation identifies the benchmark solution. To analyze the
iteration itself, we pass to the affine switching-system representation generated
by the Bellman maximum. For each PQVI iterate, choose a deterministic
greedy policy $\pi_k$ satisfying
\[
\pi_k(s)\in\arg\max_{a\in\mathcal A}\phi(s,a)^\top\theta_k,
\qquad s\in\mathcal S,
\]
with fixed deterministic tie breaking. Then $V_{\theta_k}=\Pi^{\pi_k}\Phi\theta_k$,
and direct substitution gives
\[
\theta_{k+1}
=
(\Phi^\top D\Phi)^{-1}\Phi^\top D R
+
(\Phi^\top D\Phi)^{-1}\gamma\Phi^\top DP\Pi^{\pi_k}\Phi\,\theta_k.
\]
The next lemma presents the resulting error recursion. After subtracting the
projected Bellman fixed point $\theta^\star$, the common affine term cancels, and
the stochastic policy comes from the value-difference linearization.
\setcounter{lemma}{1}
\begin{lemma}
\label{lem:pqvi_stochastic_policy_error_recursion}
For each PQVI iterate, there exists a stochastic policy
$\mu_k=\mu_{\theta_k,\theta^\star}$ such that
\begin{align}
\theta_{k+1}-\theta^\star
=A_{\mu_k}^{\mathrm{PQVI}}(\theta_k-\theta^\star),\label{eq:1}
\end{align}
where for a stochastic policy $\mu$ with $\mu(s)\in\Delta_{|\mathcal A|}$,
\begin{align}
A_\mu^{\mathrm{PQVI}}
:=
(\Phi^\top D\Phi)^{-1}\gamma\Phi^\top DP\Pi^\mu\Phi.
\label{eq:pqvi_A_mu_definition}
\end{align}
For a deterministic policy $\pi$, we write $A_\pi^{\mathrm{PQVI}}$.
\end{lemma}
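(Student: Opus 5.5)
The argument subtracts the fixed-point equation from the PQVI update and then replaces the difference of greedy value vectors by a stochastic-policy mode using \Cref{lem:stochastic_policy_linearization}.

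\emph{Step 1: the fixed point satisfies the update.} By \Cref{ass:unique_projected_q_bellman}, $\theta^\star$ solves \Cref{eq:prelim_projected_bellman_equation}. Multiply that equation on the left by $\Phi^\top D$. Since
\[
\Phi^\top D\Pi_D=\Phi^\top D\Phi(\Phi^\top D\Phi)^{-1}\Phi^\top D=\Phi^\top D,
\]
we obtain \Cref{eq:projected_q_bellman_equation} at $\theta=\theta^\star$. Invertibility of $\Phi^\top D\Phi$, guaranteed by \Cref{ass:feature_sampling}, then gives
\[
\theta^\star=(\Phi^\top D\Phi)^{-1}\Phi^\top D\left(R+\gamma P V_{\theta^\star}\right).
\]
So $\theta^\star$ is a fixed point of the PQVI map \Cref{eq:pqvi_map_intro}.

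\emph{Step 2: cancel the affine term.} Subtract the identity from Step 1 from \Cref{eq:pqvi_map_intro}. The common term $(\Phi^\top D\Phi)^{-1}\Phi^\top DR$ cancels, leaving
\[
\theta_{k+1}-\theta^\star=(\Phi^\top D\Phi)^{-1}\gamma\Phi^\top DP\left(V_{\theta_k}-V_{\theta^\star}\right).
\]

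\emph{Step 3: linearize the maximum.} Apply \Cref{lem:stochastic_policy_linearization} with $\theta=\theta_k$ and $\bar\theta=\theta^\star$. This gives a stochastic policy $\mu_k:=\mu_{\theta_k,\theta^\star}$ with
\[
V_{\theta_k}-V_{\theta^\star}=\Pi^{\mu_k}\Phi(\theta_k-\theta^\star).
\]
Substituting this into the display of Step 2 yields exactly \Cref{eq:1}, with $A_{\mu_k}^{\mathrm{PQVI}}$ as defined in \Cref{eq:pqvi_A_mu_definition}. For a deterministic $\pi$, the notation $A^{\mathrm{PQVI}}_\pi$ is just the one-hot case of the same definition.

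\emph{Main obstacle.} The only step with real content is Step 1, namely that $\theta^\star$ solves the parameter-form equation \Cref{eq:projected_q_bellman_equation} and not only its projected form. This reduces to the identity $\Phi^\top D\Pi_D=\Phi^\top D$ together with the invertibility of the Gram matrix, so once \Cref{lem:stochastic_policy_linearization} is taken as given, the rest is direct substitution.
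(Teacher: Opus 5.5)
Your proposal is correct and follows the route the paper sketches (the paper defers the full proof to \citet{lee2026targetupdates}): subtract the fixed-point identity so that the affine term $(\Phi^\top D\Phi)^{-1}\Phi^\top DR$ cancels, then replace $V_{\theta_k}-V_{\theta^\star}$ by $\Pi^{\mu_k}\Phi(\theta_k-\theta^\star)$ using the stochastic-policy linearization lemma. Your Step 1 correctly fills in a detail the paper leaves implicit: it passes from the projected form to the parameter form using $\Phi^\top D\Pi_D=\Phi^\top D$ and the invertibility of the Gram matrix.
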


The proof is given in~\citet{lee2026targetupdates}; we omit it here.

For deterministic policies, the finite PQVI switching family is
\begin{align}
\mathcal A^{\mathrm{PQVI}}
:=
\set{A_\pi^{\mathrm{PQVI}}:\pi\in\Theta}.
\label{eq:pqvi_family_intro}
\end{align}
\begin{lemma}
\label{lem:pqvi_jsr_convergence_certificate}
If $\rho(\mathcal A^{\mathrm{PQVI}})<1$, then the PQVI error recursion with
$z_k=\theta_k-\theta^\star$ converges to zero for every initial condition, with
the finite-time bounds given by~\Cref{lem:common_lyapunov_construction}.
\end{lemma}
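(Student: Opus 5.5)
The plan is to combine the error recursion of \Cref{lem:pqvi_stochastic_policy_error_recursion} with the Lyapunov construction of \Cref{lem:common_lyapunov_construction}. The idea is to show that every active matrix \(A_{\mu_k}^{\mathrm{PQVI}}\) lies in \(\co(\mathcal A^{\mathrm{PQVI}})\). Once that is established, the PQVI error \(z_k=\theta_k-\theta^\star\) is a switched recursion \(z_{k+1}=B_kz_k\) with \(B_k\in\co(\mathcal A^{\mathrm{PQVI}})\), and \(\rho(\mathcal A^{\mathrm{PQVI}})<1\). The finite-time bounds \(\|z_k\|_2\leq\sqrt{C_\epsilon}\,\beta_\epsilon^k\|z_0\|_2\), and hence \(z_k\to0\), then follow directly.

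The first step is \Cref{lem:pqvi_stochastic_policy_error_recursion} itself. It gives \(z_{k+1}=A_{\mu_k}^{\mathrm{PQVI}}z_k\) with \(\mu_k=\mu_{\theta_k,\theta^\star}\), where \(\mu_k\) comes from \Cref{lem:stochastic_policy_linearization}. Here I use that \(\theta^\star\) exists and is unique by \Cref{ass:unique_projected_q_bellman}, and that it is a fixed point of \eqref{eq:pqvi_map_intro}, so the affine terms cancel.

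The second step, and the only real content, is the convex-hull membership. Note that \(\mu\mapsto\Pi^\mu\) is affine in the collection \((\mu(s))_{s\in\mathcal S}\), since each row of \(\Pi^\mu\) is \(\mu(s)^\top\otimes e_s^\top\). A stochastic policy is a point of the product of simplices \(\prod_{s}\Delta_{|\mathcal A|}\), whose extreme points are exactly the deterministic policies \(\pi\in\Theta\). Concretely, I will write \(\mu\) as the product-measure mixture
\[
\mu=\sum_{\pi\in\Theta}w_\pi\,\pi,\qquad w_\pi:=\prod_{s\in\mathcal S}\mu(s)_{\pi(s)}\geq0,
\]
with \(\sum_\pi w_\pi=1\). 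For each state \(s\) this gives
\[
\sum_\pi w_\pi\,e_{\pi(s)}=\mu(s),
\]
obtained by summing out the other coordinates. Consequently \(\Pi^\mu=\sum_\pi w_\pi\Pi^\pi\). Since \(A_\mu^{\mathrm{PQVI}}\) in \eqref{eq:pqvi_A_mu_definition} is obtained from \(\Pi^\mu\) by fixed left and right matrix multiplications, it is linear in \(\Pi^\mu\), so
\[
A_\mu^{\mathrm{PQVI}}=\sum_\pi w_\pi A_\pi^{\mathrm{PQVI}}\in\co(\mathcal A^{\mathrm{PQVI}}).
\]

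The final step is to apply \Cref{lem:common_lyapunov_construction} with \(\mathcal H=\mathcal A^{\mathrm{PQVI}}\) and any \(\epsilon\) satisfying \(\rho(\mathcal A^{\mathrm{PQVI}})+\epsilon<1\). This yields the norm \(p_\epsilon\), the contraction \(p_\epsilon(z_k)\leq\beta_\epsilon^k p_\epsilon(z_0)\), and the Euclidean bound above.

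I expect no serious obstacle. The only point needing care is the decomposition of a stochastic policy into deterministic ones, that is, verifying the marginal identity for the product weights. Everything else is direct substitution into earlier lemmas.
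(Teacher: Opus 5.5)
Your proposal is correct and follows the paper's approach. The paper omits this proof (it cites \citet{lee2026targetupdates}), but its parallel certificate for the $\lambda$-averaged boundary recursion uses the same three steps: the stochastic-policy error recursion, membership of the active matrix in the convex hull of the deterministic family, and then \Cref{lem:common_lyapunov_construction}. Your product-weight decomposition $w_\pi=\prod_{s}\mu(s)_{\pi(s)}$ simply spells out the convex-hull step that the paper asserts in one line.
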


The proof is given in~\citet{lee2026targetupdates}; we omit it here.

\section{Deterministic Linear Q-Learning (DLQL) and Its Switching Certificate}
\label{sec:deterministic_linear_q_learning}

This section presents the DLQL fixed-point and switching facts needed later. The
supporting lemmas in this section are recalled from~\citet{lee2026targetupdates};
their proofs are omitted in this paper. This section provides the deterministic linear Q-learning (DLQL) update that serves as the period-one
endpoint for the target-period constructions below. Given a transition sample
$(s_k,a_k,r_{k+1},s'_k)$, where $r_{k+1}=r(s_k,a_k,s'_k)$, constant step-size
linear Q-learning~\citep{watkins1992q,tsitsiklis1994asynchronous,jaakkola1994convergence,sutton1998reinforcement} uses
\begin{equation*}
\theta_{k+1}
=\theta_k+\alpha\phi(s_k,a_k)
\left(r_{k+1}+\gamma\max_{u\in\mathcal A}\phi(s'_k,u)^\top\theta_k
-\phi(s_k,a_k)^\top\theta_k\right),
\end{equation*}
where $\alpha>0$ is the step-size. Its deterministic averaged form is
\begin{align}
\theta_{k+1}
=\theta_k+\alpha\Phi^\top D\left(R+\gamma P V_{\theta_k}-\Phi\theta_k\right),
\qquad
k\in\{0,1,\ldots\}.
\label{eq:prelim_deterministic_linear_q_learning}
\end{align}
Using the residual $f$ from~\Cref{sec:projected_qvi} with the target argument
frozen at $\theta_k$, this update is
\[
\theta_{k+1}
=
\theta_k-\frac{\alpha}{2}
\left.\nabla_\theta f(\theta;\theta_k)\right|_{\theta=\theta_k}.
\]
Therefore, this DLQL performs one residual-gradient step toward the frozen Bellman target and
then refreshes the target at the next step.

To connect this update to the projected Bellman equation, let us define the DLQL map
\begin{equation*}
h(\theta):=\theta+\alpha g(\theta),
\end{equation*}
where
\begin{equation*}
g(\theta):=\Phi^\top D\left(R+\gamma P V_\theta-\Phi\theta\right)
\end{equation*}
is the projected Bellman residual. Then,~\Cref{eq:prelim_deterministic_linear_q_learning} is $\theta_{k+1}=h(\theta_k)$.
A fixed point of $h$ is a zero of $g$. The next lemma makes explicit that this
zero set is exactly the projected Bellman solution set, so DLQL and PQVI share the same fixed-point equation.
\begin{lemma}
\label{lem:prelim_pbe_residual_equivalence}
The solution set of $g(\theta)=0$ is identical to the solution set of the
projected Bellman equation in~\Cref{eq:prelim_projected_bellman_equation}.
\end{lemma}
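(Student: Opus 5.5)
The argument reduces to comparing two linear equations in $\theta$ that differ only by an invertible left factor. I would write $b(\theta):=R+\gamma P V_\theta$, so that the projected Bellman equation in~\Cref{eq:prelim_projected_bellman_equation} reads $\Phi\theta=\Pi_D b(\theta)$, while $g(\theta)=\Phi^\top D\bigl(b(\theta)-\Phi\theta\bigr)$. I would then prove the two inclusions of solution sets separately, using only~\Cref{ass:feature_sampling}. That assumption makes $\Phi^\top D\Phi\succ0$ (hence invertible) and gives $\Phi$ full column rank (hence injective).

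For the first inclusion, suppose $\theta$ solves the projected Bellman equation. Substituting $\Pi_D=\Phi(\Phi^\top D\Phi)^{-1}\Phi^\top D$ gives
\[
\Phi\theta=\Phi(\Phi^\top D\Phi)^{-1}\Phi^\top D\, b(\theta).
\]
Since $\Phi$ is injective, this is equivalent to
\[
\theta=(\Phi^\top D\Phi)^{-1}\Phi^\top D\, b(\theta).
\]
Multiplying on the left by $\Phi^\top D\Phi$ yields $\Phi^\top D\Phi\,\theta=\Phi^\top D\, b(\theta)$, which is exactly~\Cref{eq:projected_q_bellman_equation}. This identity is precisely $g(\theta)=0$.

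For the converse inclusion, suppose $g(\theta)=0$. Then $\Phi^\top D\Phi\,\theta=\Phi^\top D\, b(\theta)$. Applying $(\Phi^\top D\Phi)^{-1}$ and then $\Phi$ on the left recovers
\[
\Phi\theta=\Pi_D\, b(\theta).
\]
Every step in this chain is an equivalence for each fixed $\theta$. The nonlinearity of $V_\theta$ plays no role, because the same vector $b(\theta)$ appears on both sides.

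There is no genuine obstacle here. The only point needing care is to justify each implication as reversible, namely injectivity of $\Phi$ and invertibility of the Gram matrix. Both follow immediately from~\Cref{ass:feature_sampling} as noted after that assumption.
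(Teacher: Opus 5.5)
Your proof is correct and is essentially the standard argument; the paper omits its own proof here, deferring to its predecessor on target updates. Both inclusions follow from invertibility of $\Phi^\top D\Phi$ and injectivity of $\Phi$ under~\Cref{ass:feature_sampling}, and the nonlinearity of $V_\theta$ is irrelevant because the same vector $R+\gamma P V_\theta$ appears on both sides (as a minor simplification, the forward direction needs no injectivity, since left-multiplying $\Phi\theta=\Pi_D(R+\gamma P V_\theta)$ by $\Phi^\top D$ gives $g(\theta)=0$ directly).
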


Consequently, the benchmark parameter satisfies
\[
\Phi^\top D\left(R+\gamma P V_{\theta^\star}-\Phi\theta^\star\right)
=g(\theta^\star)=0.
\]
The remaining difference between PQVI and DLQL is therefore dynamic, not
fixed-point based. We now present the DLQL switching model used for the stability certificate. If
$\pi_k$ is the greedy policy selected by $\theta_k$, substitution of
$V_{\theta_k}=\Pi^{\pi_k}\Phi\theta_k$ into~\Cref{eq:prelim_deterministic_linear_q_learning} gives
\[
\theta_{k+1}
=
\alpha\Phi^\top D R
+
\left(I-\alpha\Phi^\top D\Phi
+\alpha\gamma\Phi^\top DP\Pi^{\pi_k}\Phi\right)\theta_k.
\]
The next lemma subtracts the projected Bellman fixed point and uses the same
stochastic-policy linearization as in~\Cref{sec:projected_qvi} to obtain the exact switching linear system model of the error.
\setcounter{lemma}{2}
\begin{lemma}
\label{lem:direct_stochastic_policy_error_recursion}
For each DLQL iterate, there exists a stochastic policy
$\mu_k=\mu_{\theta_k,\theta^\star}$ such that the DLQL error recursion is
represented exactly as
\[
\theta_{k+1}-\theta^\star
=A_{\mu_k}^{\mathrm{DLQL}}(\theta_k-\theta^\star),
\]
where for a stochastic policy $\mu$ with $\mu(s)\in\Delta_{|\mathcal A|}$,
\begin{align}
A_\mu^{\mathrm{DLQL}}:=I-\alpha\Phi^\top D\Phi+\alpha\gamma\Phi^\top DP\Pi^\mu\Phi.
\label{eq:A_mu_definition}
\end{align}
\end{lemma}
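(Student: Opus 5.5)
We subtract the fixed-point identity from the DLQL update and then linearize the Bellman maximum with \Cref{lem:stochastic_policy_linearization}.

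\textbf{Step 1: express the error.} Since \(g(\theta^\star)=0\) by \Cref{lem:prelim_pbe_residual_equivalence}, we have \(h(\theta^\star)=\theta^\star\). Hence
\[
\theta_{k+1}-\theta^\star
=h(\theta_k)-h(\theta^\star)
=(\theta_k-\theta^\star)-\alpha\Phi^\top D\Phi(\theta_k-\theta^\star)
+\alpha\gamma\Phi^\top DP\left(V_{\theta_k}-V_{\theta^\star}\right).
\]
The reward term \(\alpha\Phi^\top DR\) cancels exactly, because it is common to both evaluations of \(h\).

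\textbf{Step 2: linearize the maximum.} Apply \Cref{lem:stochastic_policy_linearization} with \(\theta=\theta_k\) and \(\bar\theta=\theta^\star\). This yields a stochastic policy \(\mu_k:=\mu_{\theta_k,\theta^\star}\) with
\[
V_{\theta_k}-V_{\theta^\star}=\Pi^{\mu_k}\Phi(\theta_k-\theta^\star).
\]
Substituting this into the display above and collecting the common factor \(\theta_k-\theta^\star\) gives
\[
\theta_{k+1}-\theta^\star=\left(I-\alpha\Phi^\top D\Phi+\alpha\gamma\Phi^\top DP\Pi^{\mu_k}\Phi\right)(\theta_k-\theta^\star).
\]
The matrix in parentheses is exactly \(A_{\mu_k}^{\mathrm{DLQL}}\) as defined in \Cref{eq:A_mu_definition}.

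\textbf{Where the work lies.} The only nontrivial ingredient is the existence of the stochastic policy, which is supplied by \Cref{lem:stochastic_policy_linearization}. If a self-contained argument were wanted, I would construct it state by state. Fix \(s\), and let \(a\) be a maximizer of \(\phi(s,\cdot)^\top\theta_k\) and \(b\) a maximizer of \(\phi(s,\cdot)^\top\theta^\star\). Set \(\delta:=\theta_k-\theta^\star\). Then
\[
\phi(s,b)^\top\delta\le V_{\theta_k}(s)-V_{\theta^\star}(s)\le \phi(s,a)^\top\delta .
\]
By the intermediate value theorem, some convex combination of the one-hot vectors \(e_a\) and \(e_b\) reproduces \(V_{\theta_k}(s)-V_{\theta^\star}(s)\) exactly. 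Taking \(\mu_k(s)\) to be that combination, for every \(s\), defines the required policy. Everything else is algebra, so this step is the main obstacle, and it is already covered by the cited lemma.
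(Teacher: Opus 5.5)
Your proof is correct and follows essentially the same route the paper indicates (it omits the proof, citing \citet{lee2026targetupdates}, but describes it as subtracting the projected Bellman fixed point and then applying the stochastic-policy linearization): subtracting $h(\theta^\star)=\theta^\star$ cancels the reward term, and \Cref{lem:stochastic_policy_linearization} replaces $V_{\theta_k}-V_{\theta^\star}$ by $\Pi^{\mu_k}\Phi(\theta_k-\theta^\star)$. Your optional state-by-state construction of $\mu_k(s)$ as a convex combination of the two greedy one-hot vectors, justified by the sandwich bounds, is also a valid self-contained proof of that linearization lemma.
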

The corresponding finite deterministic switching family is
\begin{align}
\mathcal A^{\mathrm{DLQL}}
:=
\set{A_\pi^{\mathrm{DLQL}}: \pi\in\Theta}.
\label{eq:direct_family}
\end{align}
The family in~\Cref{eq:direct_family} is the DLQL stability object. The next
result is the DLQL counterpart of the PQVI certificate in~\Cref{lem:pqvi_jsr_convergence_certificate}: once the JSR of this finite family
is strictly below one, the common Lyapunov bound from~\Cref{lem:common_lyapunov_construction} gives convergence of every DLQL error
trajectory.
\begin{lemma}
\label{lem:dlql_jsr_convergence_certificate}
If $\rho(\mathcal A^{\mathrm{DLQL}})<1$, then the DLQL error recursion with
$z_k=\theta_k-\theta^\star$ converges to zero for every initial condition, with
the finite-time bounds given by~\Cref{lem:common_lyapunov_construction}.
\end{lemma}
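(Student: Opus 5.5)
The argument combines the exact error representation of \Cref{lem:direct_stochastic_policy_error_recursion} with the Lyapunov construction of \Cref{lem:common_lyapunov_construction}. Let $z_k:=\theta_k-\theta^\star$. By \Cref{lem:direct_stochastic_policy_error_recursion}, for each $k$ there is a stochastic policy $\mu_k$ with
\[
z_{k+1}=A_{\mu_k}^{\mathrm{DLQL}}z_k .
\]
So it suffices to show that every matrix $A_{\mu}^{\mathrm{DLQL}}$ with $\mu$ stochastic lies in $\co(\mathcal A^{\mathrm{DLQL}})$. Once that holds, the recursion is a switched recursion $z_{k+1}=B_kz_k$ with $B_k\in\co(\mathcal A^{\mathrm{DLQL}})$. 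Under the hypothesis $\rho(\mathcal A^{\mathrm{DLQL}})<1$, fix $\epsilon>0$ with $\beta_\epsilon<1$. \Cref{lem:common_lyapunov_construction} then gives $\|z_k\|_2\le\sqrt{C_\epsilon}\,\beta_\epsilon^k\|z_0\|_2$, and hence $z_k\to0$ for every initial condition.

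The main step is the convex-hull membership. I would first note that $\mu\mapsto\Pi^\mu$ is affine in the row blocks: the $s$th row of $\Pi^\mu$ is $\mu(s)^\top\otimes e_s^\top$, which is linear in $\mu(s)$. A stochastic policy is a point of the product of simplices $\prod_{s}\Delta_{|\mathcal A|}$, whose extreme points are exactly the deterministic policies $\pi\in\Theta$. The product weights
\[
c_\pi:=\prod_{s\in\mathcal S}\mu(s)_{\pi(s)}
\]
are nonnegative and sum to one. Summing over $\pi$ with all coordinates except $s$ marginalized out gives
\[
\sum_{\pi\in\Theta}c_\pi\,\mathbbm 1\{\pi(s)=a\}=\mu(s)_a .
\]
Therefore $\Pi^\mu=\sum_\pi c_\pi\Pi^\pi$.

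Since $A_\mu^{\mathrm{DLQL}}$ in \Cref{eq:A_mu_definition} is affine in $\Pi^\mu$, and the weights sum to one (so the constant term $I-\alpha\Phi^\top D\Phi$ is reproduced), we get
\[
A_\mu^{\mathrm{DLQL}}=\sum_\pi c_\pi A_\pi^{\mathrm{DLQL}}\in\co(\mathcal A^{\mathrm{DLQL}}).
\]

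I expect the only delicate point to be verifying that the product weights reproduce $\mu$ exactly. This is the marginalization computation above, carried out row by row. The rest is direct invocation of the earlier lemmas, mirroring the PQVI certificate of \Cref{lem:pqvi_jsr_convergence_certificate}.
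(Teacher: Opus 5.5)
Your proposal is correct and follows the same route the paper relies on: it omits this proof and cites prior work, but its parallel argument in \Cref{lem:lambda_averaged_boundary_convergence_certificate} is likewise the exact stochastic-policy error recursion, convex-hull membership of the active mode, and then \Cref{lem:common_lyapunov_construction}. Your product-weight identity $\Pi^\mu=\sum_{\pi\in\Theta}c_\pi\Pi^\pi$ makes explicit the convex-hull step that the paper simply asserts.
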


Although DLQL and PQVI solve the same projected Q-Bellman equation, their
mode families differ. DLQL can be written as a residual step toward the PQVI update evaluated at the same parameter,
\[
h(\theta_k)
=
\theta_k+
\alpha\Phi^\top D\Phi
\left(
(\Phi^\top D\Phi)^{-1}\Phi^\top D
\left(R+\gamma P V_{\theta_k}\right)
-\theta_k
\right),
\]
and the two iterations coincide only when $\alpha\Phi^\top D\Phi=I$. Hence the DLQL
family $\mathcal A^{\mathrm{DLQL}}$ and the PQVI family
$\mathcal A^{\mathrm{PQVI}}$ require separate stability certificates.

\section{Hard-Target Period Maps Used by \texorpdfstring{$\lambda$}{lambda}-DLQL}
\label{sec:hard_target_period_maps_for_lambda}

The $\lambda$-DLQL construction presented in this paper averages over hard-target periods, so we first
summarize the hard-target period-map results of~\citet{lee2026targetupdates} in
the form needed for the geometric averaging below. The statements in this section
are recalled from~\citet{lee2026targetupdates}; their proofs are omitted here. The periodic hard target update used in deep Q-learning
by~\citet{MnihEtAl2015} holds the target parameter fixed for several online
updates and then copies the online parameter into the target. Here this mechanism
is used to define the boundary maps that will be geometrically averaged in the
next section.

A \emph{target boundary} is an update time at which the target parameter is reset
before a new target block begins. The \emph{boundary target} is the value of
$\bar\theta$ at that boundary. Starting from
$\theta_{t,0}:=\bar\theta_t=\theta_t$, a hard-target period of length $m$ freezes
$\bar\theta_t$ and applies
\[
\theta_{t,i+1}
=
\theta_{t,i}+
\alpha\Phi^\top D\left(R+\gamma P V_{\bar\theta_t}-\Phi\theta_{t,i}\right),
\qquad i=0,\ldots,m-1.
\]
After these $m$ online updates, the next boundary target is copied as
$\bar\theta_{t+m}=\theta_{t,m}$. Equivalently, each within-period update is a
residual-gradient step on $f(\cdot;\bar\theta_t)$ while the target argument is
frozen.

The frozen target makes the policy mode fixed throughout the period. The next
lemma states the boundary map used by \(\lambda\)-DLQL and provides the
within-period identity and its sum.
\setcounter{lemma}{1}
\begin{lemma}
\label{lem:hard_period_boundary_map}
If a hard-target period of length \(m\) starts from \(\bar\theta_t=\theta_t\) and
freezes \(\bar\theta_t\), then there exists a stochastic policy
\(\mu_t=\mu_{\bar\theta_t,\theta^\star}\), fixed throughout the period, such that,
after the hard copy \(\bar\theta_{t+m}=\theta_{t+m}\),
\begin{align}
\theta_{t+m}-\theta^\star=A_{\mu_t,m}^{\mathrm{DLQL}}(\theta_t-\theta^\star),
\qquad
\bar\theta_{t+m}-\theta^\star=\theta_{t+m}-\theta^\star,
\label{eq:hard_stochastic_period_map}
\end{align}
where, for a stochastic policy \(\mu\) with \(\mu(s)\in\Delta_{|\mathcal A|}\) and period
length \(m\geq1\),
\begin{align}
A_{\mu,m}^{\mathrm{DLQL}}
:={}&(I-\alpha\Phi^\top D\Phi)^m +\sum_{i=0}^{m-1}(I-\alpha\Phi^\top D\Phi)^i
\alpha\gamma\Phi^\top DP\Pi^\mu\Phi.
\label{eq:m_dlql_A_mu_m_definition}
\end{align}
\end{lemma}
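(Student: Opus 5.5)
Write $M:=I-\alpha\Phi^\top D\Phi$ and $c_{\bar\theta}:=\alpha\Phi^\top D(R+\gamma P V_{\bar\theta})$, and unroll the within-period recursion with the target frozen. Each online step is
\[
\theta_{t,i+1}=M\theta_{t,i}+c_{\bar\theta_t},
\]
an affine map with constant forcing term, because $V_{\bar\theta_t}$ does not change inside the period. Induction on $i$ gives
\[
\theta_{t,m}=M^m\theta_t+\sum_{i=0}^{m-1}M^i c_{\bar\theta_t}.
\]
Next use the fixed point. By \Cref{lem:prelim_pbe_residual_equivalence}, $g(\theta^\star)=0$, so $\theta^\star=M\theta^\star+c_{\theta^\star}$. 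Iterating this identity $m$ times gives the same formula with $\theta^\star$ in both the initial condition and the target:
\[
\theta^\star=M^m\theta^\star+\sum_{i=0}^{m-1}M^i c_{\theta^\star}.
\]
Subtracting the two displays yields
\[
\theta_{t,m}-\theta^\star=M^m(\theta_t-\theta^\star)+\sum_{i=0}^{m-1}M^i\,\alpha\gamma\Phi^\top DP\,(V_{\bar\theta_t}-V_{\theta^\star}).
\]
The value difference is linearized by \Cref{lem:stochastic_policy_linearization} applied to the pair $(\bar\theta_t,\theta^\star)$. This gives a stochastic policy $\mu_t=\mu_{\bar\theta_t,\theta^\star}$ with
\[
V_{\bar\theta_t}-V_{\theta^\star}=\Pi^{\mu_t}\Phi(\bar\theta_t-\theta^\star).
\]
The policy $\mu_t$ depends only on $\bar\theta_t$ and $\theta^\star$, so it is the same for every step of the period. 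Since $\bar\theta_t=\theta_t$, the bracketed term equals $\Pi^{\mu_t}\Phi(\theta_t-\theta^\star)$. Collecting terms gives exactly $A^{\mathrm{DLQL}}_{\mu_t,m}(\theta_t-\theta^\star)$ as in \Cref{eq:m_dlql_A_mu_m_definition}.

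Finally, identify $\theta_{t+m}:=\theta_{t,m}$. The hard copy $\bar\theta_{t+m}=\theta_{t,m}$ then gives the second identity in \Cref{eq:hard_stochastic_period_map} immediately.

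There is no genuine obstacle; this is bookkeeping. The points needing care are that $\mu_t$ is chosen once per period, which works because the target is frozen, and that the fixed point is expanded over the same $m$ steps, so the affine terms $\alpha\Phi^\top DR$ cancel exactly. As a consistency check, $m=1$ recovers $A^{\mathrm{DLQL}}_\mu$ from \Cref{lem:direct_stochastic_policy_error_recursion}.
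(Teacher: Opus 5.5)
Your proof is correct and follows the argument the paper relies on. The paper omits this proof and cites \citet{lee2026targetupdates}, but its own proof of \Cref{prop:recursive_lambda_target} unrolls the frozen-target affine map in the same form you use, and \Cref{prop:recursive_lambda_inverse_error_switching} likewise subtracts the fixed-point residual $g(\theta^\star)=0$ and then applies \Cref{lem:stochastic_policy_linearization} once to the frozen pair $(\bar\theta_t,\theta^\star)$.
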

For deterministic policies, the same formula gives the $m$-DLQL modes
$A_{\pi,m}^{\mathrm{DLQL}}$. The corresponding finite switching family is
\begin{align}
\mathcal A_m^{\mathrm{DLQL}}
:=
\set{A_{\pi,m}^{\mathrm{DLQL}}:\pi\in\Theta}.
\label{eq:m_dlql_family}
\end{align}
These modes appear in the geometric average. Before taking the endpoint limit, we
present the algebraic decomposition of each finite hard-period map.
\setcounter{lemma}{3}
\begin{lemma}
\label{lem:period_map_decomposition}
For every stochastic policy \(\mu\) and every integer \(m\geq1\),
\begin{align}
A_{\mu,m}^{\mathrm{DLQL}}
&=(I-\alpha\Phi^\top D\Phi)^m
+\left[I-(I-\alpha\Phi^\top D\Phi)^m\right]A_\mu^{\mathrm{PQVI}} \nonumber\\
&=A_\mu^{\mathrm{PQVI}}+(I-\alpha\Phi^\top D\Phi)^m(I-A_\mu^{\mathrm{PQVI}}),
\label{eq:m_dlql_converges_to_pqvi}
\end{align}
where \(A_\mu^{\mathrm{PQVI}}=(\Phi^\top D\Phi)^{-1}\gamma\Phi^\top DP\Pi^\mu\Phi\).
In particular, for a deterministic policy \(\pi\), the same identity holds with
\(\mu=\pi\), and \(A_{\pi,1}^{\mathrm{DLQL}}=A_\pi^{\mathrm{DLQL}}\).
\end{lemma}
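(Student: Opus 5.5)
The plan is a direct algebraic computation built on a telescoping geometric-sum identity. Abbreviate \(G:=\Phi^\top D\Phi\), which is invertible by \Cref{ass:feature_sampling}, and \(M:=I-\alpha G\). The key observation is that the coupling term in \Cref{eq:m_dlql_A_mu_m_definition} can be rewritten through the PQVI mode:
\[
\alpha\gamma\Phi^\top DP\Pi^\mu\Phi=\alpha G\,G^{-1}\gamma\Phi^\top DP\Pi^\mu\Phi=\alpha G A_\mu^{\mathrm{PQVI}},
\]
using the definition \Cref{eq:pqvi_A_mu_definition}.

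\textbf{Step 1: the first equality.} Since \(\alpha G=I-M\), and \(M\) commutes with \(I-M\), the finite geometric sum telescopes:
\[
\sum_{i=0}^{m-1}M^i\,\alpha G=\sum_{i=0}^{m-1}M^i(I-M)=I-M^m.
\]
No invertibility of \(I-M\) is needed, so this holds for any \(\alpha\). Substituting into \Cref{eq:m_dlql_A_mu_m_definition} gives
\[
A_{\mu,m}^{\mathrm{DLQL}}=M^m+(I-M^m)A_\mu^{\mathrm{PQVI}},
\]
which is the first line of \Cref{eq:m_dlql_converges_to_pqvi}.

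\textbf{Step 2: the second equality.} Expand the first line and regroup:
\[
M^m+A_\mu^{\mathrm{PQVI}}-M^mA_\mu^{\mathrm{PQVI}}=A_\mu^{\mathrm{PQVI}}+M^m\bigl(I-A_\mu^{\mathrm{PQVI}}\bigr).
\]

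\textbf{Step 3: the special cases.} Nothing in Steps 1 and 2 uses anything beyond \(\mu(s)\in\Delta_{|\mathcal A|}\) entering linearly through \(\Pi^\mu\), so the identity holds verbatim for a deterministic \(\pi\) viewed as a one-hot stochastic policy. For \(m=1\), the sum in \Cref{eq:m_dlql_A_mu_m_definition} reduces to the single \(i=0\) term, giving
\[
A_{\pi,1}^{\mathrm{DLQL}}=I-\alpha G+\alpha\gamma\Phi^\top DP\Pi^\pi\Phi,
\]
which is exactly \Cref{eq:A_mu_definition}.

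The only point needing care is the telescoping step: it relies on the commutation of \(M^i\) with \(\alpha G\), which is immediate because \(M\) is a polynomial in \(G\). Everything else is routine algebra.
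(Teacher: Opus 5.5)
Your proof is correct and is essentially the finite telescoping argument that the paper cites from its predecessor without repeating. You rewrite the coupling term as $\alpha\Phi^\top D\Phi\,A_\mu^{\mathrm{PQVI}}$ and use $\sum_{i=0}^{m-1}M^i(I-M)=I-M^m$, which gives both lines and the $m=1$ case. One small point: that telescoping needs no commutation, since $M^i(I-M)=M^i-M^{i+1}$ directly, so your closing caveat is unnecessary.
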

The proof is given by the hard-period decomposition in~\citet{lee2026targetupdates};
we use the statement here without repeating the finite telescoping argument.
The decomposition separates the PQVI component from the residual power of
\(I-\alpha\Phi^\top D\Phi\). We next use it to state the period-to-PQVI
endpoint convergence and the corresponding JSR convergence of the finite
switching families, which are used later in the \(\lambda\uparrow1\) arguments.
\setcounter{lemma}{2}
\begin{lemma}
\label{lem:hard_period_modes_converge_to_pqvi}
\label{lem:hard_period_spectral_radii_converge}
If \(\rho(I-\alpha\Phi^\top D\Phi)<1\), then for every deterministic policy
\(\pi\),
\[
\lim_{m\to\infty}A_{\pi,m}^{\mathrm{DLQL}}=A_\pi^{\mathrm{PQVI}}.
\]
Moreover, the switching families satisfy the JSR limit
\[
\lim_{m\to\infty}\rho\bigl(\mathcal A_m^{\mathrm{DLQL}}\bigr)
=
\rho\bigl(\mathcal A^{\mathrm{PQVI}}\bigr).
\]
\end{lemma}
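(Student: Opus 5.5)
We argue in two steps: first the pointwise mode limit, then the JSR limit.

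\emph{Mode limit.} Write \(K:=I-\alpha\Phi^\top D\Phi\). By \Cref{lem:period_map_decomposition}, for each deterministic \(\pi\),
\[
A_{\pi,m}^{\mathrm{DLQL}}-A_\pi^{\mathrm{PQVI}}=K^m(I-A_\pi^{\mathrm{PQVI}}).
\]
Since \(\rho(K)<1\), Gelfand's formula gives \(K^m\to0\). More quantitatively, fix \(\delta>0\) with \(\rho(K)+\delta<1\). Then there is \(c_\delta\) such that \(\|K^m\|\le c_\delta(\rho(K)+\delta)^m\). Because \(\Theta\) is finite, set \(M:=\max_\pi\|I-A_\pi^{\mathrm{PQVI}}\|\). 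This yields
\[
\epsilon_m:=\max_{\pi\in\Theta}\bigl\|A_{\pi,m}^{\mathrm{DLQL}}-A_\pi^{\mathrm{PQVI}}\bigr\|\le c_\delta M(\rho(K)+\delta)^m\to0,
\]
and in particular each mode converges.

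\emph{JSR limit.} Here we use continuity of the JSR with respect to finite families in the Hausdorff metric, which is a standard fact (see \citet{jungers2009joint}). The families \(\mathcal A_m^{\mathrm{DLQL}}\) and \(\mathcal A^{\mathrm{PQVI}}\) are indexed by the same finite set \(\Theta\). Their Hausdorff distance is therefore at most \(\epsilon_m\), which tends to \(0\), and the limit follows.

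If a self-contained argument is preferred, I would prove the two semicontinuity inequalities directly.

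\emph{Upper semicontinuity.} Let \(\rho^\star:=\rho(\mathcal A^{\mathrm{PQVI}})\) and fix \(\eta>0\). The JSR is independent of the chosen norm, and there is an extremal-type norm \(\|\cdot\|_\eta\) with \(\|A\|_\eta\le\rho^\star+\eta\) for all \(A\in\mathcal A^{\mathrm{PQVI}}\). This is the construction underlying \Cref{lem:common_lyapunov_construction}, which applies even when \(\rho^\star\ge1\) after rescaling the family. Then
\[
\|A_{\pi,m}^{\mathrm{DLQL}}\|_\eta\le\rho^\star+\eta+C\epsilon_m,
\]
where \(C\) comes from norm equivalence. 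Since the JSR is at most the maximal norm, \(\limsup_m\rho(\mathcal A_m^{\mathrm{DLQL}})\le\rho^\star+\eta\).

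\emph{Lower semicontinuity.} Use \(\rho(\mathcal H)=\sup_k\max_{\text{products of length }k}\rho(A_{i_k}\cdots A_{i_1})^{1/k}\), the Berger--Wang formula for finite families. Choose a length \(k\) and a product \(\Pi^{\mathrm{PQVI}}\) of \(\mathcal A^{\mathrm{PQVI}}\) modes whose spectral radius to the power \(1/k\) exceeds \(\rho^\star-\eta\). The corresponding product of the \(m\)-DLQL modes with the same policies converges to \(\Pi^{\mathrm{PQVI}}\) as \(m\to\infty\), because matrix multiplication is continuous. Its spectral radius then converges as well, because eigenvalues depend continuously on the matrix entries. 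Hence \(\liminf_m\rho(\mathcal A_m^{\mathrm{DLQL}})\ge\rho^\star-\eta\). Letting \(\eta\downarrow0\) gives the equality.

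The main obstacle is the lower bound, which relies on the Berger--Wang characterization of the JSR via spectral radii of products. If citing it is undesirable, the continuity of the JSR from \citet{jungers2009joint} can simply be invoked instead. The upper bound is routine once the extremal norm is available.
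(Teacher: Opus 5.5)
Your proposal is correct. It follows the same route the paper uses: the paper recalls this lemma from \citet{lee2026targetupdates} without proof, but runs the identical argument in the proof of \Cref{lem:lambda_target_endpoint_maps}, where the decomposition $A_{\pi,m}^{\mathrm{DLQL}}-A_\pi^{\mathrm{PQVI}}=(I-\alpha\Phi^\top D\Phi)^m(I-A_\pi^{\mathrm{PQVI}})$ from \Cref{lem:period_map_decomposition} gives mode-wise convergence and finite-family JSR continuity \citep{heilstrang1995continuity,jungers2009joint} gives the JSR limit. Your optional self-contained semicontinuity argument (an extremal norm for the upper bound, Berger--Wang for the lower bound) is a valid unpacking of that cited continuity result rather than a different approach.
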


The remaining step-size condition guarantees the residual decay used in the
preceding endpoint lemma and will be used throughout the \(\lambda\)-averaged
analysis. We present it separately because it is the only relaxation step-size
condition needed for the hard-target period maps.
\setcounter{lemma}{4}
\begin{lemma}
\label{lem:relaxation_stepsize_condition}
The matrix \(I-\alpha\Phi^\top D\Phi\) satisfies
\begin{align}
\rho(I-\alpha\Phi^\top D\Phi)<1
\quad\Longleftrightarrow\quad
0<\alpha<\frac{2}{\lambda_{\max}(\Phi^\top D\Phi)}.
\label{eq:relaxation_stepsize_condition}
\end{align}
\end{lemma}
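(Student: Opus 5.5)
The plan is to reduce the statement to an eigenvalue computation for a symmetric positive definite matrix. Set \(G:=\Phi^\top D\Phi\). By \Cref{ass:feature_sampling}, \(G\succ0\). Hence there is an orthogonal \(U\) with \(G=U\Lambda U^\top\), where \(\Lambda=\diag(\lambda_1,\ldots,\lambda_m)\) and every \(\lambda_i>0\). Then
\[
I-\alpha G=U(I-\alpha\Lambda)U^\top ,
\]
so \(I-\alpha G\) is symmetric and its eigenvalues are exactly \(1-\alpha\lambda_i\). Consequently
\[
\rho(I-\alpha G)=\max_i|1-\alpha\lambda_i| .
\]
The condition \(\rho(I-\alpha G)<1\) is therefore equivalent to \(|1-\alpha\lambda_i|<1\) holding for every \(i\).

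Next I unpack this scalar inequality. For each \(i\), the condition \(|1-\alpha\lambda_i|<1\) means \(-1<1-\alpha\lambda_i<1\). Since \(\lambda_i>0\), this is the same as \(0<\alpha\lambda_i<2\), that is, \(0<\alpha<2/\lambda_i\).

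Requiring this for all \(i\) gives \(0<\alpha<\min_i 2/\lambda_i\). Because all \(\lambda_i>0\), this minimum is \(2/\lambda_{\max}(G)\).

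Both directions are covered by the chain of equivalences above. For the forward implication, note that if \(\alpha\le0\), then \(1-\alpha\lambda_i\ge1\), so the spectral radius is at least one. This is why positivity of \(\alpha\) is forced rather than assumed.

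There is no real obstacle here. The only point needing care is to use the positivity of every eigenvalue of \(G\), which comes from \Cref{ass:feature_sampling}. Positivity is what makes the lower bound \(\alpha>0\) necessary, and it is what turns the minimum over \(i\) into the bound \(2/\lambda_{\max}\). I would also remark that, by the symmetry of \(G\), the spectral norm of \(I-\alpha G\) equals its spectral radius. This justifies later uses of the geometric decay of \((I-\alpha G)^m\).
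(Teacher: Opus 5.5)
Your proof is correct: diagonalizing the symmetric positive definite matrix $\Phi^\top D\Phi$ reduces $\rho(I-\alpha\Phi^\top D\Phi)<1$ to the scalar conditions $-1<1-\alpha\lambda_i<1$. You also rightly note that positivity of every eigenvalue (from \Cref{ass:feature_sampling}) is what makes both directions of the equivalence hold. The paper omits this proof and recalls the lemma from \citet{lee2026targetupdates}, but your argument is the same eigenvalue reasoning it gives for \Cref{lem:richardson_convergence_condition}, in two-sided form with $M=\Phi^\top D\Phi$ and $\eta=\alpha$.
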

We use this range as the standing step-size condition for the hard-target period
maps and the period-averaged analysis below. The following assumption keeps this
condition available without repeating the eigenvalue bound in every statement.
\begin{assumption}
\label{ass:hard_target_stepsize}
The step-size satisfies~\Cref{eq:relaxation_stepsize_condition}, namely \(0<\alpha<\frac{2}{\lambda_{\max}(\Phi^\top D\Phi)}\).
\end{assumption}

\section{\texorpdfstring{$\lambda$}{lambda}-DLQL: Geometrically Averaged Hard-Target Periods}
\label{sec:geometrically_averaged_period_targets}

The hard-target period maps above are indexed by one integer target-copy period
$m\in\{1,2,\ldots\}$. A continuous alternative is to average the candidate
targets obtained at the next target boundary. This use of a $\lambda$ parameter is
similar in spirit to TD($\lambda$) and Q($\lambda$): those methods use eligibility
traces or $\lambda$-weighted multi-step targets to interpolate between one-step
and longer-horizon updates~\citep{sutton1988learning,watkins1989learning,pengwilliams1996incremental,sutton1998reinforcement}.
We call the resulting TD($\lambda$)-style target-period extension
\emph{$\lambda$-DLQL}. This is not an $n$-step reward return and does not use
eligibility traces; the geometric weighting is applied to hard-target boundary
maps. It is a $\lambda$-discounted average over the $m$-DLQL matrices
$A_{\pi,1}^{\mathrm{DLQL}},A_{\pi,2}^{\mathrm{DLQL}},\ldots$ that map one target-boundary error to
the next. This continuous period parameterization is summarized in~\Cref{fig:period_lambda_interpolation}.
\begin{figure}[H]
\centering
\begin{tikzpicture}[>=Stealth, line cap=round, line join=round]
  \draw[line width=1.35pt] (0,0) -- (10,0);
  \fill[red] (0,0) circle (2.8pt);
  \fill (5,0) circle (2.8pt);
  \fill[blue] (10,0) circle (2.8pt);

  \node[align=center, anchor=south west] at (-0.75,0.35)
    {DLQL};
  \node[align=center, anchor=south east] at (10.75,0.35)
    {PQVI};

  \draw[thick, ->] (5,1.18) -- (5,0.13);
  \node[align=center, anchor=south] at (5,1.18) {$\lambda$-DLQL};

  \node[anchor=north] at (0,-0.30) {$\lambda=0$};
  \node[anchor=north] at (5,-0.30) {$\lambda$};
  \node[anchor=north] at (10,-0.30) {$\lambda=1$};

  \draw[thick, ->] (3.65,-1.08) -- (6.35,-1.08);
  \node[anchor=north] at (5,-1.20) {$\lambda\to1$};
\end{tikzpicture}
\caption{$\lambda$-DLQL target parameterization of the hard-target endpoints.
The parameter $\lambda=0$ recovers the period-one DLQL boundary update, while the
continuous endpoint $\lambda\to1$ corresponds to the infinite-period PQVI limit.}
\label{fig:period_lambda_interpolation}
\end{figure}
The interpolation in~\Cref{fig:period_lambda_interpolation} uses $\lambda$ to index the averaged period target: increasing $\lambda$ shifts the average toward longer hard-target periods, and the continuous endpoint is the PQVI limit.
At target boundary $n$, fix the boundary target $\bar\theta_n$. During one
boundary update, also fix the deterministic greedy mode $\pi_n$ selected by the
frozen target, so that
\[
\pi_n(s)\in\arg\max_{a\in\mathcal A}\phi(s,a)^\top\bar\theta_n,
\qquad s\in\mathcal S,
\]
with fixed deterministic tie breaking. Define the frozen-target one-step online
map
\begin{equation*}
h_{\pi_n}(\theta;\bar\theta_n)
:=
\theta+
\alpha\Phi^\top D
\left(R+\gamma P\Pi^{\pi_n}\Phi\bar\theta_n-\Phi\theta\right),
\end{equation*}
where $\pi_n$ is the deterministic mode selected by
$V_{\bar\theta_n}$ on the sampled next states. Equivalently, it can be written as
\begin{equation*}
h_{\pi_n}(\theta;\bar\theta_n)
=
(I-\alpha\Phi^\top D\Phi)\theta+
\alpha\left(\Phi^\top D R+\gamma\Phi^\top DP\Pi^{\pi_n}\Phi\bar\theta_n\right).
\end{equation*}
For this fixed mode, the period-$m$ candidate boundary target is obtained by
applying the frozen-target map $m$ times:
\begin{align}
h_{\pi_n}^{m}(\bar\theta_n;\bar\theta_n).
\label{eq:period_m_as_frozen_iterate}
\end{align}
The corresponding Bellman operator maps the boundary target $\bar\theta_n$ to the
geometrically averaged next boundary target
\begin{align}
\bar\theta_{n+1}
:=(1-\lambda)\sum_{m=1}^{\infty}\lambda^{m-1}
h_{\pi_n}^{m}(\bar\theta_n;\bar\theta_n).
\label{eq:lambda_averaged_target_parameter}
\end{align}
The definition in~\Cref{eq:lambda_averaged_target_parameter} is a conceptual
value-iteration description of the operator, not a practical literal
implementation. Directly carrying it out would require simulating infinitely
many candidate periods and averaging their boundary outputs, so the closed forms
and recursive solver below are the implementable representations.
The weights in~\Cref{eq:lambda_averaged_target_parameter} are the probabilities
of a geometric random variable on $\{1,2,\ldots\}$:
\[
\mathbb P(M=m)=(1-\lambda)\lambda^{m-1},
\qquad
\mathbb E[M]=\frac{1}{1-\lambda}.
\]
Thus $\lambda=0$ puts all mass on period one, while
$\lambda\uparrow1$ sends the effective period to infinity. The endpoint
$\lambda=1$ is understood as a continuous $\lambda\uparrow1$ limit, not by direct substitution into the
infinite weighted sum.
The next lemma turns this weighted period construction into the corresponding
switching error model.
\begin{lemma}
\label{lem:period_lambda_switched_boundary_model}
The period-$\lambda$ boundary errors are represented by the switching system
\[
\bar\theta_{n+1}-\theta^\star
=A_{\mu_n,\lambda}^{\mathrm{DLQL}}(\bar\theta_n-\theta^\star),
\]
where $\mu_n=\mu_{\bar\theta_n,\theta^\star}$ is a stochastic policy satisfying
\[
V_{\bar\theta_n}-V_{\theta^\star}
=\Pi^{\mu_n}\Phi(\bar\theta_n-\theta^\star).
\]
Here $\bar\theta_n$ denotes the boundary target at boundary $n$, namely
the target parameter used as the frozen target for that boundary update. The map
in this recursion is defined, for a stochastic policy $\mu$ with
$\mu(s)\in\Delta_{|\mathcal A|}$, by
\begin{align}
A_{\mu,\lambda}^{\mathrm{DLQL}}
:=(1-\lambda)
\sum_{m=1}^{\infty}\lambda^{m-1}A_{\mu,m}^{\mathrm{DLQL}},
\qquad 0\leq\lambda<1.
\label{eq:lambda_dlql_A_pi_lambda_definition}
\end{align}
For deterministic policies, we write $A_{\pi,\lambda}^{\mathrm{DLQL}}$. The associated
finite switched family is
\begin{align}
\mathcal A_\lambda^{\mathrm{DLQL}}
:=\set{A_{\pi,\lambda}^{\mathrm{DLQL}}:\pi\in\Theta}.
\label{eq:lambda_dlql_family_definition}
\end{align}
\end{lemma}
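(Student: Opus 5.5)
We prove the representation by applying the hard-period boundary identity of \Cref{lem:hard_period_boundary_map} to each candidate period $m$ separately. The next step is to check that the $\lambda$-weighted series converges. We then average the resulting error identities with the geometric weights.

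\textbf{Step 1: each candidate output is an error map applied to $\bar\theta_n-\theta^\star$.} Fix the boundary $n$ and write $K:=I-\alpha\Phi^\top D\Phi$. Because the greedy mode $\pi_n$ is frozen, we have $\Pi^{\pi_n}\Phi\bar\theta_n=V_{\bar\theta_n}$. Hence $h_{\pi_n}(\cdot;\bar\theta_n)$ is exactly the within-period update of \Cref{sec:hard_target_period_maps_for_lambda} started from $\theta_{t,0}=\bar\theta_n$.

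The fixed point satisfies $g(\theta^\star)=0$ by \Cref{lem:prelim_pbe_residual_equivalence}, so $\theta^\star=K\theta^\star+\alpha\Phi^\top D(R+\gamma PV_{\theta^\star})$. Subtracting this from $h_{\pi_n}$ gives
\[
h_{\pi_n}(\theta;\bar\theta_n)-\theta^\star=K(\theta-\theta^\star)+\alpha\gamma\Phi^\top DP\,(V_{\bar\theta_n}-V_{\theta^\star}).
\]
By \Cref{lem:stochastic_policy_linearization}, the last factor equals $\Pi^{\mu_n}\Phi(\bar\theta_n-\theta^\star)$ with $\mu_n=\mu_{\bar\theta_n,\theta^\star}$. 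This policy depends only on $\bar\theta_n$, so it is the same for every $m$.

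Iterating $m$ times from $\theta=\bar\theta_n$ and unrolling the linear recursion gives
\[
h^m_{\pi_n}(\bar\theta_n;\bar\theta_n)-\theta^\star=A^{\mathrm{DLQL}}_{\mu_n,m}(\bar\theta_n-\theta^\star),
\]
which is \Cref{eq:hard_stochastic_period_map} with the formula \Cref{eq:m_dlql_A_mu_m_definition}. In short, this is \Cref{lem:hard_period_boundary_map} applied with $\bar\theta_t=\bar\theta_n$.

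\textbf{Step 2: the series defining $A^{\mathrm{DLQL}}_{\mu,\lambda}$ converges (the main point needing care).} By \Cref{lem:period_map_decomposition},
\[
A^{\mathrm{DLQL}}_{\mu,m}=A^{\mathrm{PQVI}}_\mu+K^m(I-A^{\mathrm{PQVI}}_\mu).
\]
Under \Cref{ass:hard_target_stepsize} and \Cref{lem:relaxation_stepsize_condition} we have $\rho(K)<1$, so $\|K^m\|\le c\,r^m$ for some $c>0$ and $r<1$. Therefore $\sup_m\|A^{\mathrm{DLQL}}_{\mu,m}\|<\infty$. With $0\le\lambda<1$, the series in \Cref{eq:lambda_dlql_A_pi_lambda_definition} then converges absolutely, since it is dominated by a geometric series.

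The same bound, applied to the affine vectors $h^m_{\pi_n}(\bar\theta_n;\bar\theta_n)$, shows that \Cref{eq:lambda_averaged_target_parameter} is well defined. Alternatively, this follows from Step 1 once the error terms are shown to be bounded. Explicitly, \Cref{lem:geometric_resolvent_identity} gives
\[
A^{\mathrm{DLQL}}_{\mu,\lambda}=A^{\mathrm{PQVI}}_\mu+(1-\lambda)K(I-\lambda K)^{-1}(I-A^{\mathrm{PQVI}}_\mu),
\]
although this closed form is not needed for the claim.

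\textbf{Step 3: average the identities.} The weights $(1-\lambda)\lambda^{m-1}$ sum to one, so we can write $\theta^\star=(1-\lambda)\sum_m\lambda^{m-1}\theta^\star$. Subtracting this from \Cref{eq:lambda_averaged_target_parameter} and inserting Step 1 termwise gives
\[
\bar\theta_{n+1}-\theta^\star=(1-\lambda)\sum_{m\ge1}\lambda^{m-1}A^{\mathrm{DLQL}}_{\mu_n,m}(\bar\theta_n-\theta^\star).
\]
Absolute convergence (Step 2) lets us pull the fixed vector $\bar\theta_n-\theta^\star$ outside the sum, which yields $A^{\mathrm{DLQL}}_{\mu_n,\lambda}(\bar\theta_n-\theta^\star)$.

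The definitions of $A^{\mathrm{DLQL}}_{\pi,\lambda}$ and of $\mathcal A_\lambda^{\mathrm{DLQL}}$ are then the specializations to deterministic $\mu=\pi$. The only subtle point is that the mixing policy $\mu_n$ must be independent of $m$, and it is, because it is determined by the frozen target $\bar\theta_n$ alone.
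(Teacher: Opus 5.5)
Your proposal is correct and follows the paper's proof: you apply \Cref{lem:hard_period_boundary_map} for each period $m$ with the single, $m$-independent policy $\mu_n=\mu_{\bar\theta_n,\theta^\star}$, then average the candidate boundary errors with the geometric weights. Your Step~2 checks absolute convergence of the series using $\rho(I-\alpha\Phi^\top D\Phi)<1$ under the standing step-size assumption. The paper's proof leaves this implicit here and verifies it only later, in \Cref{prop:recursive_lambda_target}, so it is a useful addition rather than a different route.
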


\begin{proof}
For each period length \(m\),~\Cref{lem:hard_period_boundary_map} gives the
period-\(m\) boundary error candidate
\(A_{\mu_n,m}^{\mathrm{DLQL}}(\bar\theta_n-\theta^\star)\), where the stochastic
policy \(\mu_n=\mu_{\bar\theta_n,\theta^\star}\) is supplied by~\Cref{lem:stochastic_policy_linearization}. Averaging these candidate boundary
errors with the same geometric weights used in~\Cref{eq:lambda_averaged_target_parameter} gives
\[
(1-\lambda)\sum_{m=1}^{\infty}\lambda^{m-1}A_{\mu_n,m}^{\mathrm{DLQL}}(\bar\theta_n-\theta^\star)
=A_{\mu_n,\lambda}^{\mathrm{DLQL}}(\bar\theta_n-\theta^\star).
\]
This is the displayed boundary-error recursion. For deterministic policies, the
finite policy set \(\Theta\) gives the associated switched family in~\Cref{eq:lambda_dlql_family_definition}.
\end{proof}

The following elementary commutation fact will be used to rearrange the
resolvent factors.
\begin{lemma}
\label{lem:target_factor_commutation}
For every \(0\leq\lambda\leq1\), the matrices
\(I-\alpha\Phi^\top D\Phi\) and
\(I-\lambda(I-\alpha\Phi^\top D\Phi)\) commute. If
\(I-\lambda(I-\alpha\Phi^\top D\Phi)\) is invertible, then
\(I-\alpha\Phi^\top D\Phi\) also commutes with
\(\left[I-\lambda(I-\alpha\Phi^\top D\Phi)\right]^{-1}\).
\end{lemma}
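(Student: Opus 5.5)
Write \(T:=I-\alpha\Phi^\top D\Phi\) and \(S_\lambda:=I-\lambda T=(1-\lambda)I+\lambda\alpha\Phi^\top D\Phi\). The argument is pure polynomial algebra in the single matrix \(T\), and no spectral information is needed.

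For the first claim, I would expand the products directly:
\[
TS_\lambda=T-\lambda T^2=S_\lambda T .
\]
Both sides are the same polynomial in \(T\), so the two matrices commute for every real \(\lambda\), in particular for \(0\le\lambda\le1\).

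For the second claim, assume \(S_\lambda\) is invertible. Multiply the identity \(TS_\lambda=S_\lambda T\) on the left and on the right by \(S_\lambda^{-1}\). This gives
\[
S_\lambda^{-1}T=TS_\lambda^{-1},
\]
which is the asserted commutation with the inverse. As an alternative, one could note that the inverse of an invertible matrix is a polynomial in that matrix (Cayley--Hamilton), so \(S_\lambda^{-1}\) is itself a polynomial in \(T\). The one-line multiplication argument is simpler, so I would use it.

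I do not expect any real obstacle here. The only point to watch is that invertibility is assumed rather than derived, since at \(\lambda=1\) the matrix \(S_1=\alpha\Phi^\top D\Phi\) happens to be invertible but in general the hypothesis is needed. No earlier lemma is required. A remark could point ahead to Lemma~\ref{lem:geometric_resolvent_identity} with \(A=T\), where this commutation lets the factor \((I-\lambda T)^{-1}\) be moved freely to either side of \(T\).
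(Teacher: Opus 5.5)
Your proof is correct and is the paper's argument: expand both products to get $TS_\lambda=T-\lambda T^2=S_\lambda T$, then multiply on the left and on the right by $S_\lambda^{-1}$. One minor point is that your side remark has the invertibility picture backwards. For $0\le\lambda<1$, $(1-\lambda)I+\lambda\alpha\Phi^\top D\Phi$ is positive definite for any $\alpha>0$ and any feature matrix. Only at $\lambda=1$ does invertibility depend on the full-rank and full-support assumption, and the paper verifies this in the closed-form lemma.
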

\begin{proof}
The identity
\[
(I-\alpha\Phi^\top D\Phi)\bigl[I-\lambda(I-\alpha\Phi^\top D\Phi)\bigr]
=
\bigl[I-\lambda(I-\alpha\Phi^\top D\Phi)\bigr](I-\alpha\Phi^\top D\Phi)
\]
follows by expanding both sides. If the second factor is invertible, multiplying
this identity on the left and on the right by its inverse gives the inverse
commutation relation.
\end{proof}

The endpoint proofs below use the following closed form, obtained by summing the
geometric matrix series in~\Cref{eq:lambda_dlql_A_pi_lambda_definition}.
\begin{lemma}
\label{lem:geometric_averaged_closed_form}
\label{lem:period_lambda_correction_matrix_invertible}
Under~\Cref{ass:hard_target_stepsize}, for \(0\leq\lambda<1\) and any stochastic
policy \(\mu\),
\begin{align}
A_{\mu,\lambda}^{\mathrm{DLQL}}
={}&(1-\lambda)(I-\alpha\Phi^\top D\Phi)
\left[I-\lambda(I-\alpha\Phi^\top D\Phi)\right]^{-1} \nonumber\\
&+\left(I-(1-\lambda)(I-\alpha\Phi^\top D\Phi)
\left[I-\lambda(I-\alpha\Phi^\top D\Phi)\right]^{-1}\right)A_\mu^{\mathrm{PQVI}}.
\label{eq:lambda_dlql_closed_form_A_pqvi}
\end{align}
Equivalently,
\begin{align}
A_{\mu,\lambda}^{\mathrm{DLQL}}
={}&(1-\lambda)(I-\alpha\Phi^\top D\Phi)
\left[I-\lambda(I-\alpha\Phi^\top D\Phi)\right]^{-1} \nonumber\\
&+\left[I-\lambda(I-\alpha\Phi^\top D\Phi)\right]^{-1}
\alpha\gamma\Phi^\top DP\Pi^\mu\Phi.
\label{eq:lambda_dlql_closed_form_B}
\end{align}
Moreover, for every \(0\leq\lambda\leq1\), the matrix
\[
I-\lambda(I-\alpha\Phi^\top D\Phi)=(1-\lambda)I+\lambda\alpha\Phi^\top D\Phi
\]
is symmetric positive definite, and in particular invertible.
\end{lemma}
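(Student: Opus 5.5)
Write \(T:=I-\alpha\Phi^\top D\Phi\) and \(G:=\Phi^\top D\Phi\succ0\). We first settle the positive-definiteness claim, since it is needed to make sense of the inverses. The matrix \(I-\lambda T=(1-\lambda)I+\lambda\alpha G\) is symmetric because \(G\) is. For \(0\leq\lambda<1\), the term \((1-\lambda)I\) is positive definite and \(\lambda\alpha G\) is positive semidefinite, so the sum is positive definite. At \(\lambda=1\) the matrix equals \(\alpha G\), which is positive definite because \(\alpha>0\) and \(G\succ0\) by \Cref{ass:feature_sampling}. Hence \(I-\lambda T\) is invertible on all of \([0,1]\), and \Cref{lem:target_factor_commutation} lets us commute \(T\) with \((I-\lambda T)^{-1}\).

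To obtain \eqref{eq:lambda_dlql_closed_form_A_pqvi}, substitute the first form of \Cref{lem:period_map_decomposition}, \(A_{\mu,m}^{\mathrm{DLQL}}=T^m+(I-T^m)A_\mu^{\mathrm{PQVI}}\), into the definition \eqref{eq:lambda_dlql_A_pi_lambda_definition}. The series then splits into two pieces. The piece \((1-\lambda)\sum_m\lambda^{m-1}T^m\) sums to \((1-\lambda)T(I-\lambda T)^{-1}\) by \Cref{lem:geometric_resolvent_identity}. This lemma applies because \(\rho(T)<1\) under \Cref{ass:hard_target_stepsize} via \Cref{lem:relaxation_stepsize_condition}. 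The piece \((1-\lambda)\sum_m\lambda^{m-1}(I-T^m)A_\mu^{\mathrm{PQVI}}\) uses the weights \((1-\lambda)\lambda^{m-1}\), which sum to one, so it equals \(\bigl(I-(1-\lambda)T(I-\lambda T)^{-1}\bigr)A_\mu^{\mathrm{PQVI}}\). Splitting the series is legitimate because both pieces converge absolutely: the scalar weights are summable and \(\|T^m\|\) is bounded, indeed geometrically decaying.

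To pass to \eqref{eq:lambda_dlql_closed_form_B}, we simplify the coefficient of \(A_\mu^{\mathrm{PQVI}}\). Factoring out the inverse gives
\[
I-(1-\lambda)T(I-\lambda T)^{-1}=\bigl[(I-\lambda T)-(1-\lambda)T\bigr](I-\lambda T)^{-1}=(I-T)(I-\lambda T)^{-1}.
\]
Since \(I-T=\alpha G\), and \(T\) commutes with \((I-\lambda T)^{-1}\), this coefficient equals \((I-\lambda T)^{-1}\alpha G\). Multiplying by \(A_\mu^{\mathrm{PQVI}}=G^{-1}\gamma\Phi^\top DP\Pi^\mu\Phi\) cancels \(G\) and leaves \((I-\lambda T)^{-1}\alpha\gamma\Phi^\top DP\Pi^\mu\Phi\), which is the claimed form.

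The main point needing care is the order of factors: \(G^{-1}\) sits on the left of \(A_\mu^{\mathrm{PQVI}}\), so the cancellation requires placing the factor \(\alpha G\) to the immediate right of the resolvent. This is exactly why the commutation of \(T\), and hence of \(G\), with \((I-\lambda T)^{-1}\) is invoked. Everything else is routine series manipulation.
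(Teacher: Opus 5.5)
Your proposal is correct and follows essentially the same route as the paper. Both arguments run as follows: positive definiteness of $(1-\lambda)I+\lambda\alpha\Phi^\top D\Phi$ is checked directly on $[0,1]$; the hard-period decomposition of \Cref{lem:period_map_decomposition} is substituted into the geometric series, with \Cref{lem:geometric_resolvent_identity} summing the power terms; and the coefficient $(I-T)(I-\lambda T)^{-1}$ is rewritten as $(I-\lambda T)^{-1}\alpha\Phi^\top D\Phi$ via \Cref{lem:target_factor_commutation}, which cancels the inverse Gram factor in $A_\mu^{\mathrm{PQVI}}$. Your note that absolute convergence justifies splitting the series is a small point the paper leaves implicit.
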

\begin{proof}
First consider the matrix in the final statement. For any nonzero vector \(x\),
\[
x^\top\bigl[(1-\lambda)I+\lambda\alpha\Phi^\top D\Phi\bigr]x
=
(1-\lambda)\|x\|_2^2+
\lambda\alpha x^\top\Phi^\top D\Phi x.
\]
By~\Cref{ass:feature_sampling}, \(\Phi^\top D\Phi\) is symmetric positive definite.
If \(0\leq\lambda<1\), the first term is strictly positive; if \(\lambda=1\),
the second term is strictly positive because \(\alpha>0\). Hence the displayed
matrix is symmetric positive definite for every \(0\leq\lambda\leq1\), and is
therefore invertible.

By~\Cref{lem:period_map_decomposition}, for a fixed stochastic policy \(\mu\),
\[
A_{\mu,m}^{\mathrm{DLQL}}=A_\mu^{\mathrm{PQVI}}+(I-\alpha\Phi^\top D\Phi)^m(I-A_\mu^{\mathrm{PQVI}}).
\]
Therefore
\[
\begin{aligned}
A_{\mu,\lambda}^{\mathrm{DLQL}}
&=(1-\lambda)\sum_{m=1}^{\infty}\lambda^{m-1}A_{\mu,m}^{\mathrm{DLQL}} \\
&=(1-\lambda)\sum_{m=1}^{\infty}\lambda^{m-1}
(I-\alpha\Phi^\top D\Phi)^m \\
&\quad+
\left[I-(1-\lambda)\sum_{m=1}^{\infty}\lambda^{m-1}
(I-\alpha\Phi^\top D\Phi)^m\right]A_\mu^{\mathrm{PQVI}}.
\end{aligned}
\]
Applying~\Cref{lem:geometric_resolvent_identity} with
\(A=I-\alpha\Phi^\top D\Phi\) gives
\[
(1-\lambda)\sum_{m=1}^{\infty}\lambda^{m-1}
(I-\alpha\Phi^\top D\Phi)^m
=(1-\lambda)(I-\alpha\Phi^\top D\Phi)
\left[I-\lambda(I-\alpha\Phi^\top D\Phi)\right]^{-1},
\]
which proves~\Cref{eq:lambda_dlql_closed_form_A_pqvi}.

It remains to simplify the coefficient multiplying \(A_\mu^{\mathrm{PQVI}}\). Since
\(I-\lambda(I-\alpha\Phi^\top D\Phi)\) and \(I-\alpha\Phi^\top D\Phi\) commute,
we have
\[
\begin{aligned}
&I-(1-\lambda)(I-\alpha\Phi^\top D\Phi)
\left[I-\lambda(I-\alpha\Phi^\top D\Phi)\right]^{-1} \\
&\quad=
\left[I-\lambda(I-\alpha\Phi^\top D\Phi)\right]
\left[I-\lambda(I-\alpha\Phi^\top D\Phi)\right]^{-1} \\
&\qquad -(1-\lambda)(I-\alpha\Phi^\top D\Phi)
\left[I-\lambda(I-\alpha\Phi^\top D\Phi)\right]^{-1} \\
&\quad=
\left(I-\lambda(I-\alpha\Phi^\top D\Phi)
-(1-\lambda)(I-\alpha\Phi^\top D\Phi)\right)
\left[I-\lambda(I-\alpha\Phi^\top D\Phi)\right]^{-1} \\
&\quad=
\left(I-(I-\alpha\Phi^\top D\Phi)\right)
\left[I-\lambda(I-\alpha\Phi^\top D\Phi)\right]^{-1} \\
&\quad=
\alpha\Phi^\top D\Phi
\left[I-\lambda(I-\alpha\Phi^\top D\Phi)\right]^{-1} \\
&\quad=
\left[I-\lambda(I-\alpha\Phi^\top D\Phi)\right]^{-1}
\alpha\Phi^\top D\Phi.
\end{aligned}
\]
Thus the coefficient of \(A_\mu^{\mathrm{PQVI}}\) in~\Cref{eq:lambda_dlql_closed_form_A_pqvi} can also be written as
\[
I-(1-\lambda)(I-\alpha\Phi^\top D\Phi)
\left[I-\lambda(I-\alpha\Phi^\top D\Phi)\right]^{-1}
=
\left[I-\lambda(I-\alpha\Phi^\top D\Phi)\right]^{-1}
\alpha\Phi^\top D\Phi,
\]
where the last equality follows because
\(\alpha\Phi^\top D\Phi=I-(I-\alpha\Phi^\top D\Phi)\), and, under~\Cref{ass:hard_target_stepsize},
\[
\left[I-\lambda(I-\alpha\Phi^\top D\Phi)\right]^{-1}
=
\sum_{j=0}^{\infty}\lambda^j(I-\alpha\Phi^\top D\Phi)^j,
\qquad 0\leq\lambda<1.
\]
By~\Cref{lem:target_factor_commutation}, this inverse commutes with
\(I-\alpha\Phi^\top D\Phi\). Since
\(\alpha\Phi^\top D\Phi=I-(I-\alpha\Phi^\top D\Phi)\), it also
commutes with \(\alpha\Phi^\top D\Phi\).
Multiplying the displayed coefficient by \(A_\mu^{\mathrm{PQVI}}\) and using~\Cref{eq:pqvi_A_mu_definition},
\[
\alpha\Phi^\top D\Phi A_\mu^{\mathrm{PQVI}}
=
\alpha\Phi^\top D\Phi
(\Phi^\top D\Phi)^{-1}\gamma\Phi^\top DP\Pi^\mu\Phi
=
\alpha\gamma\Phi^\top DP\Pi^\mu\Phi,
\]
we obtain
\[
\left[I-\lambda(I-\alpha\Phi^\top D\Phi)\right]^{-1}
\alpha\Phi^\top D\Phi A_\mu^{\mathrm{PQVI}}
=
\left[I-\lambda(I-\alpha\Phi^\top D\Phi)\right]^{-1}
\alpha\gamma\Phi^\top DP\Pi^\mu\Phi.
\]
Substituting this identity into~\Cref{eq:lambda_dlql_closed_form_A_pqvi} gives~\Cref{eq:lambda_dlql_closed_form_B}.
\end{proof}

We call the map in~\Cref{eq:lambda_dlql_A_pi_lambda_definition} the \emph{$\lambda$-DLQL hard-target
map}. Instead of choosing one integer period $m$, the parameter $\lambda$ assigns
weights to all periods; moving $\lambda$ continuously shifts the weight from
short periods toward long periods. For the main comparison, the key consequences
are the endpoint identities and the induced JSR limits below.
\begin{lemma}
\label{lem:lambda_target_endpoint_maps}
\label{lem:lambda_target_jsr_endpoint_limits}
For every stochastic policy $\mu$,
\[
\mathop{\lim}\limits_{\lambda\downarrow0} A_{\mu,\lambda}^{\mathrm{DLQL}}
= A_\mu^{\mathrm{DLQL}},
\qquad
\mathop{\lim}\limits_{\lambda\uparrow1} A_{\mu,\lambda}^{\mathrm{DLQL}}
= A_\mu^{\mathrm{PQVI}}.
\]
Moreover,
\begin{align}
\lim_{\lambda\downarrow0}\rho(\mathcal A_\lambda^{\mathrm{DLQL}})
&=
\rho(\mathcal A^{\mathrm{DLQL}}),
\label{eq:lambda_dlql_zero_jsr_convergence}\\
\lim_{\lambda\uparrow1}\rho(\mathcal A_\lambda^{\mathrm{DLQL}})
&=
\rho(\mathcal A^{\mathrm{PQVI}}).
\label{eq:lambda_dlql_jsr_convergence}
\end{align}
\end{lemma}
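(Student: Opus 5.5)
The plan is to prove the matrix limits from the closed form in \Cref{lem:geometric_averaged_closed_form}, and then to deduce the JSR limits from continuity of the joint spectral radius on finite families.

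\textbf{Matrix limits.} By \Cref{lem:geometric_averaged_closed_form}, the matrix $M_\lambda:=I-\lambda(I-\alpha\Phi^\top D\Phi)=(1-\lambda)I+\lambda\alpha\Phi^\top D\Phi$ is symmetric positive definite for every $\lambda\in[0,1]$, including $\lambda=1$. Its entries are affine in $\lambda$, and matrix inversion is continuous on invertible matrices. Hence the right-hand side of \Cref{eq:lambda_dlql_closed_form_B},
\[
(1-\lambda)(I-\alpha\Phi^\top D\Phi)M_\lambda^{-1}+M_\lambda^{-1}\alpha\gamma\Phi^\top DP\Pi^\mu\Phi,
\]
extends continuously to the closed interval $[0,1]$. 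Both endpoint limits therefore reduce to substitution.
\begin{itemize}
\item At $\lambda=0$ we have $M_0=I$, so the expression equals $I-\alpha\Phi^\top D\Phi+\alpha\gamma\Phi^\top DP\Pi^\mu\Phi=A_\mu^{\mathrm{DLQL}}$ by \Cref{eq:A_mu_definition}. This is consistent with the series definition, which at $\lambda=0$ is just $A_{\mu,1}^{\mathrm{DLQL}}$.
\item At $\lambda=1$ the first term vanishes and $M_1^{-1}=(\alpha\Phi^\top D\Phi)^{-1}$. The second term then becomes $(\Phi^\top D\Phi)^{-1}\gamma\Phi^\top DP\Pi^\mu\Phi=A_\mu^{\mathrm{PQVI}}$.
\end{itemize}
As a cross-check for $\lambda\uparrow1$, one can combine \Cref{lem:abel_convergence_matrix_sequences} with $X_m=A_{\mu,m}^{\mathrm{DLQL}}$. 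These converge to $A_\mu^{\mathrm{PQVI}}$ by \Cref{lem:period_map_decomposition} and the fact that $(I-\alpha\Phi^\top D\Phi)^m\to0$, which holds under \Cref{ass:hard_target_stepsize}. This argument works for stochastic $\mu$ as well.

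\textbf{JSR limits.} The family $\mathcal A_\lambda^{\mathrm{DLQL}}$ is indexed by the fixed finite set $\Theta$. By the matrix limits, each member $A_{\pi,\lambda}^{\mathrm{DLQL}}$ converges to $A_\pi^{\mathrm{DLQL}}$ as $\lambda\downarrow0$ and to $A_\pi^{\mathrm{PQVI}}$ as $\lambda\uparrow1$. It therefore suffices to show that $(A_\pi)_{\pi\in\Theta}\mapsto\rho(\{A_\pi\})$ is continuous on $(\R^{m\times m})^{|\Theta|}$.

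\textbf{Continuity of the JSR.} I expect this to be the main obstacle. I would cite the standard continuity of the JSR for bounded families under the Hausdorff metric \citep{jungers2009joint}, and justify it in two halves.
\begin{itemize}
\item \emph{Upper semicontinuity.} By submultiplicativity (Fekete's lemma), the limit in \Cref{def:jsr} equals $\inf_k\max\|A_{\pi_k}\cdots A_{\pi_1}\|^{1/k}$. This is an infimum of continuous functions of the finitely many entries, hence upper semicontinuous.
\item \emph{Lower semicontinuity.} By the Berger--Wang theorem, $\rho(\mathcal H)=\sup_k\max\rho(A_{\pi_k}\cdots A_{\pi_1})^{1/k}$. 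This is a supremum of continuous functions, since the spectral radius is continuous in the matrix entries, hence lower semicontinuous.
\end{itemize}
Together these give continuity. Applying it along $\lambda\downarrow0$ and $\lambda\uparrow1$ yields \Cref{eq:lambda_dlql_zero_jsr_convergence} and \Cref{eq:lambda_dlql_jsr_convergence}.

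This mirrors the JSR limit in \Cref{lem:hard_period_spectral_radii_converge}. If that lemma's proof already packages JSR continuity, I would invoke it in the same way instead of repeating the semicontinuity argument.
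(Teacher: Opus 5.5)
Your proof is correct, and its skeleton matches the paper's. The $\lambda\downarrow0$ limit is read off from \eqref{eq:lambda_dlql_closed_form_B} exactly as in the paper. The JSR limits come from continuity of the JSR on finite families, applied mode by mode over the finite set $\Theta$.

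The differences are in the $\lambda\uparrow1$ limit and in how JSR continuity is handled. The paper obtains $A_{\mu,\lambda}^{\mathrm{DLQL}}\to A_\mu^{\mathrm{PQVI}}$ from \Cref{lem:abel_convergence_matrix_sequences} applied to the sequence $A_{\mu,m}^{\mathrm{DLQL}}\to A_\mu^{\mathrm{PQVI}}$ from \Cref{lem:period_map_decomposition}. This is the route you list only as a cross-check. It keeps the interpretation that $\lambda\uparrow1$ sends the effective hard-target period to infinity, and it does not use the closed form at all.

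Your primary argument instead uses that $(1-\lambda)I+\lambda\alpha\Phi^\top D\Phi$ remains positive definite at $\lambda=1$. Hence the closed form is continuous on the closed interval $[0,1]$, and both endpoints follow by substitution. This treats the two endpoints symmetrically. It also shows directly that the continuous extension at $\lambda=1$ is the inverse-form endpoint $A_{\mu,1}^{\mathrm{DLQL2}}=A_\mu^{\mathrm{PQVI}}$ used later in the paper.

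The step-size assumption still enters your route, just less visibly. The closed form equals the series definition on $[0,1)$ only because $\rho(I-\alpha\Phi^\top D\Phi)<1$; without it, the series can diverge for $\lambda$ near one.

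For JSR continuity, the paper simply cites Heil--Strang and Jungers. Your argument (Fekete for upper semicontinuity, Berger--Wang for lower semicontinuity) is the standard proof behind that citation. It also shows something about \Cref{thm:lambda_target_endpoint_cases}: preserving $\rho<1$ near a stable endpoint needs only the elementary upper half, while preserving $\rho>1$ near an unstable endpoint genuinely needs the Berger--Wang direction.
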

\begin{proof}
Taking \(\lambda\downarrow0\) in~\Cref{eq:lambda_dlql_closed_form_B} gives
\[
\mathop{\lim}\limits_{\lambda\downarrow0} A_{\mu,\lambda}^{\mathrm{DLQL}}
=(I-\alpha\Phi^\top D\Phi)+\alpha\Phi^\top D\Phi A_\mu^{\mathrm{PQVI}}
=A_\mu^{\mathrm{DLQL}},
\]
where the last equality is~\Cref{eq:A_mu_definition}. By~\Cref{lem:period_map_decomposition}, with \(\mu\) in place of \(\pi\),
\(A_{\mu,m}^{\mathrm{DLQL}}\to A_\mu^{\mathrm{PQVI}}\) under~\Cref{ass:hard_target_stepsize}. Applying the Abel convergence lemma~\Cref{lem:abel_convergence_matrix_sequences} to the convergent matrix sequence
\((A_{\mu,m}^{\mathrm{DLQL}})_{m\geq1}\) gives
\[
(1-\lambda)\sum_{m\geq1}\lambda^{m-1}A_{\mu,m}^{\mathrm{DLQL}}
\to A_\mu^{\mathrm{PQVI}}
\qquad\text{as }\lambda\uparrow1.
\]
Thus \(A_{\mu,\lambda}^{\mathrm{DLQL}}\to A_\mu^{\mathrm{PQVI}}\).

For the JSR limits, apply the preceding endpoint convergences to every
deterministic policy \(\pi\in\Theta\). The set \(\Theta\) is finite, so the mode
matrices in \(\mathcal A_\lambda^{\mathrm{DLQL}}\) converge uniformly, mode by
mode, to the corresponding endpoint families. The JSR is continuous for finite
matrix families with respect to such perturbations~\citep{heilstrang1995continuity,jungers2009joint}. Therefore~\Cref{eq:lambda_dlql_zero_jsr_convergence} and~\Cref{eq:lambda_dlql_jsr_convergence} follow.
\end{proof}

The same standard Lyapunov argument now applies to the \(\lambda\)-averaged
boundary recursion. We record it as a separate lemma because it is the convergence
certificate used by the endpoint cases below.
\begin{lemma}
\label{lem:lambda_averaged_boundary_convergence_certificate}
Suppose either \(0\leq\lambda<1\) and \(\rho(\mathcal A_\lambda^{\mathrm{DLQL}})<1\),
or \(\lambda=1\) and \(\rho(\mathcal A^{\mathrm{PQVI}})<1\). Then the corresponding
boundary errors satisfy \(\bar\theta_k-\theta^\star\to0\). More precisely, in the
first case, for every \(\epsilon>0\) with
\(\rho(\mathcal A_\lambda^{\mathrm{DLQL}})+\epsilon<1\), the norm
\(p_\epsilon\), constant \(C_\epsilon\), and rate
\(\beta_\epsilon=\rho(\mathcal A_\lambda^{\mathrm{DLQL}})+\epsilon\) from~\Cref{lem:common_lyapunov_construction} give
\[
p_\epsilon(\bar\theta_k-\theta^\star)
\leq
\beta_\epsilon^k p_\epsilon(\bar\theta_0-\theta^\star),
\qquad
\|\bar\theta_k-\theta^\star\|_2
\leq
\sqrt{C_\epsilon}\,\beta_\epsilon^k\|\bar\theta_0-\theta^\star\|_2.
\]
In the endpoint case \(\lambda=1\), the same estimates hold with
\(\mathcal A^{\mathrm{PQVI}}\) in place of \(\mathcal A_\lambda^{\mathrm{DLQL}}\).
\end{lemma}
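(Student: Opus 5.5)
The plan is to reduce both cases to the generic certificate of \Cref{lem:common_lyapunov_construction}. To do this, we show that the boundary errors obey a switched linear recursion whose active matrices lie in the convex hull of the relevant finite deterministic family.

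\emph{Case $0\leq\lambda<1$.} By \Cref{lem:period_lambda_switched_boundary_model}, with $z_n:=\bar\theta_n-\theta^\star$, we have $z_{n+1}=A_{\mu_n,\lambda}^{\mathrm{DLQL}}z_n$ for a stochastic policy $\mu_n$. The key step is to show that $A_{\mu,\lambda}^{\mathrm{DLQL}}\in\co(\mathcal A_\lambda^{\mathrm{DLQL}})$ for every stochastic $\mu$. I will use the closed form \Cref{eq:lambda_dlql_closed_form_B}. There the first term does not depend on $\mu$, and the second term is linear in $\Pi^\mu$.

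Next I write $\Pi^\mu$ as a convex combination of deterministic $\Pi^\pi$. Since $\mu(s)\in\Delta_{|\mathcal A|}$ independently for each state, the product weights $w_\pi:=\prod_{s}\mu(s)_{\pi(s)}$ are nonnegative and sum to one. Because row $s$ of $\Pi^\pi$ depends only on $\pi(s)$, summing gives $\sum_\pi w_\pi\Pi^\pi=\Pi^\mu$. By affinity of the closed form in $\Pi^\mu$, with the constant part reproduced because $\sum_\pi w_\pi=1$, we get $A_{\mu,\lambda}^{\mathrm{DLQL}}=\sum_\pi w_\pi A_{\pi,\lambda}^{\mathrm{DLQL}}$. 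The same conclusion can also be obtained from the series definition \Cref{eq:lambda_dlql_A_pi_lambda_definition}, since each $A_{\mu,m}^{\mathrm{DLQL}}$ is affine in $\Pi^\mu$ and the geometric series converges absolutely under \Cref{ass:hard_target_stepsize}.

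With this in hand, apply \Cref{lem:common_lyapunov_construction} to $\mathcal H=\mathcal A_\lambda^{\mathrm{DLQL}}$ and $B_n=A_{\mu_n,\lambda}^{\mathrm{DLQL}}$. This yields the stated bounds $p_\epsilon(z_k)\leq\beta_\epsilon^kp_\epsilon(z_0)$ and $\|z_k\|_2\leq\sqrt{C_\epsilon}\beta_\epsilon^k\|z_0\|_2$, and hence $z_k\to0$.

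\emph{Case $\lambda=1$.} The endpoint is interpreted, per the paper's convention and the limit in \Cref{lem:lambda_target_endpoint_maps}, as the boundary recursion with map $A_{\mu,1}^{\mathrm{DLQL}}:=\lim_{\lambda\uparrow1}A_{\mu,\lambda}^{\mathrm{DLQL}}=A_\mu^{\mathrm{PQVI}}$. This recursion is exactly the PQVI error recursion of \Cref{lem:pqvi_stochastic_policy_error_recursion}. The same convex-combination argument, now applied to \Cref{eq:pqvi_A_mu_definition}, which is linear in $\Pi^\mu$, places $A_{\mu_n}^{\mathrm{PQVI}}$ in $\co(\mathcal A^{\mathrm{PQVI}})$. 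Then \Cref{lem:common_lyapunov_construction}, equivalently \Cref{lem:pqvi_jsr_convergence_certificate}, gives the same estimates with $\mathcal A^{\mathrm{PQVI}}$.

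The main obstacle I anticipate is the convex-hull membership step. One has to be careful that the decomposition weights $w_\pi$ are a genuine probability vector over $\Theta$, and that the $\mu$-independent term survives the averaging, which it does because the weights sum to one. A second delicate point is making precise what the boundary recursion means at $\lambda=1$, since the infinite weighted sum is not defined there. I will handle this by identifying the $\lambda=1$ recursion with PQVI rather than by substituting $\lambda=1$ into the series.
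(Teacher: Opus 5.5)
Your proposal is correct and follows the paper's proof. For $0\leq\lambda<1$ you use the switched boundary-error model, place the active mode $A_{\mu_n,\lambda}^{\mathrm{DLQL}}$ in $\co(\mathcal A_\lambda^{\mathrm{DLQL}})$ and apply \Cref{lem:common_lyapunov_construction}; at $\lambda=1$ you identify the recursion with PQVI and do the same with $\mathcal A^{\mathrm{PQVI}}$. The paper merely asserts the convex-hull membership, whereas you correctly justify it via the product weights $w_\pi=\prod_s\mu(s)_{\pi(s)}$ and the affine dependence on $\Pi^\mu$.
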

\begin{proof}
For \(0\leq\lambda<1\),~\Cref{lem:period_lambda_switched_boundary_model} gives
\[
\bar\theta_{k+1}-\theta^\star
=A_{\mu_k,\lambda}^{\mathrm{DLQL}}(\bar\theta_k-\theta^\star).
\]
Because \(\mu_k\) is a stochastic policy, the active matrix
\(A_{\mu_k,\lambda}^{\mathrm{DLQL}}\) lies in
\(\co(\mathcal A_\lambda^{\mathrm{DLQL}})\). Applying~\Cref{lem:common_lyapunov_construction} with
\(\mathcal H=\mathcal A_\lambda^{\mathrm{DLQL}}\) gives the displayed norm and
Euclidean estimates.

At \(\lambda=1\), the boundary map is the PQVI endpoint, whose active stochastic
mode is \(A_{\mu_k}^{\mathrm{PQVI}}\). This mode lies in
\(\co(\mathcal A^{\mathrm{PQVI}})\). Applying~\Cref{lem:common_lyapunov_construction} with
\(\mathcal H=\mathcal A^{\mathrm{PQVI}}\) gives the same estimates with the PQVI
family. In both cases the rate is strictly smaller than one, so the boundary
errors converge to zero.
\end{proof}

The endpoint identities above turn the qualitative interpolation picture into a
local JSR statement. The next theorem separates the four possible endpoint
configurations. It should be read as a neighborhood result: small $\lambda$
inherits the DLQL endpoint, while $\lambda$ close to one inherits the PQVI endpoint.
\begin{theorem}
\label{thm:lambda_target_endpoint_cases}
The following endpoint cases hold.
\begin{enumerate}[label=(\roman*)]
\item If $\rho(\mathcal A^{\mathrm{PQVI}})<1$ and $\rho(\mathcal A^{\mathrm{DLQL}})<1$, then
$\rho(\mathcal A_\lambda^{\mathrm{DLQL}})<1$ for all sufficiently small $\lambda\geq0$
and also for all $\lambda$ sufficiently close to one.
\item If $\rho(\mathcal A^{\mathrm{PQVI}})<1<\rho(\mathcal A^{\mathrm{DLQL}})$, then
$\rho(\mathcal A_\lambda^{\mathrm{DLQL}})>1$ for all sufficiently small $\lambda\geq0$,
whereas for all $\lambda$ sufficiently close to one, $\rho(\mathcal A_\lambda^{\mathrm{DLQL}})<1$.
\item If $\rho(\mathcal A^{\mathrm{DLQL}})<1<\rho(\mathcal A^{\mathrm{PQVI}})$, then the small-$\lambda$
endpoint is JSR-stable and the large-$\lambda$ endpoint fails the JSR stability
test: for all $\lambda$ sufficiently close to zero and all $\lambda'$ sufficiently
close to one, $\rho(\mathcal A_\lambda^{\mathrm{DLQL}})
<1<
\rho(\mathcal A_{\lambda'}^{\mathrm{DLQL}})$.
\item If $\rho(\mathcal A^{\mathrm{PQVI}})>1$ and $\rho(\mathcal A^{\mathrm{DLQL}})>1$, then
$\rho(\mathcal A_\lambda^{\mathrm{DLQL}})>1$ for all sufficiently small $\lambda\geq0$
and also for all $\lambda$ sufficiently close to one.
\end{enumerate}
\end{theorem}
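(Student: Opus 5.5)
The plan is to derive all four cases directly from the two JSR limits in \Cref{lem:lambda_target_jsr_endpoint_limits}, namely \Cref{eq:lambda_dlql_zero_jsr_convergence} and \Cref{eq:lambda_dlql_jsr_convergence}, together with the elementary fact that a real-valued function with a limit strictly below (resp.\ above) one is strictly below (resp.\ above) one on a one-sided neighborhood of the limit point. Concretely, define \(J(\lambda):=\rho(\mathcal A_\lambda^{\mathrm{DLQL}})\) for \(0\leq\lambda<1\). The two limits say \(J(\lambda)\to\rho(\mathcal A^{\mathrm{DLQL}})\) as \(\lambda\downarrow0\) and \(J(\lambda)\to\rho(\mathcal A^{\mathrm{PQVI}})\) as \(\lambda\uparrow1\).

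First I would record the two neighborhood statements. If \(L_0:=\rho(\mathcal A^{\mathrm{DLQL}})\neq1\), set \(\varepsilon_0:=|L_0-1|/2>0\). By the first limit there is \(\delta_0>0\) with \(|J(\lambda)-L_0|<\varepsilon_0\) for \(0<\lambda<\delta_0\). Hence \(J(\lambda)\) lies on the same side of one as \(L_0\) there. At \(\lambda=0\) itself, I would note that the series in \Cref{eq:lambda_dlql_A_pi_lambda_definition} reduces to the single term \(A_{\mu,1}^{\mathrm{DLQL}}=A_\mu^{\mathrm{DLQL}}\) by \Cref{lem:period_map_decomposition}, so \(\mathcal A_0^{\mathrm{DLQL}}=\mathcal A^{\mathrm{DLQL}}\) and \(J(0)=L_0\). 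This lets ``sufficiently small \(\lambda\geq0\)'' include \(\lambda=0\). Symmetrically, if \(L_1:=\rho(\mathcal A^{\mathrm{PQVI}})\neq1\), then there is \(\delta_1>0\) such that \(J(\lambda)\) lies on the same side of one as \(L_1\) for \(1-\delta_1<\lambda<1\).

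Then the four cases are just the four sign combinations of \((L_0-1,L_1-1)\). Case (i) is both negative, case (ii) is \(L_0>1>L_1\), case (iii) is \(L_0<1<L_1\), and case (iv) is both positive. In case (iii), the two neighborhoods give \(J(\lambda)<1<J(\lambda')\) for \(\lambda\in[0,\delta_0)\) and \(\lambda'\in(1-\delta_1,1)\).

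There is essentially no obstacle beyond the JSR limits already established. The only point deserving care is the \(\lambda=0\) boundary just discussed; the analytic content, namely continuity of the JSR for finite families, has already been absorbed into \Cref{lem:lambda_target_jsr_endpoint_limits}.
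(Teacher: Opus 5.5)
Your proposal is correct and follows essentially the same approach as the paper. Both arguments take the two JSR endpoint limits from the endpoint-limit lemma, use a margin of half the distance to one to obtain one-sided neighborhoods on which $\rho(\mathcal A_\lambda^{\mathrm{DLQL}})$ stays on the same side of one, and then read off the four sign combinations. Your explicit check that $\mathcal A_0^{\mathrm{DLQL}}=\mathcal A^{\mathrm{DLQL}}$, so that $\lambda=0$ itself is covered, is a small refinement the paper leaves implicit.
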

\begin{proof}
By~\Cref{lem:lambda_target_jsr_endpoint_limits},
\[
\rho(\mathcal A_\lambda^{\mathrm{DLQL}})
\to \rho(\mathcal A^{\mathrm{DLQL}})
\quad\text{as }\lambda\downarrow0,
\qquad
\rho(\mathcal A_\lambda^{\mathrm{DLQL}})
\to \rho(\mathcal A^{\mathrm{PQVI}})
\quad\text{as }\lambda\uparrow1.
\]
These endpoint limits are obtained from finite-family JSR continuity
\citep{heilstrang1995continuity,jungers2009joint}. We now use the strict margins
around the threshold value one.

If an endpoint JSR value is strictly less than one, say it is \(r<1\), choose
\(\delta=(1-r)/2>0\). The corresponding endpoint limit gives a neighborhood of
that endpoint on which
\(\rho(\mathcal A_\lambda^{\mathrm{DLQL}})<r+\delta=(1+r)/2<1\). Thus JSR
stability is preserved near that endpoint. If an endpoint JSR value is strictly
greater than one, say it is \(r>1\), choose \(\delta=(r-1)/2>0\). The same
endpoint limit gives a neighborhood on which
\(\rho(\mathcal A_\lambda^{\mathrm{DLQL}})>r-
\delta=(1+r)/2>1\). Thus failure of the strict JSR stability test is also
preserved near that endpoint.

Applying the first alternative at each stable endpoint and the second alternative
at each unstable endpoint gives the four cases (i)--(iv).
\end{proof}
Thus small $\lambda$ is favorable when the DLQL family is stable and the
PQVI endpoint is unstable, while large $\lambda$ is favorable when the
PQVI family is stable or has the strict endpoint JSR advantage. If both
endpoints fail the JSR stability test, an endpoint argument cannot certify
stability, and only an intermediate choice of $\lambda$ can possibly improve the
JSR\@. As above, a JSR above one is a worst-case obstruction to this certificate,
not an automatic divergence statement for every realized switching path.

\section{Recursive Solver for \texorpdfstring{$\lambda$}{lambda}-DLQL}
\label{sec:recursive_period_lambda_solver}

The definition of $\lambda$-DLQL in~\Cref{eq:lambda_averaged_target_parameter}
is conceptually useful, but direct simulation of all infinitely many target
periods is not implementable. This section gives the recursive correction form
that computes the same target with one linear solve at each target boundary. The following algorithm,~\Cref{alg:exact_deterministic_model_based_period_lambda_target}, gives the deterministic
model-based boundary update corresponding to this correction formula.
\setcounter{algorithm}{3}
\begin{algorithm}[H]
\caption{Exact Deterministic Model-Based $\lambda$-DLQL Boundary Update}
\label{alg:exact_deterministic_model_based_period_lambda_target}
\begin{algorithmic}[1]
\REQUIRE Initial boundary target $\bar\theta_0$, step-size $\alpha>0$, and parameter $\lambda\in[0,1]$.
\FOR{target boundaries $n=0,1,2,\ldots$}
\STATE Select the deterministic mode $\pi_n$ by the greedy rule
$\displaystyle
\pi_n(s)\in\arg\max_{a\in\mathcal A}\phi(s,a)^\top\bar\theta_n
$
for every $s\in\mathcal S$, with fixed deterministic tie breaking; equivalently,
$V_{\bar\theta_n}=\Pi^{\pi_n}\Phi\bar\theta_n$.
\STATE Freeze $\bar\theta_n$.
\STATE Update the boundary target directly by
\begin{equation*}
\bar\theta_{n+1}
=
\bar\theta_n
+
\alpha\bigl[(1-\lambda)I+\lambda\alpha\Phi^\top D\Phi\bigr]^{-1}
\Phi^\top D
\left(
R+\gamma P\Pi^{\pi_n}\Phi\bar\theta_n-\Phi\bar\theta_n
\right).
\end{equation*}
\STATE For a hard boundary implementation, set the online parameter to
$\bar\theta_{n+1}$ before starting the next boundary block.
\ENDFOR
\end{algorithmic}
\end{algorithm}

Algorithm~\ref{alg:exact_deterministic_model_based_period_lambda_target} states
the computation performed at each target boundary. The next proposition justifies
it: the single correction equation, equivalently the direct inverse update used
in the algorithm, returns exactly the same geometrically averaged target
as~\Cref{eq:lambda_averaged_target_parameter}, without explicitly simulating all
candidate periods.

\begin{proposition}
\label{prop:recursive_lambda_target}
Let $0\leq\lambda<1$. Under~\Cref{ass:hard_target_stepsize}, fix a boundary
target $\bar\theta_n$ and the corresponding deterministic greedy mode $\pi_n$.
Define the next boundary target by the geometrically averaged hard-target map
\begin{equation*}
\bar\theta_{n+1}
:=(1-\lambda)
\sum_{m=1}^{\infty}\lambda^{m-1}
h_{\pi_n}^{m}(\bar\theta_n;\bar\theta_n).
\end{equation*}
Then the series converges, and $\bar\theta_{n+1}$ is the unique solution of
\begin{align}
\bar\theta_{n+1}
=
(1-\lambda)h_{\pi_n}(\bar\theta_n;\bar\theta_n)
+
\lambda h_{\pi_n}\bigl(\bar\theta_{n+1};\bar\theta_n\bigr).
\label{eq:lambda_parameter_recursive_equation}
\end{align}
Equivalently, the boundary correction $\bar\theta_{n+1}-\bar\theta_n$ satisfies
\begin{align}
\bar\theta_{n+1}-\bar\theta_n
=
h_{\pi_n}(\bar\theta_n;\bar\theta_n)-\bar\theta_n
+\lambda(I-\alpha\Phi^\top D\Phi)(\bar\theta_{n+1}-\bar\theta_n),
\label{eq:gae_style_period_recursion}
\end{align}
or, equivalently,
\begin{align}
\bigl[(1-\lambda)I+\lambda\alpha\Phi^\top D\Phi\bigr]
(\bar\theta_{n+1}-\bar\theta_n)
=
h_{\pi_n}(\bar\theta_n;\bar\theta_n)-\bar\theta_n.
\label{eq:period_lambda_linear_system}
\end{align}
Therefore,
\begin{equation*}
\bar\theta_{n+1}
=
\bar\theta_n
+
\alpha\bigl[(1-\lambda)I+\lambda\alpha\Phi^\top D\Phi\bigr]^{-1}
\Phi^\top D
\left(R+\gamma P\Pi^{\pi_n}\Phi\bar\theta_n-\Phi\bar\theta_n\right).
\end{equation*}
\end{proposition}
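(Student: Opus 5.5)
The plan is to treat the frozen-target map as an affine contraction-type map and sum its iterates explicitly. Write $T:=I-\alpha\Phi^\top D\Phi$ and $c_n:=\alpha\Phi^\top D(R+\gamma P\Pi^{\pi_n}\Phi\bar\theta_n)$. Then $h_{\pi_n}(\theta;\bar\theta_n)=T\theta+c_n$ is affine with the constant linear part $T$, because the mode $\pi_n$ and the target $\bar\theta_n$ are frozen. Set $x_m:=h_{\pi_n}^m(\bar\theta_n;\bar\theta_n)$ and $\delta:=x_1-\bar\theta_n=h_{\pi_n}(\bar\theta_n;\bar\theta_n)-\bar\theta_n$. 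The difference recursion $x_{m+1}-x_m=T(x_m-x_{m-1})$ gives $x_m-x_{m-1}=T^{m-1}\delta$. Telescoping then yields
\[
x_m=\bar\theta_n+\sum_{j=0}^{m-1}T^j\delta .
\]

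For convergence, I will use \Cref{ass:hard_target_stepsize} and \Cref{lem:relaxation_stepsize_condition}, which give $\rho(T)<1$. Hence $\|T^j\|$ decays geometrically and $(x_m)$ is bounded. Since $\sum_m(1-\lambda)\lambda^{m-1}$ converges, the series defining $\bar\theta_{n+1}$ converges absolutely. Exchanging the order of summation, which is justified by absolute convergence, gives
\[
\bar\theta_{n+1}-\bar\theta_n=(1-\lambda)\sum_{m\ge1}\lambda^{m-1}\sum_{j=0}^{m-1}T^j\delta=\sum_{j\ge0}\lambda^jT^j\delta=(I-\lambda T)^{-1}\delta .
\]
The inner coefficient is $(1-\lambda)\sum_{m\ge j+1}\lambda^{m-1}=\lambda^j$, and the last equality is the Neumann series, valid because $\rho(\lambda T)<1$. \Cref{lem:geometric_averaged_closed_form} identifies $I-\lambda T=(1-\lambda)I+\lambda\alpha\Phi^\top D\Phi$ and shows it is symmetric positive definite. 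This yields \Cref{eq:period_lambda_linear_system}.

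The remaining equivalent forms follow algebraically. Rearranging $(I-\lambda T)(\bar\theta_{n+1}-\bar\theta_n)=\delta$ gives \Cref{eq:gae_style_period_recursion}. For \Cref{eq:lambda_parameter_recursive_equation}, I expand the right-hand side as $(1-\lambda)(T\bar\theta_n+c_n)+\lambda(T\bar\theta_{n+1}+c_n)=c_n+T\bar\theta_n+\lambda T(\bar\theta_{n+1}-\bar\theta_n)$. This equals $\bar\theta_n+\delta+\lambda T(\bar\theta_{n+1}-\bar\theta_n)$, so it coincides with \Cref{eq:gae_style_period_recursion}. Uniqueness follows because each of these equations is the linear system with the invertible matrix $I-\lambda T$. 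Finally, $\delta=\alpha\Phi^\top D(R+\gamma P\Pi^{\pi_n}\Phi\bar\theta_n-\Phi\bar\theta_n)$, and inverting gives the explicit update.

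The only delicate step is justifying the interchange of the double sum and the convergence at the boundary of the allowed $\alpha$ range. This is handled by the bound $\|T^j\|\le C r^j$ with $r<1$, which makes the double series absolutely summable. Alternatively, I can avoid the interchange by using \Cref{lem:geometric_resolvent_identity} on $\sum\lambda^{m-1}(I-T^m)(I-T)^{-1}$, but $I-T$ could be ill-conditioned, so the interchange is the cleaner route.
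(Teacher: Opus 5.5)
Your proof is correct, but it runs in the opposite direction from the paper's.

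The paper never evaluates the series. After establishing absolute convergence, it shifts the summation index and writes $h_{\pi_n}^{m+1}(\bar\theta_n;\bar\theta_n)=h_{\pi_n}\bigl(h_{\pi_n}^{m}(\bar\theta_n;\bar\theta_n);\bar\theta_n\bigr)$. It then passes the affine map $h_{\pi_n}(\cdot\,;\bar\theta_n)$ through the geometric average, which is legitimate because the series converges and the weights sum to one. This produces the implicit self-consistency equation $\bar\theta_{n+1}=(1-\lambda)h_{\pi_n}(\bar\theta_n;\bar\theta_n)+\lambda h_{\pi_n}(\bar\theta_{n+1};\bar\theta_n)$ first. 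The correction form, the linear system, and the inverse formula are then derived algebraically from it.

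You instead compute the sum in closed form. You telescope to $x_m=\bar\theta_n+\sum_{j<m}T^j\delta$, interchange the absolutely convergent double series, and sum the Neumann series to get $\bar\theta_{n+1}-\bar\theta_n=\sum_{j\ge0}(\lambda T)^j\delta=(I-\lambda T)^{-1}\delta$. You then recover the recursive equation afterwards as an algebraic reformulation, with uniqueness from invertibility of $I-\lambda T$.

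What each route buys:
\begin{itemize}
\item The paper's route uses nothing about $h_{\pi_n}$ beyond affinity and mirrors the $\lambda$-return recursion. It explains structurally why the average is the fixed point of $\theta\mapsto(1-\lambda)h_{\pi_n}(\bar\theta_n;\bar\theta_n)+\lambda h_{\pi_n}(\theta;\bar\theta_n)$.
\item Your route gives existence and the explicit update in one stroke. It also shows the correction as a $\lambda$-discounted propagation of the period-one correction $\delta$ under $T$. It is essentially the parameter-level analogue of how the paper proves its closed-form lemma via the geometric resolvent identity.
\end{itemize}

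Two minor remarks. First, there is no boundary case to worry about in the step-size range, since the assumption on $\alpha$ is a strict inequality. Second, your alternative via $(I-T)^{-1}$ would be equally rigorous, because $I-T=\alpha\Phi^\top D\Phi$ is positive definite; conditioning plays no role in an exact identity.
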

\begin{proof}
We first justify convergence of the weighted series defining \(\bar\theta_{n+1}\).
For the fixed boundary target \(\bar\theta_n\) and fixed mode \(\pi_n\), repeated
application of the affine frozen-target map gives
\[
\begin{aligned}
h_{\pi_n}^{m}(\bar\theta_n;\bar\theta_n)
={}&(I-\alpha\Phi^\top D\Phi)^m\bar\theta_n \\
&+\sum_{i=0}^{m-1}(I-\alpha\Phi^\top D\Phi)^i
\alpha\left(\Phi^\top D R+
\gamma\Phi^\top DP\Pi^{\pi_n}\Phi\bar\theta_n\right).
\end{aligned}
\]
By~\Cref{ass:hard_target_stepsize}, \(\rho(I-\alpha\Phi^\top D\Phi)<1\). Hence
there exist constants \(C\geq1\) and \(r\in(0,1)\) such that
\[
\|(I-\alpha\Phi^\top D\Phi)^m\|_2\leq Cr^m,
\qquad m\geq0.
\]
The displayed iterate is therefore uniformly bounded in \(m\), because
\(\sum_{i=0}^{m-1}Cr^i\leq C/(1-r)\). Since
\(\sum_{m=1}^{\infty}(1-\lambda)\lambda^{m-1}=1\) for \(0\leq\lambda<1\), the
weighted series in~\Cref{eq:lambda_averaged_target_parameter} converges
absolutely.

Using the $m$-fold frozen-target iterate in~\Cref{eq:period_m_as_frozen_iterate} and shifting the summation index,
\begin{align}
\bar\theta_{n+1}
&=(1-\lambda)h_{\pi_n}(\bar\theta_n;\bar\theta_n)
+(1-\lambda)\sum_{m=2}^{\infty}\lambda^{m-1}
h_{\pi_n}^{m}(\bar\theta_n;\bar\theta_n)\notag\\
&=(1-\lambda)h_{\pi_n}(\bar\theta_n;\bar\theta_n)
+\lambda(1-\lambda)
\sum_{k=1}^{\infty}\lambda^{k-1}
h_{\pi_n}\left(h_{\pi_n}^{k}(\bar\theta_n;\bar\theta_n);\bar\theta_n\right).
\label{eq:lambda_shifted_sum_identity}
\end{align}
The map $\theta\mapsto h_{\pi_n}(\theta;\bar\theta_n)$ is affine. Since the
weights $(1-\lambda)\lambda^{k-1}$ sum to one, the affine map can be passed
through this weighted average:
\[
\begin{aligned}
&(1-\lambda)
\sum_{k=1}^{\infty}\lambda^{k-1}
 h_{\pi_n}\!\left(h_{\pi_n}^{k}(\bar\theta_n;\bar\theta_n);\bar\theta_n\right) \\
&\quad=
h_{\pi_n}\!\left(
(1-\lambda)\sum_{k=1}^{\infty}\lambda^{k-1}
 h_{\pi_n}^{k}(\bar\theta_n;\bar\theta_n);
\bar\theta_n
\right).
\end{aligned}
\]
The argument of $h_{\pi_n}$ on the right is exactly $\bar\theta_{n+1}$ by~\Cref{eq:lambda_averaged_target_parameter}. Substituting this identity into~\Cref{eq:lambda_shifted_sum_identity}
gives~\Cref{eq:lambda_parameter_recursive_equation}.
Now write the next target as $\bar\theta_{n+1}=\bar\theta_n+(\bar\theta_{n+1}-\bar\theta_n)$.
From the affine representation
\[
h_{\pi_n}(\theta;\bar\theta_n)
=
(I-\alpha\Phi^\top D\Phi)\theta+
\alpha\left(\Phi^\top D R+
\gamma\Phi^\top DP\Pi^{\pi_n}\Phi\bar\theta_n\right),
\]
the difference between two evaluations with the same frozen target satisfies
\[
h_{\pi_n}(\theta;\bar\theta_n)-h_{\pi_n}(\bar\theta_n;\bar\theta_n)
=
(I-\alpha\Phi^\top D\Phi)(\theta-\bar\theta_n).
\]
Taking $\theta=\bar\theta_{n+1}$ gives
\[
h_{\pi_n}(\bar\theta_{n+1};\bar\theta_n)
=
h_{\pi_n}(\bar\theta_n;\bar\theta_n)
+(I-\alpha\Phi^\top D\Phi)(\bar\theta_{n+1}-\bar\theta_n).
\]
Equivalently, after adding and subtracting $\bar\theta_n$ inside the first term,
\[
h_{\pi_n}(\bar\theta_{n+1};\bar\theta_n)
=
\bar\theta_n+
\bigl(h_{\pi_n}(\bar\theta_n;\bar\theta_n)-\bar\theta_n\bigr)
+(I-\alpha\Phi^\top D\Phi)(\bar\theta_{n+1}-\bar\theta_n).
\]
Substituting this identity into~\Cref{eq:lambda_parameter_recursive_equation}
gives~\Cref{eq:gae_style_period_recursion}. Finally,
$I-\lambda(I-\alpha\Phi^\top D\Phi)=(1-\lambda)I+
\lambda\alpha\Phi^\top D\Phi$, which yields~\Cref{eq:period_lambda_linear_system}. For $0\leq\lambda<1$, the matrix
$(1-\lambda)I+\lambda\alpha\Phi^\top D\Phi$ is symmetric positive definite, so
the solution is unique. Solving the linear system and using the expression for
$h_{\pi_n}(\bar\theta_n;\bar\theta_n)-\bar\theta_n$ gives the displayed direct inverse
formula for $\bar\theta_{n+1}$.
\end{proof}

The right-hand side of the correction equation in~\Cref{eq:period_lambda_linear_system} is the period-one frozen-target correction,
namely
\[
h_{\pi_n}(\bar\theta_n;\bar\theta_n)-\bar\theta_n
=
\alpha\Phi^\top D
\left(R+\gamma P\Pi^{\pi_n}\Phi\bar\theta_n-\Phi\bar\theta_n\right).
\]
At $\lambda=0$,
\begin{equation*}
\bar\theta_{n+1}-\bar\theta_n
=h_{\pi_n}(\bar\theta_n;\bar\theta_n)-\bar\theta_n,
\qquad
\bar\theta_{n+1}=h_{\pi_n}(\bar\theta_n;\bar\theta_n),
\end{equation*}
which is the period-one DLQL boundary update. At the endpoint $\lambda=1$, the
same correction equation becomes
\begin{equation*}
\alpha\Phi^\top D\Phi(\bar\theta_{n+1}-\bar\theta_n)
=h_{\pi_n}(\bar\theta_n;\bar\theta_n)-\bar\theta_n,
\end{equation*}
and hence
\begin{equation*}
\bar\theta_{n+1}
=
(\Phi^\top D\Phi)^{-1}\Phi^\top D
\left(R+\gamma P\Pi^{\pi_n}\Phi\bar\theta_n\right),
\end{equation*}
which is the PQVI boundary map for the frozen greedy mode. Therefore, for
$0\leq\lambda<1$, the recursive implementation in~\Cref{alg:exact_deterministic_model_based_period_lambda_target} computes the
same target as the geometric hard-period average in~\Cref{eq:lambda_averaged_target_parameter}; at $\lambda=1$, the same correction
equation gives the PQVI endpoint used in the switching-error statement below.
The correction equation in~\Cref{eq:period_lambda_linear_system} is the central
computational object shared by the inverse-form and inverse-free implementations.
Its matrix is explicit:
\[
(1-\lambda)I+\lambda\alpha\Phi^\top D\Phi
=I-\lambda(I-\alpha\Phi^\top D\Phi),
\]
which is the same matrix appearing in~\Cref{lem:geometric_averaged_closed_form}.
Since that lemma includes its positive definiteness and invertibility, the
correction equation is well posed for every $0\leq\lambda\leq1$; for
$0\leq\lambda<1$ its solution is the update already stated in~\Cref{prop:recursive_lambda_target}, and the endpoint $\lambda=1$ gives the PQVI
boundary map used below.

With the well-posed correction equation in place, we now present the corresponding
error recursion. The inverse formula does not create a different stability object;
after subtracting the projected Q-Bellman fixed point, the active stochastic modes
are the inverse-form modes $A_{\mu,\lambda}^{\mathrm{DLQL2}}$ below. The next lemma
then identifies these modes with the period-$\lambda$ modes from~\Cref{eq:lambda_dlql_A_pi_lambda_definition} for $0\leq\lambda<1$, and the endpoint
mode is the PQVI mode.
\setcounter{lemma}{1}
\begin{lemma}
\label{prop:recursive_lambda_inverse_error_switching}
Fix $0\leq\lambda\leq1$. Let $\bar\theta_n$ be generated by the boundary update
obtained from~\Cref{eq:period_lambda_linear_system}. At each boundary, choose a
deterministic greedy policy $\pi_n$ satisfying
\[
\pi_n(s)\in\arg\max_{a\in\mathcal A}\phi(s,a)^\top\bar\theta_n,
\qquad s\in\mathcal S,
\]
with fixed deterministic tie breaking, so that
$V_{\bar\theta_n}=\Pi^{\pi_n}\Phi\bar\theta_n$. Then there exists a stochastic
policy $\mu_n=\mu_{\bar\theta_n,\theta^\star}$, supplied by~\Cref{lem:stochastic_policy_linearization}, such that, for $0\leq\lambda<1$,
\[
\bar\theta_{n+1}-\theta^\star
=
A_{\mu_n,\lambda}^{\mathrm{DLQL2}}(\bar\theta_n-\theta^\star),
\]
where, for every stochastic policy $\mu$ and $0\leq\lambda\leq1$,
\[
A_{\mu,\lambda}^{\mathrm{DLQL2}}
:=
I+\bigl[(1-\lambda)I+\lambda\alpha\Phi^\top D\Phi\bigr]^{-1}
\alpha\Phi^\top D(\gamma P\Pi^\mu\Phi-\Phi).
\]
For deterministic policies, $A_{\pi,\lambda}^{\mathrm{DLQL2}}$ is defined by the
same formula with $\Pi^\pi$ in place of $\Pi^\mu$. At the endpoint $\lambda=1$,
this definition gives $A_{\mu,1}^{\mathrm{DLQL2}}=A_\mu^{\mathrm{PQVI}}$, and hence
\[
\bar\theta_{n+1}-\theta^\star
=
A_{\mu_n}^{\mathrm{PQVI}}(\bar\theta_n-\theta^\star).
\]
\end{lemma}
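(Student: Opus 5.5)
The plan is a direct computation: write the boundary update from \Cref{eq:period_lambda_linear_system} in affine form, subtract the fixed-point identity satisfied by \(\theta^\star\), and linearize the greedy-value difference. Let \(G_\lambda:=(1-\lambda)I+\lambda\alpha\Phi^\top D\Phi\). By \Cref{lem:geometric_averaged_closed_form}, \(G_\lambda\) is symmetric positive definite for every \(0\leq\lambda\leq1\). Hence the correction equation can be solved uniquely, and the update is
\[
\bar\theta_{n+1}=\bar\theta_n+G_\lambda^{-1}\alpha\Phi^\top D\left(R+\gamma PV_{\bar\theta_n}-\Phi\bar\theta_n\right),
\]
where we used \(V_{\bar\theta_n}=\Pi^{\pi_n}\Phi\bar\theta_n\). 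This holds uniformly for \(\lambda\in[0,1]\), including the endpoint \(\lambda=1\), because the correction equation is well posed there as well.

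Next, I would use \Cref{lem:prelim_pbe_residual_equivalence}, which gives \(g(\theta^\star)=\Phi^\top D(R+\gamma PV_{\theta^\star}-\Phi\theta^\star)=0\). Consequently \(\theta^\star=\theta^\star+G_\lambda^{-1}\alpha\,g(\theta^\star)\). Subtracting this from the update, the reward term \(R\) cancels, and we obtain
\[
\bar\theta_{n+1}-\theta^\star=(\bar\theta_n-\theta^\star)+G_\lambda^{-1}\alpha\Phi^\top D\left(\gamma P(V_{\bar\theta_n}-V_{\theta^\star})-\Phi(\bar\theta_n-\theta^\star)\right).
\]
Now I would invoke \Cref{lem:stochastic_policy_linearization} with \(\theta=\bar\theta_n\) and \(\bar\theta=\theta^\star\). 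It yields \(\mu_n\) with \(V_{\bar\theta_n}-V_{\theta^\star}=\Pi^{\mu_n}\Phi(\bar\theta_n-\theta^\star)\). Factoring out \(\bar\theta_n-\theta^\star\) then gives exactly \(A_{\mu_n,\lambda}^{\mathrm{DLQL2}}(\bar\theta_n-\theta^\star)\).

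For the endpoint claim, setting \(\lambda=1\) gives \(G_1=\alpha\Phi^\top D\Phi\). Then
\[
A_{\mu,1}^{\mathrm{DLQL2}}=I+(\Phi^\top D\Phi)^{-1}\Phi^\top D(\gamma P\Pi^\mu\Phi-\Phi)=I+A_\mu^{\mathrm{PQVI}}-I=A_\mu^{\mathrm{PQVI}},
\]
using \((\Phi^\top D\Phi)^{-1}\Phi^\top D\Phi=I\) and \Cref{eq:pqvi_A_mu_definition}. The same subtraction argument at \(\lambda=1\) therefore yields the PQVI error recursion.

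I expect no genuine obstacle. The only points needing care are bookkeeping ones. First, the invertibility of \(G_\lambda\) at \(\lambda=1\), which is exactly why the lemma covers the closed interval \([0,1]\) rather than relying on the series definition. Second, the fact that the mode \(\pi_n\) used in the update differs from the stochastic mode \(\mu_n\) in the error recursion. This is handled because \(\pi_n\) enters only through \(V_{\bar\theta_n}\), which is then compared with \(V_{\theta^\star}\) via the linearization lemma.

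As an optional consistency check, one can compare \(A^{\mathrm{DLQL2}}\) with \Cref{eq:lambda_dlql_closed_form_B}. Writing \(I=G_\lambda^{-1}G_\lambda\) and expanding, \(G_\lambda-\alpha\Phi^\top D\Phi=(1-\lambda)(I-\alpha\Phi^\top D\Phi)\), so the two forms agree for \(\lambda<1\). This identification is not needed for the lemma itself.
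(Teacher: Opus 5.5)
Your proposal is correct and matches the paper's proof essentially step for step. You subtract the zero projected residual at $\theta^\star$ from the inverse-form update, linearize $V_{\bar\theta_n}-V_{\theta^\star}$ via \Cref{lem:stochastic_policy_linearization}, read off the bracketed matrix as $A_{\mu_n,\lambda}^{\mathrm{DLQL2}}$, and simplify it at $\lambda=1$ using $(\Phi^\top D\Phi)^{-1}\Phi^\top D\Phi=I$. Your optional consistency check against \Cref{eq:lambda_dlql_closed_form_B} is exactly the content of the paper's next result, \Cref{lem:lambda_dlql2_equals_lambda_dlql}.
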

\begin{proof}
The projected Q-Bellman fixed point satisfies
\[
\Phi^\top D\left(R+\gamma P V_{\theta^\star}-\Phi\theta^\star\right)=0.
\]
By~\Cref{lem:stochastic_policy_linearization}, applied to $\bar\theta_n$ and
$\theta^\star$, there exists a stochastic policy
$\mu_n=\mu_{\bar\theta_n,\theta^\star}$ such that
\[
V_{\bar\theta_n}-V_{\theta^\star}
=
\Pi^{\mu_n}\Phi(\bar\theta_n-\theta^\star).
\]
Subtracting the fixed-point residual from the inverse-form boundary update gives
\[
\begin{aligned}
\bar\theta_{n+1}-\theta^\star
={}&
\left[
I+
\bigl((1-\lambda)I+\lambda\alpha\Phi^\top D\Phi\bigr)^{-1}
\alpha\Phi^\top D\bigl(\gamma P\Pi^{\mu_n}\Phi-\Phi\bigr)
\right]
(\bar\theta_n-\theta^\star).
\end{aligned}
\]
For $0\leq\lambda<1$, the bracketed matrix is
$A_{\mu_n,\lambda}^{\mathrm{DLQL2}}$ by definition. At $\lambda=1$, the same
bracketed matrix is
\[
I+
(\alpha\Phi^\top D\Phi)^{-1}
\alpha\Phi^\top D(\gamma P\Pi^{\mu_n}\Phi-\Phi)
=
(\Phi^\top D\Phi)^{-1}\gamma\Phi^\top DP\Pi^{\mu_n}\Phi
=A_{\mu_n}^{\mathrm{PQVI}},
\]
using~\Cref{eq:pqvi_A_mu_definition}. This proves both displayed recursions.
\end{proof}

\begin{lemma}
\label{lem:lambda_dlql2_equals_lambda_dlql}
Under~\Cref{ass:hard_target_stepsize}, for every stochastic policy $\mu$ and
$0\leq\lambda<1$,
\[
A_{\mu,\lambda}^{\mathrm{DLQL2}}
=
A_{\mu,\lambda}^{\mathrm{DLQL}}.
\]
The same identity holds for deterministic policies by taking $\mu=\pi$.
\end{lemma}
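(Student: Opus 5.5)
The plan is to compare $A_{\mu,\lambda}^{\mathrm{DLQL2}}$ directly with the closed form \Cref{eq:lambda_dlql_closed_form_B} from \Cref{lem:geometric_averaged_closed_form}, which is available under \Cref{ass:hard_target_stepsize} for $0\leq\lambda<1$. Write $G:=\alpha\Phi^\top D\Phi$, $T:=I-G$, and $M_\lambda:=I-\lambda T=(1-\lambda)I+\lambda G$. \Cref{lem:geometric_averaged_closed_form} shows that $M_\lambda$ is symmetric positive definite, hence invertible. In this notation,
\[
A_{\mu,\lambda}^{\mathrm{DLQL}}=(1-\lambda)TM_\lambda^{-1}+M_\lambda^{-1}\alpha\gamma\Phi^\top DP\Pi^\mu\Phi,
\qquad
A_{\mu,\lambda}^{\mathrm{DLQL2}}=I-M_\lambda^{-1}G+M_\lambda^{-1}\alpha\gamma\Phi^\top DP\Pi^\mu\Phi .
\]
The terms involving $\Pi^\mu$ coincide, so the claim reduces to the policy-independent identity
\[
(1-\lambda)TM_\lambda^{-1}=I-M_\lambda^{-1}G.
\]

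To prove this identity, I would use \Cref{lem:target_factor_commutation}: $T$ commutes with $M_\lambda^{-1}$, and therefore so does $G=I-T$. Then
\[
I-M_\lambda^{-1}G=M_\lambda^{-1}(M_\lambda-G)=M_\lambda^{-1}\bigl(I-\lambda T-(I-T)\bigr)=(1-\lambda)M_\lambda^{-1}T=(1-\lambda)TM_\lambda^{-1}.
\]
This is essentially the coefficient computation already carried out in the proof of \Cref{lem:geometric_averaged_closed_form}, so one could also cite that computation rather than redo it.

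The deterministic-policy statement follows by specializing $\mu=\pi$, since both definitions are given by the same formulas with $\Pi^\pi$ in place of $\Pi^\mu$.

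The only delicate point is keeping track of the order of the factors $M_\lambda^{-1}$ and $T$ (or $G$), since the closed form places $T$ to the left of $M_\lambda^{-1}$ while DLQL2 places $M_\lambda^{-1}$ on the left. This is exactly what the commutation lemma handles. The step-size assumption enters only through \Cref{lem:geometric_averaged_closed_form}, which is needed for the series defining $A_{\mu,\lambda}^{\mathrm{DLQL}}$ to have its closed form; the algebraic identity itself uses only invertibility of $M_\lambda$.
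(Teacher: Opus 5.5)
Your proposal is correct and follows essentially the same route as the paper: both reduce $A_{\mu,\lambda}^{\mathrm{DLQL2}}$ to the closed form in \Cref{eq:lambda_dlql_closed_form_B}, using the identity $M_\lambda-G=(1-\lambda)T$ together with \Cref{lem:target_factor_commutation}. The only difference is cosmetic: the paper factors $M_\lambda^{-1}$ out of the whole expression at once, whereas you first cancel the common policy-dependent term and then verify the remaining policy-independent identity.
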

\begin{proof}
Using
\[
(1-\lambda)I+\lambda\alpha\Phi^\top D\Phi
=I-\lambda(I-\alpha\Phi^\top D\Phi),
\]
and~\Cref{lem:target_factor_commutation}, we have
\[
\begin{aligned}
A_{\mu,\lambda}^{\mathrm{DLQL2}}
&=I+
\bigl[I-\lambda(I-\alpha\Phi^\top D\Phi)\bigr]^{-1}
\alpha\Phi^\top D(\gamma P\Pi^\mu\Phi-\Phi) \\
&=\bigl[I-\lambda(I-\alpha\Phi^\top D\Phi)\bigr]^{-1}
\left[(1-\lambda)(I-\alpha\Phi^\top D\Phi)
+\alpha\gamma\Phi^\top DP\Pi^\mu\Phi\right] \\
&=(1-\lambda)(I-\alpha\Phi^\top D\Phi)
\bigl[I-\lambda(I-\alpha\Phi^\top D\Phi)\bigr]^{-1} \\
&\quad+
\bigl[I-\lambda(I-\alpha\Phi^\top D\Phi)\bigr]^{-1}
\alpha\gamma\Phi^\top DP\Pi^\mu\Phi.
\end{aligned}
\]
The last display is exactly the closed form in~\Cref{eq:lambda_dlql_closed_form_B}. Hence
$A_{\mu,\lambda}^{\mathrm{DLQL2}}=A_{\mu,\lambda}^{\mathrm{DLQL}}$.
\end{proof}

The next remarks clarify how the inverse-form update relates to the original
projected equation, to regularized PQVI, and to the inverse-free
implementation introduced next.
\begin{remark}
If the boundary sequence converges, say $\bar\theta_n\to\bar\theta_\infty$, then
$\bar\theta_{n+1}-\bar\theta_n\to 0$. Since~\Cref{lem:period_lambda_correction_matrix_invertible} gives invertibility and $V_\theta$ is continuous in $\theta$, the limit must satisfy
\[
\Phi^\top D\left(R+\gamma P V_{\bar\theta_\infty}-\Phi\bar\theta_\infty\right)=0,
\]
which is exactly the original projected Q-Bellman equation in~\Cref{eq:projected_q_bellman_equation}.
\end{remark}
Having identified the limiting equation, we next contrast it with regularized
PQVI, whose fixed point solves a modified residual equation.
\begin{remark}
The update resembles the regularized PQVI developed in~\citep{limlee2024regularized},
\[
\theta_{k+1}
=
\bigl[\eta I+\Phi^\top D\Phi\bigr]^{-1}\Phi^\top D
\left(R+\gamma P\Pi^{\pi_k}\Phi\theta_k-\Phi\theta_k\right),
\]
where $\theta_k$ is the online parameter at time $k$, $\eta>0$ is the regularization weight, and $\pi_k$ is the tie-broken greedy policy corresponding to $\theta_k$.
With regularization weight $\eta>0$, the fixed point of the regularized PQVI satisfies the modified residual equation
\[
\Phi^\top D\left(R+\gamma P V_\theta-\Phi\theta\right)-\eta\theta=0.
\]
Thus its limit generally solves a different, biased equation unless $\eta=0$. This distinction is important for well-posedness. The $\lambda$-DLQL recursion remains tied to the original projected Q-Bellman equation; if that equation has multiple solutions, the recursion need not select a unique limit, and if the equation has no solution, there is no projected Q-Bellman fixed point for the limit statement above. By contrast, regularized PQVI changes the equation by the term $-\eta\theta$. For sufficiently large $\eta$, the regularized map becomes a contraction and therefore has a unique fixed point, although that fixed point is generally biased relative to the unregularized projected equation.
\end{remark}

The dimension of \((1-\lambda)I+\lambda\alpha\Phi^\top D\Phi\) depends only on the
feature dimension. Hence it is often much smaller than the state-action space
and is computable in many linear-approximation settings. If this inverse is
still burdensome, the inverse-free recursion in the next section gives an
auxiliary alternative.

\section{Inverse-Free Recursion}
\label{sec:gtd_style_two_line_inverse_free_recursion}

The inverse-form solver in~\Cref{sec:recursive_period_lambda_solver} computes the
period-$\lambda$ correction by solving~\Cref{eq:period_lambda_linear_system}
exactly at each target boundary. When applying the explicit inverse is
undesirable, the same correction equation can instead be tracked by an auxiliary
recursion. This section first interprets the auxiliary variable as a Richardson
iteration~\citep[Section~3.1]{varga2000matrix} for that correction equation, states the $N$-step inner Richardson
version, and then gives the $N=1$ two-line implementation in~\Cref{sec:gtd_style_one_step_two_line_case}. For a linear system $Mz=b$, the
Richardson iteration is
\[
z^{(\ell+1)}=z^{(\ell)}+\eta\bigl(b-Mz^{(\ell)}\bigr),
\]
and, if $z^\star$ solves $Mz=b$, the error satisfies
\[
z^{(\ell+1)}-z^\star=(I-\eta M)(z^{(\ell)}-z^\star).
\]
The following is the standard Richardson convergence condition for symmetric
positive definite systems~\citep[Section~4.1]{saad2003iterative} and \citep[Section~3.1]{varga2000matrix}.
\begin{lemma}
\label{lem:richardson_convergence_condition}
Suppose that the matrix $M$ in the Richardson iteration above is symmetric
positive definite. If
\[
0<\eta<\frac{2}{\lambda_{\max}(M)},
\]
then $\rho(I-\eta M)<1$. Consequently, for every initial vector, the Richardson
iterates converge linearly to the solution $z^\star$.
\end{lemma}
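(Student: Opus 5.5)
The plan is to use the spectral theorem, since $M$ is symmetric positive definite. Write $M=U\Lambda U^\top$ with $U$ orthogonal and $\Lambda=\diag(\lambda_1,\ldots,\lambda_d)$, where $0<\lambda_{\min}(M)\le\lambda_i\le\lambda_{\max}(M)$. Then $I-\eta M=U(I-\eta\Lambda)U^\top$ is symmetric, and its eigenvalues are exactly $1-\eta\lambda_i$. Consequently
\[
\rho(I-\eta M)=\max_i|1-\eta\lambda_i|=\max\bigl\{|1-\eta\lambda_{\min}(M)|,\,|1-\eta\lambda_{\max}(M)|\bigr\},
\]
because $t\mapsto|1-\eta t|$ is convex and its maximum over an interval is attained at an endpoint.

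Next I will check that each term lies in $(-1,1)$. Since $\eta>0$ and $\lambda_i>0$, we have $1-\eta\lambda_i<1$. The hypothesis $\eta<2/\lambda_{\max}(M)$ gives $\eta\lambda_i\le\eta\lambda_{\max}(M)<2$, hence $1-\eta\lambda_i>-1$. Therefore $\rho(I-\eta M)<1$. This is the same argument as in \Cref{lem:relaxation_stepsize_condition}, applied with $M$ in place of $\Phi^\top D\Phi$ and $\eta$ in place of $\alpha$.

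For the convergence statement, note that $M$ is invertible, so $z^\star=M^{-1}b$ exists and is unique. Substituting $b=Mz^\star$ into the Richardson step gives the error recursion $e^{(\ell+1)}=(I-\eta M)e^{(\ell)}$, and so $e^{(\ell)}=(I-\eta M)^\ell e^{(0)}$. Because $I-\eta M$ is symmetric, its Euclidean operator norm equals its spectral radius. Hence
\[
\|z^{(\ell)}-z^\star\|_2\le\rho(I-\eta M)^\ell\|z^{(0)}-z^\star\|_2,
\]
which is linear convergence with rate $\rho(I-\eta M)<1$, with no constant $C$ needed.

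I do not expect a real obstacle. The only point needing care is that symmetry is used twice: first to get real eigenvalues, so that the two-sided bound $-1<1-\eta\lambda_i<1$ controls the spectral radius, and second to identify the operator norm with the spectral radius, so that the linear rate holds from the first step. Without symmetry, one would instead invoke the Gelfand formula and obtain a bound $C r^\ell$ with some $r\in(\rho(I-\eta M),1)$.
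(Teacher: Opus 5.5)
Your proposal is correct and follows essentially the same route as the paper: both identify the eigenvalues of $I-\eta M$ as $1-\eta q$ for eigenvalues $q\in(0,\lambda_{\max}(M)]$ of $M$, conclude $-1<1-\eta q<1$, and then read off geometric decay from the error recursion. Your extra observation, that symmetry gives $\|I-\eta M\|_2=\rho(I-\eta M)$ so the linear rate holds with constant one from the first step, slightly sharpens the paper's ``converges to zero geometrically'' conclusion.
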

\begin{proof}
Every eigenvalue of $M$ is positive. If $q$ is an eigenvalue of $M$, then
$0<q\leq\lambda_{\max}(M)$, and the displayed step-size condition gives
$-1<1-\eta q<1$. Since the eigenvalues of $I-\eta M$ are exactly the numbers
$1-\eta q$, the spectral radius of $I-\eta M$ is strictly smaller than one. The
error recursion displayed above therefore converges to zero geometrically.
\end{proof}
At target boundary $n$, choose the greedy policy generated by the current target,
\begin{equation}
\pi_n(s)\in\arg\max_{a\in\mathcal A}\phi(s,a)^\top\bar\theta_n,
\qquad s\in\mathcal S,
\label{eq:gtd_style_greedy_policy}
\end{equation}
with the same fixed deterministic tie breaking as before, so that
$V_{\bar\theta_n}=\Pi^{\pi_n}\Phi\bar\theta_n$.

\subsection{Frozen \texorpdfstring{$N$}{N}-Step Inner Richardson Interpretation}
\label{sec:gtd_style_inner_richardson_interpretation}

To see what the auxiliary Richardson step computes, freeze the boundary target
$\bar\theta_n$ and the greedy mode $\pi_n$. With these two quantities fixed, the
correction used by the inverse-form update is the unique vector
$\bar\theta_{n+1}-\bar\theta_n$ solving
\[
\bigl[(1-\lambda)I+\lambda\alpha\Phi^\top D\Phi\bigr]
(\bar\theta_{n+1}-\bar\theta_n)
=
\alpha\Phi^\top D
\left(R+\gamma P\Pi^{\pi_n}\Phi\bar\theta_n-\Phi\bar\theta_n\right).
\]
The corresponding Richardson iteration~\citep[Section~3.1]{varga2000matrix} for
this fixed linear system is:
\[
w^{(\ell+1)}
=
w^{(\ell)}+
\eta\left[
\alpha\Phi^\top D
\left(R+\gamma P\Pi^{\pi_n}\Phi\bar\theta_n-\Phi\bar\theta_n\right)
-\bigl[(1-\lambda)I+\lambda\alpha\Phi^\top D\Phi\bigr]w^{(\ell)}
\right].
\]
Subtracting the solution $\bar\theta_{n+1}-\bar\theta_n$ from both sides gives the
tracking-error recursion
\begin{equation}
w^{(\ell+1)}-(\bar\theta_{n+1}-\bar\theta_n)
=
\left(I-\eta\bigl[(1-\lambda)I+\lambda\alpha\Phi^\top D\Phi\bigr]\right)
\left(w^{(\ell)}-(\bar\theta_{n+1}-\bar\theta_n)\right).
\label{eq:gtd_style_frozen_tracking_error}
\end{equation}
Thus the auxiliary variable is not solving a new Bellman equation; for frozen
$(\bar\theta_n,\pi_n)$ it is simply trying to solve the same correction equation
as the inverse-form recursion in~\Cref{eq:period_lambda_linear_system} and~\Cref{alg:exact_deterministic_model_based_period_lambda_target}. The auxiliary
Richardson step-size condition is recorded as an assumption for the inverse-free
constructions.

\begin{assumption}
\label{ass:gtd_style_inner_stepsize}
The auxiliary step-size satisfies
\begin{equation}
0<\eta<\frac{2}{\lambda_{\max}\bigl((1-\lambda)I+\lambda\alpha\Phi^\top D\Phi\bigr)}.
\label{eq:gtd_style_inner_stepsize_condition}
\end{equation}
\end{assumption}

\begin{lemma}
\label{lem:gtd_style_richardson_iteration_matrix_stability}
Under~\Cref{ass:gtd_style_inner_stepsize}, every eigenvalue of
\[
I-\eta\bigl[(1-\lambda)I+\lambda\alpha\Phi^\top D\Phi\bigr]
\]
lies in $(-1,1)$. Consequently, the iteration matrix in~\Cref{eq:gtd_style_frozen_tracking_error} has spectral radius strictly
smaller than one.
\end{lemma}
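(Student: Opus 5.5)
The plan is to reduce the claim to the Richardson convergence lemma, \Cref{lem:richardson_convergence_condition}, applied to the symmetric matrix
\[
M_\lambda:=(1-\lambda)I+\lambda\alpha\Phi^\top D\Phi .
\]
We then read off the spectrum of the iteration matrix $I-\eta M_\lambda$ directly.

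First, we check that $M_\lambda$ is symmetric positive definite for the fixed $\lambda\in[0,1]$. This is exactly the last assertion of \Cref{lem:geometric_averaged_closed_form}: $M_\lambda=I-\lambda(I-\alpha\Phi^\top D\Phi)$ is symmetric positive definite for every $0\leq\lambda\leq1$. That lemma uses \Cref{ass:feature_sampling} and $\alpha>0$. In particular, $\lambda_{\max}(M_\lambda)>0$, so the bound in \Cref{ass:gtd_style_inner_stepsize} is a well-defined positive number.

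Second, we diagonalize. Since $M_\lambda$ is symmetric, it has an orthonormal eigenbasis with eigenvalues $0<q_1\leq\cdots\leq q_m=\lambda_{\max}(M_\lambda)$. The matrix $I-\eta M_\lambda$ is diagonal in the same basis, with eigenvalues $1-\eta q_i$. Positivity of $\eta$ and $q_i$ gives $1-\eta q_i<1$. The step-size bound $\eta<2/\lambda_{\max}(M_\lambda)$ gives $\eta q_i\leq\eta q_m<2$, hence $1-\eta q_i>-1$. So every eigenvalue lies in $(-1,1)$, and since there are finitely many, the spectral radius is $\max_i|1-\eta q_i|<1$.

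Third, we note that the matrix just analyzed is exactly the matrix multiplying the tracking error in \Cref{eq:gtd_style_frozen_tracking_error}. The stated consequence then follows immediately; equivalently, it is the conclusion of \Cref{lem:richardson_convergence_condition} with $M=M_\lambda$.

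There is no real obstacle here. The only point needing care is confirming positive definiteness at the endpoint $\lambda=1$, where the identity term vanishes and positivity comes solely from $\Phi^\top D\Phi\succ0$. This is already covered by \Cref{lem:geometric_averaged_closed_form}, and with it the argument is a direct eigenvalue computation.
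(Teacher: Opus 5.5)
Your proposal is correct and follows essentially the same route as the paper: positive definiteness of $(1-\lambda)I+\lambda\alpha\Phi^\top D\Phi$ from the closed-form lemma, then the Richardson eigenvalue argument. The only difference is ordering. The paper cites the Richardson lemma to get spectral radius below one and then uses symmetry to place the eigenvalues in $(-1,1)$, whereas you bound the eigenvalues $1-\eta q_i$ directly and read off the spectral radius.
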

\begin{proof}
By~\Cref{lem:period_lambda_correction_matrix_invertible}, the matrix
$ (1-\lambda)I+\lambda\alpha\Phi^\top D\Phi $ is symmetric positive
definite. Applying~\Cref{lem:richardson_convergence_condition} with this matrix
and the step-size condition in~\Cref{ass:gtd_style_inner_stepsize} gives
\[
\rho\left(I-\eta\bigl[(1-\lambda)I+\lambda\alpha\Phi^\top D\Phi\bigr]\right)<1.
\]
The iteration matrix is symmetric, so all its eigenvalues are real. Since its
spectral radius is strictly smaller than one, every eigenvalue lies in $(-1,1)$.
\end{proof}

\begin{lemma}
\label{lem:gtd_style_frozen_richardson_convergence}
Under~\Cref{ass:gtd_style_inner_stepsize}, for fixed $(\bar\theta_n,\pi_n)$,
the Richardson iteration above is linearly convergent to the unique solution of~\Cref{eq:period_lambda_linear_system}, namely
\[
\bigl[(1-\lambda)I+\lambda\alpha\Phi^\top D\Phi\bigr]^{-1}\alpha\Phi^\top D
\left(R+\gamma P\Pi^{\pi_n}\Phi\bar\theta_n-\Phi\bar\theta_n\right).
\]
Equivalently, the correction $\bar\theta_{n+1}-\bar\theta_n$ is the unique
solution satisfying the zero-residual equation
\[
\bigl[(1-\lambda)I+\lambda\alpha\Phi^\top D\Phi\bigr]
(\bar\theta_{n+1}-\bar\theta_n)
-
\alpha\Phi^\top D
\left(R+\gamma P\Pi^{\pi_n}\Phi\bar\theta_n-\Phi\bar\theta_n\right)
=0.
\]
\end{lemma}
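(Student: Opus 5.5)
Write $M_\lambda:=(1-\lambda)I+\lambda\alpha\Phi^\top D\Phi$ and $b_n:=\alpha\Phi^\top D\bigl(R+\gamma P\Pi^{\pi_n}\Phi\bar\theta_n-\Phi\bar\theta_n\bigr)$. With $(\bar\theta_n,\pi_n)$ frozen, both are fixed, and the Richardson iteration reads $w^{(\ell+1)}=w^{(\ell)}+\eta(b_n-M_\lambda w^{(\ell)})$. The plan is a direct reduction to \Cref{lem:richardson_convergence_condition}, using the facts already established about $M_\lambda$.

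First, by \Cref{lem:period_lambda_correction_matrix_invertible}, $M_\lambda$ is symmetric positive definite for every $0\leq\lambda\leq1$, hence invertible. Therefore the system $M_\lambda z=b_n$ has the unique solution $z^\star=M_\lambda^{-1}b_n$. By \Cref{eq:period_lambda_linear_system} in \Cref{prop:recursive_lambda_target} for $\lambda<1$, and by the endpoint discussion for $\lambda=1$, this $z^\star$ is exactly the correction $\bar\theta_{n+1}-\bar\theta_n$. This gives the uniqueness and the zero-residual characterization at once: a vector satisfies $M_\lambda z-b_n=0$ if and only if $z=z^\star$.

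Second, subtracting $z^\star$ from the iteration gives the tracking-error recursion \Cref{eq:gtd_style_frozen_tracking_error}, so $w^{(\ell)}-z^\star=(I-\eta M_\lambda)^\ell(w^{(0)}-z^\star)$. Under \Cref{ass:gtd_style_inner_stepsize}, \Cref{lem:gtd_style_richardson_iteration_matrix_stability} (equivalently \Cref{lem:richardson_convergence_condition}) gives $\rho(I-\eta M_\lambda)<1$. Because $I-\eta M_\lambda$ is symmetric, its spectral norm equals its spectral radius. Hence $\|w^{(\ell)}-z^\star\|_2\leq\rho(I-\eta M_\lambda)^\ell\|w^{(0)}-z^\star\|_2$, which is linear convergence from any initial vector.

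No step is a genuine obstacle. The only point needing care is confirming that $M_\lambda$ is symmetric positive definite uniformly over $\lambda\in[0,1]$, including the endpoint $\lambda=1$; this is already covered by \Cref{lem:period_lambda_correction_matrix_invertible}.
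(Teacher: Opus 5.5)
Your proposal is correct and follows essentially the same route as the paper: you use symmetric positive definiteness of $(1-\lambda)I+\lambda\alpha\Phi^\top D\Phi$ to get a unique solution of the correction equation, and then the spectral-radius bound from the preceding Richardson stability lemma to make the tracking error decay geometrically. Your added remark that the iteration matrix is symmetric, so its spectral norm equals its spectral radius, only makes the linear rate explicit, which the paper leaves implicit.
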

\begin{proof}
By~\Cref{lem:period_lambda_correction_matrix_invertible}, the correction
matrix is symmetric positive definite and hence the correction equation has a
unique solution. By~\Cref{lem:gtd_style_richardson_iteration_matrix_stability},
the iteration matrix in~\Cref{eq:gtd_style_frozen_tracking_error} has
spectral radius strictly smaller than one. The tracking error therefore
converges to zero geometrically, which proves the claimed linear convergence
to the unique zero-residual solution of the correction equation.
\end{proof}

By~\Cref{lem:gtd_style_frozen_richardson_convergence}, if $\bar\theta_n$ and
$\pi_n$ were held fixed while the inner iteration was run to convergence, then
$w^{(\ell)}$ would converge to the correction displayed in the lemma.
This fixed-inner view leads first to the following version, which runs $N$ inner
Richardson subiterations before moving the boundary target.

\setcounter{algorithm}{4}
\begin{algorithm}[H]
\caption{$N$-Step Inner Richardson Version of the Inverse-Free Update}
\label{alg:gtd_style_n_step_inner_richardson}
\begin{algorithmic}[1]
\REQUIRE Initial boundary target $\bar\theta_0$, initial auxiliary vector $w_0\in\R^m$, step-size $\alpha>0$, parameter $\lambda\in[0,1]$, auxiliary step-size $\eta>0$ satisfying~\Cref{ass:gtd_style_inner_stepsize}, boundary-target step-size $\beta>0$, and inner iteration count $N\geq1$.
\FOR{target boundaries $n=0,1,2,\ldots$}
\STATE Select the deterministic greedy mode $\pi_n$ by~\Cref{eq:gtd_style_greedy_policy}.
\STATE Set $w_n^{(0)}=w_n$.
\FOR{$\ell=0,1,\ldots,N-1$}
\STATE Update $w_n^{(\ell+1)}$ by
\[
\begin{aligned}
w_n^{(\ell+1)}
&=
w_n^{(\ell)}+
\eta\alpha\Phi^\top D
\left(R+\gamma P\Pi^{\pi_n}\Phi\bar\theta_n-\Phi\bar\theta_n\right)\\
&\quad-
\eta\bigl[(1-\lambda)I+\lambda\alpha\Phi^\top D\Phi\bigr]w_n^{(\ell)}.
\end{aligned}
\]
\ENDFOR
\STATE Set $w_{n+1}=w_n^{(N)}$.
\STATE Set $\bar\theta_{n+1}=\bar\theta_n+\beta w_{n+1}$.
\ENDFOR
\end{algorithmic}
\end{algorithm}

With $\beta=1$ and sufficiently many inner iterations, this algorithm approaches
the direct inverse-form boundary update in~\Cref{alg:exact_deterministic_model_based_period_lambda_target}. For finite $N$,
it is an inverse-free approximation that keeps the same correction equation but
uses only Richardson subiterations.

\subsection{The One-Step Two-Line Case}
\label{sec:gtd_style_one_step_two_line_case}

The one-step inverse-free implementation is the $N=1$ case of~\Cref{alg:gtd_style_n_step_inner_richardson}.  Setting $w_n^{(0)}=w_n$, taking one
inner Richardson step, and then moving the boundary target gives the following
two-line recursion.

\setcounter{algorithm}{5}
\begin{algorithm}[H]
\caption{Inverse-Free Two-Line $\lambda$-DLQL Recursion}
\label{alg:gtd_style_two_line_inverse_free_recursion}
\begin{algorithmic}[1]
\REQUIRE Initial boundary target $\bar\theta_0$, initial auxiliary vector $w_0\in\R^m$, step-size $\alpha>0$, parameter $\lambda\in[0,1]$, auxiliary step-size $\eta>0$ satisfying~\Cref{ass:gtd_style_inner_stepsize}, and boundary-target step-size $\beta>0$.
\FOR{target boundaries $n=0,1,2,\ldots$}
\STATE Select the deterministic greedy mode $\pi_n$ by~\Cref{eq:gtd_style_greedy_policy}.
\STATE Update the auxiliary correction and then the boundary target:
\begin{minipage}[t]{0.94\linewidth}
\begin{align}
w_{n+1}
&=
w_n+
\eta\left[
\alpha\Phi^\top D
\left(
R+\gamma P\Pi^{\pi_n}\Phi\bar\theta_n-\Phi\bar\theta_n
\right)
-\bigl[(1-\lambda)I+\lambda\alpha\Phi^\top D\Phi\bigr]w_n
\right],
\label{eq:gtd_style_w_update}\\
\bar\theta_{n+1}
&=
\bar\theta_n+\beta w_{n+1}.
\label{eq:gtd_style_theta_update}
\end{align}
\end{minipage}
\ENDFOR
\end{algorithmic}
\end{algorithm}
\setcounter{algorithm}{6}

Here $w_n\in\R^m$ is the auxiliary correction variable, $\eta>0$ is the
auxiliary step-size, and $\beta>0$ is the boundary-target step-size in the second
line. The first line is a Richardson iteration for the correction equation in~\Cref{eq:period_lambda_linear_system}; the second line uses the current tracked
correction to move the boundary target. A stochastic-RL version can be obtained
by replacing the model-based residual in~\Cref{eq:gtd_style_w_update} with
sampled temporal-difference increments, as in standard stochastic-approximation
implementations of Q-learning~\citep{sutton1998reinforcement,tsitsiklis1994asynchronous}.

The case $N=1$ of~\Cref{alg:gtd_style_n_step_inner_richardson} is exactly~\Cref{alg:gtd_style_two_line_inverse_free_recursion}: $w_n^{(0)}=w_n$, one inner
Richardson step gives~\Cref{eq:gtd_style_w_update}, and the boundary move
$\bar\theta_{n+1}=\bar\theta_n+\beta w_{n+1}$ gives~\Cref{eq:gtd_style_theta_update}. Applying the second line with $\beta=1$ after
inner convergence would reproduce the inverse-form boundary update in~\Cref{alg:exact_deterministic_model_based_period_lambda_target}. In the actual
simultaneous recursion in~\Cref{eq:gtd_style_w_update}--\Cref{eq:gtd_style_theta_update}, however, the
auxiliary variable and the boundary target move together. Therefore the
inverse-free scheme tracks the same correction equation, but its switched
dynamics are not identical to the inverse-form boundary dynamics.

\begin{lemma}
\label{lem:gtd_style_limit_points_solve_pbe}
The two-line recursion does not change the equation solved at a limit point. If
$(\bar\theta_n,w_n)$ converges to $(\bar\theta_\infty,w_\infty)$ and $\beta>0$,
then $w_\infty=0$ and
\begin{equation}
\Phi^\top D
\left(
R+\gamma P\Pi^{\pi_\infty}\Phi\bar\theta_\infty
-\Phi\bar\theta_\infty
\right)=0,
\label{eq:gtd_style_limit_residual_greedy}
\end{equation}
where $\pi_\infty$ is a greedy selection for $\bar\theta_\infty$. Hence
$\Pi^{\pi_\infty}\Phi\bar\theta_\infty=V_{\bar\theta_\infty}$ and
\begin{equation}
\Phi^\top D
\left(
R+\gamma P V_{\bar\theta_\infty}-\Phi\bar\theta_\infty
\right)=0.
\label{eq:gtd_style_same_projected_bellman_equation}
\end{equation}
Consequently, whenever the projected Q-Bellman equation has the unique solution
$\theta^\star$, every convergent two-line trajectory has
$\bar\theta_\infty=\theta^\star$ and $w_\infty=0$. The auxiliary variable is zero
at the solution because the projected Bellman residual is zero there; it is only
used to track nonzero corrections away from the solution.
\end{lemma}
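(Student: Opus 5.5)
Pass to the limit in the two update lines \Cref{eq:gtd_style_theta_update} and \Cref{eq:gtd_style_w_update}, handling the greedy mode through continuity of \(V_\theta\) rather than continuity of the policy index. First, from \Cref{eq:gtd_style_theta_update}, \(\beta w_{n+1}=\bar\theta_{n+1}-\bar\theta_n\to\bar\theta_\infty-\bar\theta_\infty=0\). Since \(\beta>0\), this gives \(w_{n+1}\to0\), so \(w_\infty=0\).

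Next, rewrite \Cref{eq:gtd_style_w_update} using \(V_{\bar\theta_n}=\Pi^{\pi_n}\Phi\bar\theta_n\), which holds by the greedy choice \Cref{eq:gtd_style_greedy_policy}:
\[
\eta\alpha\Phi^\top D\left(R+\gamma PV_{\bar\theta_n}-\Phi\bar\theta_n\right)
=w_{n+1}-w_n+\eta\bigl[(1-\lambda)I+\lambda\alpha\Phi^\top D\Phi\bigr]w_n .
\]
The right-hand side tends to zero because \(w_n\to0\). On the left, \(\theta\mapsto V_\theta\) is continuous: it is a componentwise maximum of finitely many linear functions. Hence the left-hand side converges to \(\eta\alpha\Phi^\top D(R+\gamma PV_{\bar\theta_\infty}-\Phi\bar\theta_\infty)\). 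Since \(\eta\alpha>0\), this yields \Cref{eq:gtd_style_same_projected_bellman_equation}.

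For \Cref{eq:gtd_style_limit_residual_greedy}, take any greedy selection \(\pi_\infty\) for \(\bar\theta_\infty\). By definition, \(\Pi^{\pi_\infty}\Phi\bar\theta_\infty=V_{\bar\theta_\infty}\), so \Cref{eq:gtd_style_limit_residual_greedy} is the same equation as \Cref{eq:gtd_style_same_projected_bellman_equation}. By \Cref{lem:prelim_pbe_residual_equivalence}, the zero set of \(g\) coincides with the projected Bellman solution set. Under \Cref{ass:unique_projected_q_bellman} this set is \(\{\theta^\star\}\), which forces \(\bar\theta_\infty=\theta^\star\); together with \(w_\infty=0\) this proves the final claim.

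The only step needing care is the limit of the mode-dependent term. The greedy policies \(\pi_n\) need not converge, and near ties they may oscillate, so I would not pass to the limit in \(\Pi^{\pi_n}\) itself. Expressing the term as \(V_{\bar\theta_n}\) and invoking continuity of \(V_\theta\) sidesteps this issue. It also explains why the statement allows any greedy selection \(\pi_\infty\) at the limit.
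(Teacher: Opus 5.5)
Your proof is correct and follows the paper's route: take the limit in $\bar\theta_{n+1}=\bar\theta_n+\beta w_{n+1}$ to get $w_\infty=0$, take the limit in the $w$-update to get the residual equation, and then invoke uniqueness of the projected Bellman solution. The one difference is how the mode term is handled. The paper takes the limit directly in the $\Pi^{\pi_n}$ form and asserts a greedy $\pi_\infty$ at the limit, which tacitly requires either your argument or a subsequence argument using finiteness of $\Theta$ and closedness of the greedy sets, since $\pi_n$ need not converge. You instead rewrite $\Pi^{\pi_n}\Phi\bar\theta_n=V_{\bar\theta_n}$, pass to the limit using continuity of $\theta\mapsto V_\theta$, and then read off the $\pi_\infty$ form, which is the cleaner justification of that step.
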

\begin{proof}
Taking limits in the second line
$\bar\theta_{n+1}=\bar\theta_n+\beta w_{n+1}$ gives
$0=\beta w_\infty$. Since $\beta>0$, we have $w_\infty=0$. Taking limits in the
first line and using $w_\infty=0$ gives~\Cref{eq:gtd_style_limit_residual_greedy} for a greedy selection
$\pi_\infty$ at $\bar\theta_\infty$. By the definition of the greedy value,
$\Pi^{\pi_\infty}\Phi\bar\theta_\infty=V_{\bar\theta_\infty}$, so~\Cref{eq:gtd_style_limit_residual_greedy} becomes~\Cref{eq:gtd_style_same_projected_bellman_equation}. If the projected
Q-Bellman equation has the unique solution $\theta^\star$, this limiting equation
forces $\bar\theta_\infty=\theta^\star$, and the already established identity
$w_\infty=0$ completes the proof.
\end{proof}

The convergence condition for this two-line recursion is not the original
$\lambda$-DLQL JSR condition, because the state is now the pair
$(\bar\theta_n,w_n)$.
\begin{lemma}
\label{lem:gtd_style_augmented_error_recursion}
Let $x_n:=\bar\theta_n-\theta^\star$.  For the fixed parameters $\alpha$, $\eta$,
and $\beta$ appearing in~\Cref{eq:gtd_style_w_update}--\Cref{eq:gtd_style_theta_update}, define the following policy-indexed augmented
matrix:
\[
\begin{aligned}
A_{\pi,\lambda}^{\mathrm{DLQL3}}
:={}&
\begin{bmatrix}
I+\beta\eta\alpha\Phi^\top D(\gamma P\Pi^\pi\Phi-\Phi)
&
\beta\left(I-\eta\bigl[(1-\lambda)I+\lambda\alpha\Phi^\top D\Phi\bigr]\right)\\
\eta\alpha\Phi^\top D(\gamma P\Pi^\pi\Phi-\Phi)
&
I-\eta\bigl[(1-\lambda)I+\lambda\alpha\Phi^\top D\Phi\bigr]
\end{bmatrix}.
\end{aligned}
\]
With a stochastic policy $\mu$ in place of $\pi$, the same block formula defines
$A_{\mu,\lambda}^{\mathrm{DLQL3}}$. Let
$\mu_n=\mu_{\bar\theta_n,\theta^\star}$ be the stochastic policy supplied
by~\Cref{lem:stochastic_policy_linearization}, so that
\[
V_{\bar\theta_n}-V_{\theta^\star}
=
\Pi^{\mu_n}\Phi x_n.
\]
Then the homogeneous recursion for the two-line scheme is
\begin{equation}
\begin{bmatrix}
x_{n+1}\\
w_{n+1}
\end{bmatrix}
=
A_{\mu_n,\lambda}^{\mathrm{DLQL3}}
\begin{bmatrix}
x_n\\
w_n
\end{bmatrix}.
\label{eq:gtd_style_augmented_error_recursion}
\end{equation}
\end{lemma}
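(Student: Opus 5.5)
The plan is to derive the augmented error recursion by direct substitution. Write the two-line recursion in terms of $x_n=\bar\theta_n-\theta^\star$ and $w_n$, and use the fixed-point identity $\Phi^\top D(R+\gamma PV_{\theta^\star}-\Phi\theta^\star)=0$ from \Cref{lem:prelim_pbe_residual_equivalence} to remove the affine terms.

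First, rewrite the residual in \Cref{eq:gtd_style_w_update}. Since $\pi_n$ is greedy for $\bar\theta_n$, we have $\Pi^{\pi_n}\Phi\bar\theta_n=V_{\bar\theta_n}$. The residual is therefore $\Phi^\top D(R+\gamma PV_{\bar\theta_n}-\Phi\bar\theta_n)$. Subtracting the zero quantity $\Phi^\top D(R+\gamma PV_{\theta^\star}-\Phi\theta^\star)$ turns it into
\[
\Phi^\top D\bigl(\gamma P(V_{\bar\theta_n}-V_{\theta^\star})-\Phi x_n\bigr).
\]
Applying \Cref{lem:stochastic_policy_linearization} with $\mu_n=\mu_{\bar\theta_n,\theta^\star}$ then gives
\[
\Phi^\top D(\gamma P\Pi^{\mu_n}\Phi-\Phi)x_n .
\]
Substituting into the first line yields
\[
w_{n+1}=\eta\alpha\Phi^\top D(\gamma P\Pi^{\mu_n}\Phi-\Phi)x_n+\bigl(I-\eta[(1-\lambda)I+\lambda\alpha\Phi^\top D\Phi]\bigr)w_n ,
\]
which is exactly the second block row of $A^{\mathrm{DLQL3}}_{\mu_n,\lambda}$.

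Second, subtract $\theta^\star$ from \Cref{eq:gtd_style_theta_update} to get $x_{n+1}=x_n+\beta w_{n+1}$. Inserting the expression for $w_{n+1}$ just obtained gives
\[
x_{n+1}=\bigl(I+\beta\eta\alpha\Phi^\top D(\gamma P\Pi^{\mu_n}\Phi-\Phi)\bigr)x_n+\beta\bigl(I-\eta[(1-\lambda)I+\lambda\alpha\Phi^\top D\Phi]\bigr)w_n .
\]
This matches the first block row. Stacking the two rows gives \Cref{eq:gtd_style_augmented_error_recursion}.

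The only point needing care is the ordering of the updates. The boundary update uses the freshly updated $w_{n+1}$, not $w_n$, so $w_{n+1}$ must be substituted before reading off the first block row; this is what produces the $\beta$-scaled entries in that row. Everything else is routine algebra. No step-size assumption is needed for the identity itself. The linearization also holds for every $\lambda\in[0,1]$, because $\mu_n$ depends only on $(\bar\theta_n,\theta^\star)$ and not on $\lambda$, so no case split on the endpoint $\lambda=1$ is required.
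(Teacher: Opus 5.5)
Your proposal is correct and follows essentially the same route as the paper. Both arguments cancel the affine terms using the projected Bellman residual at $\theta^\star$ and linearize $V_{\bar\theta_n}-V_{\theta^\star}$ through $\mu_n$ to obtain the second block row, then substitute $w_{n+1}$ into $x_{n+1}=x_n+\beta w_{n+1}$ to obtain the first block row.
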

\begin{proof}
The algorithmic policy $\pi_n$ in~\Cref{eq:gtd_style_w_update} is greedy for
$\bar\theta_n$, hence
$\Pi^{\pi_n}\Phi\bar\theta_n=V_{\bar\theta_n}$. Subtracting the projected
Q-Bellman fixed-point residual at $\theta^\star$ gives
\[
\begin{aligned}
&\Phi^\top D
\left(R+\gamma P\Pi^{\pi_n}\Phi\bar\theta_n-\Phi\bar\theta_n\right) \\
&\qquad
=\Phi^\top D\left(\gamma P(V_{\bar\theta_n}-V_{\theta^\star})-\Phi x_n\right)
=\Phi^\top D(\gamma P\Pi^{\mu_n}\Phi-\Phi)x_n.
\end{aligned}
\]
Therefore the first line of the two-line recursion gives
\[
w_{n+1}
=
\eta\alpha\Phi^\top D(\gamma P\Pi^{\mu_n}\Phi-\Phi)x_n
+
\left(I-\eta\bigl[(1-\lambda)I+\lambda\alpha\Phi^\top D\Phi\bigr]\right)w_n.
\]
The second line gives $x_{n+1}=x_n+\beta w_{n+1}$. Substituting the preceding
expression for $w_{n+1}$ into this identity yields the first block row of the
displayed matrix, while the expression for $w_{n+1}$ itself gives the second
block row, with $\pi$ replaced by $\mu_n$.
\end{proof}
For local analysis on a region where one deterministic policy mode represents
the value difference, write that mode as $\pi$.  For the global switching
certificate, one may use the same stochastic-policy linearization
in~\Cref{lem:stochastic_policy_linearization}; the algorithmic policy is still
the greedy policy in~\Cref{eq:gtd_style_greedy_policy}, but the proof represents
$V_{\bar\theta_n}-V_{\theta^\star}$ by a policy-indexed linear mode.  Thus a natural sufficient certificate for the inverse-free two-line
recursion is the JSR stability of the finite augmented family
\[
\mathcal A_{\lambda}^{\mathrm{DLQL3}}
:=
\set{A_{\pi,\lambda}^{\mathrm{DLQL3}}:\pi\in\Theta}.
\]
Equivalently, the augmented JSR condition is
\begin{equation}
\rho(\mathcal A_{\lambda}^{\mathrm{DLQL3}})<1,
\label{eq:gtd_style_augmented_jsr_condition}
\end{equation}
with the stochastic-policy version giving the same deterministic-mode certificate
by the convex-hull argument used for the earlier switching families.  This
condition is a stability condition for the augmented $2m$-dimensional system; it
is not the same as $\rho(\mathcal A_\lambda^{\mathrm{DLQL}})<1$ for the inverse-form
boundary update.

The augmented condition in~\Cref{eq:gtd_style_augmented_jsr_condition} is the
switched-system certificate needed for the simultaneous two-line recursion. The
next theorem shows that this certificate follows from the inverse-form JSR
condition itself, provided the boundary-target step-size $\beta$ is chosen
sufficiently small.
\begin{theorem}
\label{thm:gtd_style_jsr_small_beta_stability}
Fix $0\leq\lambda\leq1$ and suppose the auxiliary step-size satisfies~\Cref{ass:gtd_style_inner_stepsize}.
If either $0\leq\lambda<1$ and $\rho(\mathcal A_\lambda^{\mathrm{DLQL}})<1$, or
$\lambda=1$ and $\rho(\mathcal A^{\mathrm{PQVI}})<1$, then there exists
$\beta_0\in(0,1]$ such that, for every $0<\beta\leq\beta_0$, the corresponding
augmented family satisfies
\[
\rho(\mathcal A_{\lambda}^{\mathrm{DLQL3}})<1.
\]
Consequently, for every $0<\beta\leq\beta_0$, the inverse-free
two-line recursion is uniformly exponentially stable under arbitrary switching
whenever the corresponding inverse-form switched family is JSR-stable. The same
common contraction applies to the stochastic-policy augmented matrices obtained
by replacing $\pi$ with a stochastic policy $\mu$ satisfying
$\mu(s)\in\Delta_{|\mathcal A|}$. At $\beta=0$,
however, the corresponding augmented family has a unit slow block and
$\rho(\mathcal A_{\lambda}^{\mathrm{DLQL3}})=1$.
\end{theorem}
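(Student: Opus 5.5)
The plan is to change coordinates so that the fast auxiliary variable $w$ and a slowly moving combination of $\bar\theta$ and $w$ decouple up to $O(\beta^2)$. The change of coordinates must not depend on the mode. Then I would build an explicit common norm-type Lyapunov function for the whole convex hull of the augmented modes.

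\textbf{Notation and coordinates.} Write
\[
M:=(1-\lambda)I+\lambda\alpha\Phi^\top D\Phi,\qquad G_\mu:=\alpha\Phi^\top D(\gamma P\Pi^\mu\Phi-\Phi),\qquad S:=I-\eta M.
\]
By \Cref{lem:period_lambda_correction_matrix_invertible}, $M\succ0$. By \Cref{lem:gtd_style_richardson_iteration_matrix_stability}, $S$ is symmetric with $r:=\|S\|_2<1$. By definition, $A^{\mathrm{DLQL2}}_{\mu,\lambda}=I+M^{-1}G_\mu$. This equals $A^{\mathrm{DLQL}}_{\mu,\lambda}$ for $\lambda<1$ by \Cref{lem:lambda_dlql2_equals_lambda_dlql}, and equals $A^{\mathrm{PQVI}}_\mu$ for $\lambda=1$ by \Cref{prop:recursive_lambda_inverse_error_switching}.

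A mode-by-mode analysis fails because the natural slow variable $w-M^{-1}G_\mu x$ jumps when the mode switches, and the JSR concerns arbitrary switching. I would therefore use the mode-independent transform
\[
v:=x+\beta(\eta M)^{-1}S\,w .
\]
Using $x_{n+1}=x_n+\beta w_{n+1}$ and $w_{n+1}=Sw_n+\eta G_{\mu}x_n$, a short calculation gives $v_{n+1}=v_n+\beta M^{-1}G_\mu x_n$. Substituting $x_n=v_n-\beta(\eta M)^{-1}Sw_n$ then yields
\begin{align*}
v_{n+1}&=\bigl[(1-\beta)I+\beta A^{\mathrm{DLQL2}}_{\mu,\lambda}\bigr]v_n-\beta^2M^{-1}G_\mu(\eta M)^{-1}S\,w_n,\\
w_{n+1}&=\bigl(S-\beta G_\mu M^{-1}S\bigr)w_n+\eta G_\mu v_n .
\end{align*}
Because $(v,w)\mapsto(x,w)$ is an invertible linear map that is the same for all modes, the JSR is unchanged by it.

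\textbf{Lyapunov estimate.} Apply \Cref{lem:common_lyapunov_construction} to $\mathcal A_\lambda^{\mathrm{DLQL}}$, or to $\mathcal A^{\mathrm{PQVI}}$ when $\lambda=1$. This gives a norm $p$ with $p(Bv)\le\beta_\epsilon p(v)$ for all $B$ in the convex hull. Hence
\[
p\bigl(((1-\beta)I+\beta B)v\bigr)\le(1-\beta c)\,p(v),\qquad c:=1-\beta_\epsilon>0 .
\]
Let $K$ bound $\|G_\mu\|$ over stochastic policies; this is finite since the $\Pi^\mu$ form a compact set and $G_\mu$ is affine in $\mu$. Norm equivalence then gives constants $c_2,c_3,c_4$ with
\[
p(v_{n+1})\le(1-\beta c)p(v_n)+c_2\beta^2\|w_n\|_2,\qquad \|w_{n+1}\|_2\le(r+c_3\beta)\|w_n\|_2+c_4\,p(v_n).
\]
Take $W(v,w):=p(v)+\kappa\|w\|_2$ with $\kappa:=\beta c/(2c_4)$. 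Then
\[
W_{n+1}\le(1-\beta c/2)p(v_n)+\bigl(c_2\beta^2+\kappa(r+c_3\beta)\bigr)\|w_n\|_2 .
\]
The coefficient of $\|w_n\|_2$ equals $\kappa\bigl(r+c_3\beta+2c_2c_4\beta/c\bigr)$. For $\beta\le\beta_0$ small, this is at most $\kappa(1+r)/2$. So $W_{n+1}\le\max\{1-\beta c/2,(1+r)/2\}\,W_n$, uniformly over all stochastic $\mu$.

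Since $W$ is a norm, it gives a common contraction factor strictly below one for every product from the convex hull. This proves $\rho(\mathcal A_\lambda^{\mathrm{DLQL3}})<1$, uniform exponential stability of the two-line recursion, and the stochastic-policy claim. One caveat: $\kappa$ depends on $\beta$, so I would fix $\beta$ first and then define $W$. That is fine because the conclusion is stated for each fixed $\beta\le\beta_0$.

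\textbf{The case $\beta=0$.} Every augmented matrix is block lower triangular with diagonal blocks $I$ and $S$. So $1$ is an eigenvalue of each mode, which gives $\rho\ge1$. Products have the form $\begin{bmatrix}I&0\\ \ast&S\cdots\end{bmatrix}$, where the $\ast$ block is bounded by $\sum_j Kr^j$. The products are therefore uniformly bounded, which gives $\rho\le1$, so $\rho(\mathcal A_{\lambda}^{\mathrm{DLQL3}})=1$.

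\textbf{Main obstacle.} The main step is finding the mode-independent slow coordinate $v$. Without it, the tracking error $w-M^{-1}G_\mu x$ jumps by $O(\|x\|)$ under arbitrary switching, and the $O(\beta)$ coupling terms compete with the $O(\beta)$ decrease. The leftover coupling in the $v$-equation must then be shown to be $O(\beta^2)$, which is what makes the choice of weight $\kappa$ close.
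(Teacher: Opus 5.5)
Your proposal is correct and follows the paper's proof essentially step for step. Your mode-independent coordinate $v=x+\beta(\eta M)^{-1}Sw$ is the paper's common block similarity $z=x+\beta S(\eta M)^{-1}w$, since $S$ and $M$ commute. Your weighted norm $p(v)+\kappa\|w\|_2$ with $\kappa\propto\beta$ is, up to scaling, the paper's $W_\beta(z,w)=\|w\|_2+\frac{K}{\beta}p(z)$, and it rests on the same convexity estimate $p\bigl(((1-\beta)I+\beta A^{\mathrm{DLQL2}}_{\pi,\lambda})u\bigr)\le(1-c\beta)p(u)$. The only difference is at $\beta=0$, where your uniform bound on the block-triangular products justifies $\rho\le 1$ more explicitly than the paper's eigenvalue remark; otherwise the two arguments coincide.
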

The proof is given in~\Cref{app:proof:thm:gtd_style_jsr_small_beta_stability}.
Thus the augmented JSR condition in~\Cref{eq:gtd_style_augmented_jsr_condition} for~\Cref{alg:gtd_style_two_line_inverse_free_recursion} can be inherited from the
original inverse-form JSR certificate only as a small-positive-$\beta$ result,
through one common Lyapunov norm for all switched modes.

The GTD-style deterministic two-line recursion removes the explicit inverse by
tracking the correction equation. If it converges, it converges to the same
projected Bellman solution as the original recursion, not to a regularized or
biased fixed point. The path and the stability certificate change:
the inverse-form method is governed by the $m$-dimensional boundary map, whereas~\Cref{eq:gtd_style_w_update}--\Cref{eq:gtd_style_theta_update} is governed by the
augmented $2m$-dimensional switching dynamics in~\Cref{eq:gtd_style_augmented_error_recursion}.

\section{Sampled Period-\texorpdfstring{$m$}{m} Implementation for \texorpdfstring{$\lambda$}{lambda}-DLQL}
\label{sec:sampled_period_m_targets}
\label{app:sampled_period_m_targets}

The $\lambda$-DLQL target in~\Cref{eq:lambda_averaged_target_parameter}
uses all finite hard-target periods with geometric weights. The recursive solver
in~\Cref{sec:recursive_period_lambda_solver} computes this average exactly from a
linear correction equation~\Cref{eq:period_lambda_linear_system}, while the
inverse-free recursion in~\Cref{alg:gtd_style_two_line_inverse_free_recursion}
tracks the same correction with the auxiliary update~\Cref{eq:gtd_style_w_update}--\Cref{eq:gtd_style_theta_update}. A second
implementation is to sample one period $m$ and use the sampled candidate as the
next boundary parameter. This section uses the sampled candidate directly as the
next boundary target.

For $0\leq\lambda<1$, let $M_\lambda$ be the geometric random variable on
$\{1,2,\ldots\}$ defined by
\begin{align}
\mathbb P(M_\lambda=m)=(1-\lambda)\lambda^{m-1},
\qquad m=1,2,\ldots.
\label{eq:sampled_m_geometric_law}
\end{align}
For $\lambda=0$, set $M_0=1$. The endpoint $\lambda=1$ is not sampled
from~\Cref{eq:sampled_m_geometric_law}; as in the preceding sections, it is the
continuous PQVI endpoint $A_\pi^{\mathrm{PQVI}}$.

For a fixed boundary target and deterministic mode, the sampled period-$m$
candidate is $h_\pi^m(\bar\theta;\bar\theta)$, as in~\Cref{eq:period_m_as_frozen_iterate}, and the corresponding boundary-error matrix
is $A_{\pi,m}^{\mathrm{DLQL}}$ from~\Cref{eq:m_dlql_A_mu_m_definition}. The sampled
implementation below uses exactly these existing objects.

\begin{algorithm}[H]
\caption{Single-Sample $\lambda$-DLQL Boundary Update}
\label{alg:sampled_m_period_lambda_target}
\begin{algorithmic}[1]
\REQUIRE Boundary target $\bar\theta_n$ and parameter $\lambda\in[0,1)$.
\STATE Freeze the boundary target $\bar\theta_n$.
\STATE Draw $M_n\sim\operatorname{Geom}(1-\lambda)$ on $\{1,2,\ldots\}$ using~\Cref{eq:sampled_m_geometric_law}; if $\lambda=0$, set $M_n=1$.
\STATE Set $\widehat\theta_n^{(0)}\leftarrow\bar\theta_n$.
\FOR{$i=0,\ldots,M_n-1$}
\STATE Update with the same frozen target:
$\displaystyle
\widehat\theta_n^{(i+1)}
=
\widehat\theta_n^{(i)}+
\alpha\Phi^\top D\left(R+\gamma P V_{\bar\theta_n}-\Phi\widehat\theta_n^{(i)}\right).
$
\ENDFOR
\STATE Set the sampled boundary candidate $\widehat\theta_{n+1}\leftarrow\widehat\theta_n^{(M_n)}$.
\STATE Set the next boundary parameter to the sampled candidate:
$\displaystyle
\bar\theta_{n+1}
\leftarrow
\widehat\theta_{n+1}.
$
\end{algorithmic}
\end{algorithm}

\Cref{alg:sampled_m_period_lambda_target} is the single-sample randomized period
update: the sampled candidate is not averaged with other candidates before the
boundary is moved. The next lemma identifies the mean target of this single
sample, using the same period maps and period-$\lambda$ matrices already
introduced above.

\begin{lemma}
\label{lem:sampled_m_unbiased_boundary_map}
Fix a boundary target $\bar\theta_n$ and a deterministic mode $\pi_n$. Let
$M_\lambda$ be independent of $\bar\theta_n$ and distributed according
to~\Cref{eq:sampled_m_geometric_law}. Then
\begin{align}
\mathbb E\left[
 h_{\pi_n}^{M_\lambda}(\bar\theta_n;\bar\theta_n)
\right]
=
\bar\theta_{n+1},
\label{eq:sampled_m_unbiased_parameter}
\end{align}
where $\bar\theta_{n+1}$ is the geometrically averaged target in~\Cref{eq:lambda_averaged_target_parameter}. In the homogeneous error model, for
any policy mode $\mu$ and every error vector $e$,
\begin{align}
\mathbb E\left[A_{\mu,M_\lambda}^{\mathrm{DLQL}}e\right]
=
A_{\mu,\lambda}^{\mathrm{DLQL}}e.
\label{eq:sampled_m_unbiased_error}
\end{align}
Consequently, if the boundary-error mode is $\mu_n$, the conditional mean of the
sampled update is
\begin{align}
\mathbb E\left[\bar\theta_{n+1}-\theta^\star\mid \bar\theta_n,\mu_n\right]
=
A_{\mu_n,\lambda}^{\mathrm{DLQL}}
(\bar\theta_n-\theta^\star).
\label{eq:sampled_m_soft_mean_update}
\end{align}
\end{lemma}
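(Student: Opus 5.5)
The plan is to compute the expectation directly from the geometric law \Cref{eq:sampled_m_geometric_law} and then identify the resulting series with the objects already defined. With $\bar\theta_n$ and $\pi_n$ fixed and $M_\lambda$ independent of $\bar\theta_n$, the random candidate $h_{\pi_n}^{M_\lambda}(\bar\theta_n;\bar\theta_n)$ takes the value $h_{\pi_n}^{m}(\bar\theta_n;\bar\theta_n)$ with probability $(1-\lambda)\lambda^{m-1}$. Its expectation is therefore
\[
(1-\lambda)\sum_{m=1}^{\infty}\lambda^{m-1}h_{\pi_n}^{m}(\bar\theta_n;\bar\theta_n),
\]
which is $\bar\theta_{n+1}$ by \Cref{eq:lambda_averaged_target_parameter}. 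The case $\lambda=0$ is trivial, since $M_0=1$ and the series reduces to its first term.

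The only point that needs care is that the expectation exists, so that the sum and the expectation may be identified. I would reuse the bound from the proof of \Cref{prop:recursive_lambda_target}. Under \Cref{ass:hard_target_stepsize} there are constants with $\|(I-\alpha\Phi^\top D\Phi)^m\|_2\leq Cr^m$, and hence the iterates $h_{\pi_n}^{m}(\bar\theta_n;\bar\theta_n)$ are uniformly bounded in $m$. The random vector is thus bounded, its expectation is finite, and the expectation equals the absolutely convergent weighted sum. This gives \Cref{eq:sampled_m_unbiased_parameter}.

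For \Cref{eq:sampled_m_unbiased_error}, I would argue the same way with the matrices $A_{\mu,m}^{\mathrm{DLQL}}$. By \Cref{lem:period_map_decomposition},
\[
A_{\mu,m}^{\mathrm{DLQL}}=A_\mu^{\mathrm{PQVI}}+(I-\alpha\Phi^\top D\Phi)^m(I-A_\mu^{\mathrm{PQVI}}),
\]
so these matrices are uniformly bounded in $m$. Then
\[
\mathbb E\bigl[A_{\mu,M_\lambda}^{\mathrm{DLQL}}e\bigr]=(1-\lambda)\sum_{m\geq1}\lambda^{m-1}A_{\mu,m}^{\mathrm{DLQL}}e=A_{\mu,\lambda}^{\mathrm{DLQL}}e,
\]
where the last equality is the definition \Cref{eq:lambda_dlql_A_pi_lambda_definition}.

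For \Cref{eq:sampled_m_soft_mean_update}, condition on $\bar\theta_n$ and on the mode $\mu_n=\mu_{\bar\theta_n,\theta^\star}$. By \Cref{lem:hard_period_boundary_map}, for each realized period $m$ the sampled boundary error is $A_{\mu_n,m}^{\mathrm{DLQL}}(\bar\theta_n-\theta^\star)$, with the same $\mu_n$ for every $m$ because the target is frozen. Since $M_\lambda$ is independent of $(\bar\theta_n,\mu_n)$, applying \Cref{eq:sampled_m_unbiased_error} with $e=\bar\theta_n-\theta^\star$ gives the claim.

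No step is a real obstacle. The only point to state explicitly is that $\mu_n$ depends only on $\bar\theta_n$ and not on $M_\lambda$, so it can be pulled outside the conditional expectation.
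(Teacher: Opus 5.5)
Your proposal is correct and follows the paper's proof. You expand the expectation with the geometric weights and identify it with \Cref{eq:lambda_averaged_target_parameter} and \Cref{eq:lambda_dlql_A_pi_lambda_definition}, then obtain the conditional mean from the exact hard-period representation $\bar\theta_{n+1}-\theta^\star=A_{\mu_n,M_n}^{\mathrm{DLQL}}(\bar\theta_n-\theta^\star)$ and the independence of the sampled period; your uniform-boundedness check for the existence of the expectations is a small addition that the paper leaves implicit in this proof and records separately in \Cref{app:proof:prop:sampled_m_second_moment_bound}.
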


\begin{proof}
Using~\Cref{eq:sampled_m_geometric_law},
\[
\mathbb E\left[
 h_{\pi_n}^{M_\lambda}(\bar\theta_n;\bar\theta_n)
\right]
=
\sum_{m=1}^{\infty}(1-\lambda)\lambda^{m-1}
 h_{\pi_n}^{m}(\bar\theta_n;\bar\theta_n)
=
\bar\theta_{n+1}.
\]
This proves~\Cref{eq:sampled_m_unbiased_parameter}. The error identity is the
same calculation applied to the matrices $A_{\mu,m}^{\mathrm{DLQL}}$ that map one
target-boundary error to the next:
\[
\mathbb E\left[A_{\mu,M_\lambda}^{\mathrm{DLQL}}e\right]
=
\sum_{m=1}^{\infty}(1-\lambda)\lambda^{m-1}
A_{\mu,m}^{\mathrm{DLQL}}e
=
A_{\mu,\lambda}^{\mathrm{DLQL}}e.
\]
The sampled update satisfies
\[
\bar\theta_{n+1}-\theta^\star
=
A_{\mu_n,M_\lambda}^{\mathrm{DLQL}}(\bar\theta_n-\theta^\star),
\]
so taking the conditional expectation gives~\Cref{eq:sampled_m_soft_mean_update}.
\end{proof}

The preceding identity explains the average drift of the sampled update. A
uniform sampled-period second-moment bound is stated and proved in~\Cref{app:proof:prop:sampled_m_second_moment_bound}. It justifies square
integrability for the expectation estimate below; the contraction estimate itself
is pathwise.

The finite-time statement below states the stability condition directly in JSR
form. The relevant matrices are the boundary matrices produced after drawing a
period. We denote this sampled boundary family by
\begin{equation*}
\mathcal W_{\alpha}
:=
\set{A_{\pi,m}^{\mathrm{DLQL}}:
\pi\in\Theta,\ m\in\{1,2,\ldots\}}.
\end{equation*}
Thus once this family is JSR-stable, the bound holds for every realized sequence
of sampled periods. The theorem uses the same convex-hull passage as the earlier
switching certificates: the algorithm selects a greedy boundary target, while
the exact error representation may use the stochastic policy supplied
by~\Cref{lem:stochastic_policy_linearization}.
\begin{theorem}
\label{thm:sampled_m_soft_finite_time_bound}
Assume~\Cref{ass:hard_target_stepsize}. Fix $0\leq\lambda<1$, and let
$\theta^\star$ be the projected Bellman fixed point
from~\Cref{ass:unique_projected_q_bellman}. Let $\bar\theta_{n+1}$ be generated
by~\Cref{alg:sampled_m_period_lambda_target}. At each boundary, choose the deterministic greedy mode $\pi_n$ satisfying
$V_{\bar\theta_n}=\Pi^{\pi_n}\Phi\bar\theta_n$ with fixed tie breaking,
and represent the boundary error by
$\mu_n=\mu_{\bar\theta_n,\theta^\star}$ from~\Cref{lem:stochastic_policy_linearization}. Suppose that
\begin{equation*}
\rho(\mathcal W_{\alpha})<1.
\end{equation*}
Then, for every $\epsilon>0$ such that
\begin{equation*}
\rho(\mathcal W_{\alpha})+\epsilon<1,
\end{equation*}
there exists $C_\epsilon\geq1$ such that, for every $n\geq0$,
\begin{align}
\mathbb E\left[\|\bar\theta_n-\theta^\star\|_2^2\right]
\leq
C_\epsilon
\left[
\rho(\mathcal W_{\alpha})+\epsilon
\right]^{2n}
\|\bar\theta_0-\theta^\star\|_2^2.
\label{eq:sampled_m_soft_finite_time_bound}
\end{align}
The same estimate holds pathwise before taking expectation.
\end{theorem}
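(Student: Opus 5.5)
The plan is to prove the pathwise bound first and then take expectations; since the bound is deterministic, the expectation step is trivial. One technical point must be handled first. The family $\mathcal W_\alpha$ is infinite, so \Cref{lem:common_lyapunov_construction} as stated for finite families does not apply verbatim. Instead I will build the extremal-type norm directly for the bounded family $\mathcal W_\alpha$.

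\emph{Step 1 (boundedness).} By \Cref{lem:period_map_decomposition}, $A_{\pi,m}^{\mathrm{DLQL}}=A_\pi^{\mathrm{PQVI}}+(I-\alpha\Phi^\top D\Phi)^m(I-A_\pi^{\mathrm{PQVI}})$. Under \Cref{ass:hard_target_stepsize}, $\|(I-\alpha\Phi^\top D\Phi)^m\|_2\le Cr^m\le C$. Since $\Theta$ is finite, it follows that $\sup_{W\in\mathcal W_\alpha}\|W\|_2<\infty$, so the JSR in \Cref{def:jsr} is well defined for $\mathcal W_\alpha$.

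\emph{Step 2 (Lyapunov norm).} Fix $\beta_\epsilon=\rho(\mathcal W_\alpha)+\epsilon<1$. By the definition of the JSR as a limit, there is $K$ such that $\sup\|W_k\cdots W_1\|_2\le(\rho+\epsilon/2)^k$ for all $k\ge K$. Together with boundedness for $k<K$, this gives a constant $C_\epsilon\ge1$ with $\sup\|W_k\cdots W_1\|_2^2\le C_\epsilon\beta_\epsilon^{2k}$ for all $k\ge0$. Define
\[
p_\epsilon(x):=\sup_{k\ge0}\ \sup_{W_1,\ldots,W_k\in\mathcal W_\alpha}\beta_\epsilon^{-k}\|W_k\cdots W_1x\|_2 .
\]
This is a norm with $\|x\|_2\le p_\epsilon(x)\le\sqrt{C_\epsilon}\|x\|_2$, and it satisfies $p_\epsilon(Wx)\le\beta_\epsilon p_\epsilon(x)$ for every $W\in\mathcal W_\alpha$. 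Because $p_\epsilon$ is convex, the same contraction extends to every $B\in\co(\{A_{\pi,m}^{\mathrm{DLQL}}:\pi\in\Theta\})$ for each fixed $m$.

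\emph{Step 3 (exact error recursion and convex-hull passage).} For the realized draw $M_n$, \Cref{lem:hard_period_boundary_map} with $\mu_n=\mu_{\bar\theta_n,\theta^\star}$ gives
\[
\bar\theta_{n+1}-\theta^\star=A_{\mu_n,M_n}^{\mathrm{DLQL}}(\bar\theta_n-\theta^\star).
\]
The map $\mu\mapsto A_{\mu,m}^{\mathrm{DLQL}}$ is affine in $\Pi^\mu$. Moreover, $\Pi^\mu$ is a convex combination of the matrices $\Pi^\pi$, $\pi\in\Theta$; one realization is the product weights $\prod_s\mu(s)_{\pi(s)}$. Hence $A_{\mu_n,M_n}^{\mathrm{DLQL}}\in\co\{A_{\pi,M_n}^{\mathrm{DLQL}}:\pi\in\Theta\}$. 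Step 2 then yields $p_\epsilon(\bar\theta_{n+1}-\theta^\star)\le\beta_\epsilon\,p_\epsilon(\bar\theta_n-\theta^\star)$ for every realization. Iterating and using the norm equivalence gives the pathwise bound
\[
\|\bar\theta_n-\theta^\star\|_2^2\le C_\epsilon\beta_\epsilon^{2n}\|\bar\theta_0-\theta^\star\|_2^2 .
\]
Its right-hand side is deterministic, so taking expectations gives \eqref{eq:sampled_m_soft_finite_time_bound}. Integrability is automatic from the pathwise bound, and it is also covered by the appendix second-moment bound.

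The main obstacle is Step 2: extending the Lyapunov construction to the infinite family $\mathcal W_\alpha$. In particular, one must check that the uniform product bound $\sup\|W_k\cdots W_1\|\le\sqrt{C_\epsilon}\beta_\epsilon^k$ holds with a single constant. This requires the limit in \Cref{def:jsr} to exist for bounded sets (Fekete's lemma via submultiplicativity) together with the uniform bound from Step 1. Everything else reduces to the convex-hull argument already used for the earlier certificates.
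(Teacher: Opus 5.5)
Your proposal is correct and follows the paper's proof in all essentials: boundedness of $\mathcal W_\alpha$ from the period-map decomposition, the exact recursion $\bar\theta_{n+1}-\theta^\star=A_{\mu_n,M_n}^{\mathrm{DLQL}}(\bar\theta_n-\theta^\star)$ with the realized matrix placed in a convex hull, a uniform product bound, and a deterministic pathwise estimate that then passes directly to expectations. The only difference is that you build an explicit extremal norm on the infinite family $\mathcal W_\alpha$ and extend the contraction to convex combinations by convexity of that norm, whereas the paper cites $\rho(\co(\mathcal W_\alpha))=\rho(\mathcal W_\alpha)$ and reads the product bound off the JSR definition for $\co(\mathcal W_\alpha)$; the two routes are equivalent.
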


\begin{proof}
Under~\Cref{ass:hard_target_stepsize},
\(\rho(I-\alpha\Phi^\top D\Phi)<1\). Since \(\Phi^\top D\Phi\) is
symmetric positive definite, \(I-\alpha\Phi^\top D\Phi\) is symmetric and
\(\sup_{m\geq0}\|(I-\alpha\Phi^\top D\Phi)^m\|_2<\infty\). Because the
deterministic policy set \(\Theta\) is finite, the representation
\[
A_{\pi,m}^{\mathrm{DLQL}}
=A_\pi^{\mathrm{PQVI}}+(I-\alpha\Phi^\top D\Phi)^m(I-A_\pi^{\mathrm{PQVI}})
\]
from~\Cref{eq:m_dlql_converges_to_pqvi} shows that the family
$\mathcal W_{\alpha}$ is bounded.

For a realized boundary-error mode $\mu_n$ and sampled period $M_n$, the
hard-target candidate produced inside~\Cref{alg:sampled_m_period_lambda_target}
has homogeneous error
\[
\bar\theta_{n+1}-\theta^\star
=
A_{\mu_n,M_n}^{\mathrm{DLQL}}(\bar\theta_n-\theta^\star).
\]
For each fixed $M_n$, the dependence on $\Pi^{\mu_n}$ is affine, and the
stochastic-policy matrix lies in the convex hull of the corresponding
deterministic-policy matrices. Hence the realized matrix belongs to
$\co(\mathcal W_{\alpha})$.

The JSR is unchanged by passing to the convex hull, so
$\rho(\co(\mathcal W_{\alpha}))=\rho(\mathcal W_{\alpha})$. By the JSR definition
for the bounded family $\co(\mathcal W_{\alpha})$, there is a constant
$C_\epsilon\geq1$ such that every product $W_{n-1}\cdots W_0$ of matrices from
$\co(\mathcal W_{\alpha})$ satisfies
\[
\|W_{n-1}\cdots W_0\|_2^2
\leq
C_\epsilon\left[\rho(\mathcal W_{\alpha})+\epsilon\right]^{2n},
\qquad n\geq0.
\]
Applying this bound to the realized product along the sampled path gives
\[
\|\bar\theta_n-\theta^\star\|_2^2
\leq
C_\epsilon\left[\rho(\mathcal W_{\alpha})+\epsilon\right]^{2n}
\|\bar\theta_0-\theta^\star\|_2^2,
\qquad n\geq0.
\]
This is the claimed pathwise estimate. Taking expectations over the sampled
periods gives~\Cref{eq:sampled_m_soft_finite_time_bound}.
\end{proof}

\section{Conclusion}

We have presented the $\lambda$-DLQL target mechanism for linear Q-learning.
DLQL and PQVI solve the same projected Q-Bellman equation when the
fixed point is well defined, but they produce different switched error families.
The $\lambda$-DLQL boundary map connects these two endpoints by geometrically
averaging the hard-target period maps used as building blocks.

Under the relaxation step-size condition, the mode $A_{\pi,\lambda}^{\mathrm{DLQL}}$ equals the
DLQL mode at $\lambda=0$ and converges to the PQVI mode as
$\lambda\uparrow1$. The corresponding family
$\mathcal A_\lambda^{\mathrm{DLQL}}$ therefore gives a direct JSR certificate for the
boundary recursion. The recursive correction equation, its inverse-free auxiliary version, and the
sampled period-$m$ boundary update provide implementable forms of the same
$\lambda$-DLQL target mechanism without introducing a different fixed-point
equation.


\appendix

\section{Proof of~\Cref{thm:gtd_style_jsr_small_beta_stability}}
\label{app:proof:thm:gtd_style_jsr_small_beta_stability}

\begin{proof}
For a deterministic mode \(\pi\), the inverse-form boundary-error mode is
\[
A_{\pi,\lambda}^{\mathrm{DLQL2}}
=
I+\bigl[(1-\lambda)I+\lambda\alpha\Phi^\top D\Phi\bigr]^{-1}
\alpha\Phi^\top D(\gamma P\Pi^\pi\Phi-\Phi)
\]
for \(0\leq\lambda<1\). By~\Cref{lem:lambda_dlql2_equals_lambda_dlql}, this is the same matrix as
\(A_{\pi,\lambda}^{\mathrm{DLQL}}\). At the endpoint \(\lambda=1\), the same
inverse-form expression becomes
\[
I+(\Phi^\top D\Phi)^{-1}\Phi^\top D(\gamma P\Pi^\pi\Phi-\Phi)
=A_\pi^{\mathrm{PQVI}}.
\]
By~\Cref{lem:period_lambda_correction_matrix_invertible},
\((1-\lambda)I+\lambda\alpha\Phi^\top D\Phi\) is symmetric positive definite,
and~\Cref{ass:gtd_style_inner_stepsize} gives
\[
r:=
\left\|I-\eta\bigl[(1-\lambda)I+\lambda\alpha\Phi^\top D\Phi\bigr]\right\|_2<1.
\]

For \(0<\beta\leq1\), apply the same block similarity transformation to every
mode:
\[
\begin{bmatrix}z_n\\ w_n\end{bmatrix}
=
\begin{bmatrix}
I&
\beta\left(I-\eta\bigl[(1-\lambda)I+\lambda\alpha\Phi^\top D\Phi\bigr]\right)
\left(\eta\bigl[(1-\lambda)I+\lambda\alpha\Phi^\top D\Phi\bigr]\right)^{-1}\\
0&I
\end{bmatrix}
\begin{bmatrix}x_n\\ w_n\end{bmatrix},
\]
with inverse block matrix
\[
\begin{bmatrix}x_n\\ w_n\end{bmatrix}
=
\begin{bmatrix}
I&
-\beta\left(I-\eta\bigl[(1-\lambda)I+\lambda\alpha\Phi^\top D\Phi\bigr]\right)
\left(\eta\bigl[(1-\lambda)I+\lambda\alpha\Phi^\top D\Phi\bigr]\right)^{-1}\\
0&I
\end{bmatrix}
\begin{bmatrix}z_n\\ w_n\end{bmatrix}.
\]
Thus each augmented mode is transformed by the same \(2\times2\) block
similarity relation:
\[
\begin{aligned}
\begin{bmatrix}z_{n+1}\\ w_{n+1}\end{bmatrix}
&=
\begin{bmatrix}
I&
\beta\left(I-\eta\bigl[(1-\lambda)I+\lambda\alpha\Phi^\top D\Phi\bigr]\right)
\left(\eta\bigl[(1-\lambda)I+\lambda\alpha\Phi^\top D\Phi\bigr]\right)^{-1}\\
0&I
\end{bmatrix}
A_{\pi,\lambda}^{\mathrm{DLQL3}}                                      \\
&\quad\times
\begin{bmatrix}
I&
-\beta\left(I-\eta\bigl[(1-\lambda)I+\lambda\alpha\Phi^\top D\Phi\bigr]\right)
\left(\eta\bigl[(1-\lambda)I+\lambda\alpha\Phi^\top D\Phi\bigr]\right)^{-1}\\
0&I
\end{bmatrix}
\begin{bmatrix}z_n\\ w_n\end{bmatrix}.
\end{aligned}
\]
Expanding this \(2\times2\) block product gives the transformed recursion
\begin{align*}
z_{n+1}
&=\left(I+
\beta\bigl[(1-\lambda)I+\lambda\alpha\Phi^\top D\Phi\bigr]^{-1}
\alpha\Phi^\top D(\gamma P\Pi^\pi\Phi-\Phi)\right)z_n \\
&\quad-
\beta^2\bigl[(1-\lambda)I+\lambda\alpha\Phi^\top D\Phi\bigr]^{-1}
\alpha\Phi^\top D(\gamma P\Pi^\pi\Phi-\Phi)
\left(I-\eta\bigl[(1-\lambda)I+\lambda\alpha\Phi^\top D\Phi\bigr]\right) \\
&\qquad\times
\left(\eta\bigl[(1-\lambda)I+\lambda\alpha\Phi^\top D\Phi\bigr]\right)^{-1}w_n,\\
w_{n+1}
&=\eta\alpha\Phi^\top D(\gamma P\Pi^\pi\Phi-\Phi)z_n \\
&\quad+
\Bigl(I-\eta\bigl[(1-\lambda)I+\lambda\alpha\Phi^\top D\Phi\bigr]
-\beta\eta\alpha\Phi^\top D(\gamma P\Pi^\pi\Phi-\Phi) \\
&\qquad\times
\left(I-\eta\bigl[(1-\lambda)I+\lambda\alpha\Phi^\top D\Phi\bigr]\right)
\left(\eta\bigl[(1-\lambda)I+\lambda\alpha\Phi^\top D\Phi\bigr]\right)^{-1}\Bigr)w_n.
\end{align*}
The block identity uses only the commutation of
\(I-\eta[(1-\lambda)I+\lambda\alpha\Phi^\top D\Phi]\) with
\(\eta[(1-\lambda)I+\lambda\alpha\Phi^\top D\Phi]\) and
\[
\left(I+
\left(I-\eta\bigl[(1-\lambda)I+\lambda\alpha\Phi^\top D\Phi\bigr]\right)
\left(\eta\bigl[(1-\lambda)I+\lambda\alpha\Phi^\top D\Phi\bigr]\right)^{-1}\right)
\eta\bigl[(1-\lambda)I+\lambda\alpha\Phi^\top D\Phi\bigr]=I.
\]

Let \(p\) be the common Lyapunov norm from~\Cref{lem:common_lyapunov_construction}, applied to
\(\mathcal A_\lambda^{\mathrm{DLQL}}\) when \(0\leq\lambda<1\) and to
\(\mathcal A^{\mathrm{PQVI}}\) when \(\lambda=1\). Using~\Cref{lem:lambda_dlql2_equals_lambda_dlql} for \(0\leq\lambda<1\) and the PQVI
identity at \(\lambda=1\), there is a number \(\tau\in(0,1)\) such that
\[
p\left(A_{\pi,\lambda}^{\mathrm{DLQL2}}u\right)
\leq \tau p(u)
\]
for every deterministic mode \(\pi\) and every \(u\). Put \(c=1-\tau>0\). Since
\[
I+
\beta\bigl[(1-\lambda)I+\lambda\alpha\Phi^\top D\Phi\bigr]^{-1}
\alpha\Phi^\top D(\gamma P\Pi^\pi\Phi-\Phi)
\]
equals
\[
(1-\beta)I+
\beta A_{\pi,\lambda}^{\mathrm{DLQL2}},
\]
convexity of the norm gives
\[
p\left(\left[I+
\beta\bigl[(1-\lambda)I+\lambda\alpha\Phi^\top D\Phi\bigr]^{-1}
\alpha\Phi^\top D(\gamma P\Pi^\pi\Phi-\Phi)\right]u\right)
\leq (1-c\beta)p(u),
\qquad 0\leq\beta\leq1.
\]
Because the deterministic policy set is finite and all norms are equivalent,
there are finite constants \(a,b,d\geq0\), independent of \(\pi\), such that
\begin{align*}
&p\Bigl(
\bigl[(1-\lambda)I+\lambda\alpha\Phi^\top D\Phi\bigr]^{-1}
\alpha\Phi^\top D(\gamma P\Pi^\pi\Phi-\Phi)\\
&\qquad\qquad\times
\left(I-\eta\bigl[(1-\lambda)I+\lambda\alpha\Phi^\top D\Phi\bigr]\right)
\left(\eta\bigl[(1-\lambda)I+\lambda\alpha\Phi^\top D\Phi\bigr]\right)^{-1}v
\Bigr)\leq a\|v\|_2,\\
&\left\|\eta\alpha\Phi^\top D(\gamma P\Pi^\pi\Phi-\Phi)u\right\|_2
\leq b p(u),\\
&\Bigl\|
\eta\alpha\Phi^\top D(\gamma P\Pi^\pi\Phi-\Phi)\\
&\qquad\qquad\times
\left(I-\eta\bigl[(1-\lambda)I+\lambda\alpha\Phi^\top D\Phi\bigr]\right)
\left(\eta\bigl[(1-\lambda)I+\lambda\alpha\Phi^\top D\Phi\bigr]\right)^{-1}v
\Bigr\|_2
\leq d\|v\|_2
\end{align*}
for all \(u,v\) and all deterministic \(\pi\). Hence the transformed recursion
satisfies
\begin{align*}
p(z_{n+1})&\leq (1-c\beta)p(z_n)+\beta^2a\|w_n\|_2,\\
\|w_{n+1}\|_2&\leq b p(z_n)+(r+\beta d)\|w_n\|_2.
\end{align*}
Choose \(K\geq 2b/c\). Then choose \(\beta_0\in(0,1]\) small enough that, for
every \(0<\beta\leq\beta_0\),
\[
r+\beta(d+Ka)\leq 1-\frac{c\beta}{2}.
\]
For the weighted norm
\[
W_\beta(z,w)=\|w\|_2+\frac{K}{\beta}p(z),
\]
the preceding estimates give, uniformly over deterministic modes,
\[
W_\beta(z_{n+1},w_{n+1})
\leq
\left(1-\frac{c\beta}{2}\right)W_\beta(z_n,w_n),
\qquad 0<\beta\leq\beta_0.
\]
Thus every transformed deterministic mode is a contraction in the same norm
\(W_\beta\). Because the block similarity transformation is common to all modes,
the original family \(\mathcal A_{\lambda}^{\mathrm{DLQL3}}\) has JSR strictly
smaller than one.

For a stochastic policy \(\mu\), each occurrence of \(\Pi^\mu\) is a convex
combination of the corresponding deterministic policy matrices. The transformed
augmented matrix is therefore the same convex combination of the transformed
deterministic augmented matrices. Convexity of \(W_\beta\) gives the same
contraction for these stochastic-policy matrices.

Finally, at \(\beta=0\) each deterministic augmented matrix is block triangular,
\[
A_{\pi,\lambda}^{\mathrm{DLQL3}}
=
\begin{bmatrix}
I&0\\
\eta\alpha\Phi^\top D(\gamma P\Pi^\pi\Phi-\Phi)&
I-\eta\bigl[(1-\lambda)I+\lambda\alpha\Phi^\top D\Phi\bigr]
\end{bmatrix},
\]
so the slow block contributes the eigenvalue \(1\), while the lower-right block
has spectral radius strictly smaller than one. Hence
\(\rho(\mathcal A_{\lambda}^{\mathrm{DLQL3}})=1\), and the stability conclusion is
a small-positive-\(\beta\) statement.
\end{proof}
\section{Proofs for the Sampled Period-\texorpdfstring{$m$}{m} Implementation}
\label{app:sampled_period_m_proofs}

The sampled implementation in~\Cref{sec:sampled_period_m_targets} draws an
unbounded period. The main proof only needs the pathwise JSR contraction, but the
expectation statement is cleaner once the sampled period matrices are known to be
uniformly square-integrable. The following proposition records that bound using
the same period matrices as the main text.
\begin{proposition}
\label{app:proof:prop:sampled_m_second_moment_bound}
Assume~\Cref{ass:hard_target_stepsize}. There exists a finite constant
$C\geq1$ such that, for every deterministic policy $\pi\in\Theta$ and every
$m\geq1$,
\begin{equation}
\|A_{\pi,m}^{\mathrm{DLQL}}\|_2\leq C.
\label{eq:app_uniform_period_matrix_bound}
\end{equation}
Consequently, for every policy mode $\mu$, every $0\leq\lambda<1$, and every
error vector $e$,
\begin{equation}
\mathbb E\left[\|A_{\mu,M_\lambda}^{\mathrm{DLQL}}e\|_2^2\right]
\leq C^2\|e\|_2^2.
\label{eq:app_sampled_candidate_second_moment_bound}
\end{equation}
\end{proposition}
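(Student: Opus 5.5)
The plan is to prove the uniform bound \eqref{eq:app_uniform_period_matrix_bound} directly from the decomposition in~\Cref{lem:period_map_decomposition}, and then derive the second-moment estimate as a one-line consequence.

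For \(\pi\in\Theta\) and \(m\geq1\), write
\[
A_{\pi,m}^{\mathrm{DLQL}}=A_\pi^{\mathrm{PQVI}}+(I-\alpha\Phi^\top D\Phi)^m(I-A_\pi^{\mathrm{PQVI}}).
\]
The matrix \(S:=I-\alpha\Phi^\top D\Phi\) is symmetric, because \(\Phi^\top D\Phi\) is symmetric by~\Cref{ass:feature_sampling}. Hence \(\|S\|_2=\rho(S)\), and by~\Cref{ass:hard_target_stepsize} and~\Cref{lem:relaxation_stepsize_condition} this gives \(\|S\|_2<1\). Consequently \(\|S^m\|_2\leq1\) for every \(m\geq0\). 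Using symmetry avoids any generic \(Cr^m\) constant.

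The triangle inequality then gives
\[
\|A_{\pi,m}^{\mathrm{DLQL}}\|_2\leq\|A_\pi^{\mathrm{PQVI}}\|_2+1+\|A_\pi^{\mathrm{PQVI}}\|_2.
\]
Since \(\Theta\) is finite, we may take
\[
C:=\max\Bigl\{1,\ \max_{\pi\in\Theta}\bigl(1+2\|A_\pi^{\mathrm{PQVI}}\|_2\bigr)\Bigr\}.
\]
This choice is independent of \(m\) and \(\pi\) and satisfies \(C\geq1\).

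For the second claim, first extend the bound from deterministic policies to a stochastic policy mode \(\mu\). The matrix \(A_{\mu,m}^{\mathrm{DLQL}}\) in \eqref{eq:m_dlql_A_mu_m_definition} is affine in \(\Pi^\mu\). In turn, \(\Pi^\mu\) is a convex combination of the matrices \(\Pi^\pi\), \(\pi\in\Theta\), with product weights \(\prod_s\mu(s)_{\pi(s)}\). It follows that \(A_{\mu,m}^{\mathrm{DLQL}}\) is the same convex combination of the \(A_{\pi,m}^{\mathrm{DLQL}}\), with weights that do not depend on \(m\). Convexity of the operator norm then yields \(\|A_{\mu,m}^{\mathrm{DLQL}}\|_2\leq C\) for all \(m\).

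It remains to pass to the random period. Since \(\mu\) and \(e\) are fixed and \(M_\lambda\) is independent of them, the bound holds pointwise: \(\|A_{\mu,M_\lambda}^{\mathrm{DLQL}}e\|_2^2\leq C^2\|e\|_2^2\) almost surely. Taking expectations over the geometric law~\eqref{eq:sampled_m_geometric_law} gives the stated estimate, and the weights summing to one make this immediate.

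The only step needing care is the convex-hull passage for stochastic \(\mu\), specifically checking that the product-weight representation of \(\Pi^\mu\) holds. This is the standard identity \(\Pi^\mu=\sum_{\pi\in\Theta}\bigl(\prod_s\mu(s)_{\pi(s)}\bigr)\Pi^\pi\), verified row by row: the row of \(\Pi^\mu\) for state \(s\) depends only on \(\mu(s)\), and the weights marginalize to \(\mu(s)\). Everything else is routine.
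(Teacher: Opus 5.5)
Your proposal is correct and follows essentially the same route as the paper's proof: the decomposition $A_{\pi,m}^{\mathrm{DLQL}}=A_\pi^{\mathrm{PQVI}}+(I-\alpha\Phi^\top D\Phi)^m(I-A_\pi^{\mathrm{PQVI}})$, symmetry of $I-\alpha\Phi^\top D\Phi$ to bound its powers uniformly in $m$, finiteness of $\Theta$, and then the convex-combination passage to stochastic $\mu$ followed by a pointwise bound and an expectation over $M_\lambda$. Your explicit constant $C=\max_{\pi}\bigl(1+2\|A_\pi^{\mathrm{PQVI}}\|_2\bigr)$ and your row-by-row check of the product-weight representation of $\Pi^\mu$ spell out details that the paper leaves implicit, but they do not change the argument.
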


\begin{proof}
Under~\Cref{ass:hard_target_stepsize},
\(\rho(I-\alpha\Phi^\top D\Phi)<1\). Since \(\Phi^\top D\Phi\) is
symmetric positive definite, \(I-\alpha\Phi^\top D\Phi\) is symmetric and
\(\sup_{m\geq0}\|(I-\alpha\Phi^\top D\Phi)^m\|_2<\infty\). By~\Cref{eq:m_dlql_converges_to_pqvi},
\[
A_{\pi,m}^{\mathrm{DLQL}}
=A_\pi^{\mathrm{PQVI}}+(I-\alpha\Phi^\top D\Phi)^m(I-A_\pi^{\mathrm{PQVI}}).
\]
The deterministic policy set \(\Theta\) is finite, so
\(\max_{\pi\in\Theta}\|A_\pi^{\mathrm{PQVI}}\|_2<\infty\). Combining this finite
maximum with the uniform bound on \((I-\alpha\Phi^\top D\Phi)^m\) gives a
constant \(C\) satisfying~\Cref{eq:app_uniform_period_matrix_bound}.

For a stochastic policy mode $\mu$, the matrix $A_{\mu,m}^{\mathrm{DLQL}}$ is a
convex combination of the deterministic matrices $A_{\pi,m}^{\mathrm{DLQL}}$.
Thus the same norm bound holds for $A_{\mu,m}^{\mathrm{DLQL}}$. Taking the
expectation over $M_\lambda$ gives~\Cref{eq:app_sampled_candidate_second_moment_bound}.
\end{proof}

\end{document}